\def\B{{\mathcal{B}}}
\begin{document}

\title{Convergence Analysis of Distributed Inference with Vector-Valued Gaussian Belief Propagation}

\author{\name Jian Du \email jiand@andrew.cmu.edu \\
       \addr Department of Electrical and Computer Engineering\\
       Carnegie Mellon University\\
       Pittsburgh, PA 15213, USA
       \AND
       \name Shaodan Ma \email shaodanma@umac.mo \\
       \addr Department of Electrical and Computer Engineering\\
       University of Macau\\
       Avenida da Universidade, Taipa, Macau
       \AND
       \name Yik-Chung Wu \email ycwu@eee.hku.hk \\
       \addr Department of Electrical and Electronic Engineering\\
       The University of Hong Kong\\
       Pokfulam Road, Hong Kong
       \AND
       \name Soummya Kar
        \email soummyak@andrew.cmu.edu\\
       \name Jos\'{e} M. F. Moura
        \email moura@andrew.cmu.edu \\
       \addr Department of Electrical and Computer Engineering\\
       Carnegie Mellon University\\
       Pittsburgh, PA 15213, USA
       }

\editor{}

\maketitle

\begin{abstract}
This paper considers inference  over  {distributed linear Gaussian models} using
factor graphs and   Gaussian belief propagation (BP).
The  distributed {inference} algorithm involves only local computation of the information matrix and of the mean vector, and message passing between neighbors.
Under broad conditions, it is shown
that the  message information matrix converges  to a unique positive definite limit matrix for arbitrary positive semidefinite initialization,
{and it approaches  an arbitrarily small neighborhood of this limit matrix at a doubly exponential rate.}
A necessary and sufficient convergence condition for the belief mean vector to converge to the optimal centralized estimator is provided
 under the assumption that
  the message information matrix is initialized as a
positive semidefinite matrix.
Further, it is shown that Gaussian BP always converges when the underlying factor graph is given by the union of a forest and a single loop.
The proposed convergence condition in the setup of distributed linear Gaussian models is shown to be strictly weaker than  other existing convergence conditions and requirements, including the Gaussian Markov random field based walk-summability condition, and  applicable to a large class of scenarios.
\end{abstract}

\begin{keywords}
Graphical Model,   Large-Scale Networks,  Linear Gaussian Model, Markov Random Field, Walk-summability.
\end{keywords}

\section{Introduction}
Inference based on a set of measurements  from multiple agents on a distributed network is a central issue in many  problems.
While centralized  algorithms can be used   in small-scale networks, they face difficulties in  large-scale networks, imposing a heavy
communication burden when
all the data is to be transported to and processed at a  central processing unit.
Dealing with highly distributed data  has been recognized by the U.S.  National Research Council as one of the big challenges for processing big data \citep{MassiveData}.
Therefore, distributed {inference} techniques that
only involve local communication and computation are important for problems arising in  distributed networks.

In large-scale linear parameter {learning} with Gaussian measurements,
Gaussian Belief Propagation (BP)
 \citep{DiagnalDominant} provides an efficient distributed algorithm for computing the marginal means of the unknown parameters, and it has been adopted in a variety of topics including   image interpolation \citep{image-interp}, distributed power system state {inference} \citep{A1}, distributed beamforming \citep{A2}, distributed synchronization  \citep{JianClock}, fast solver for system of linear equations \citep{A4}, distributed rate control in ad-hoc networks \citep{A5},  factor analyzer network \citep{A6}, sparse Bayesian learning \citep{A7}, inter-cell interference mitigation \citep{A8}, and peer-to-peer rating in social networks \citep{A9}.

Although with great empirical success \citep{Murphy}, it is known that a major challenge that hinders   BP is the lack of theoretical guarantees of convergence in loopy networks \citep{chertkov2006loop, gomez2007truncating}.
Convergence of other forms of loopy BP are analyzed by \citet{Ihler05},  \citet{MooijKappen_UAI_05, MooijKappen_IEEETIT_07}, \citet{Martin13}, and \citet{Russell15}, but  their analyses are not directly
applicable to Gaussian BP.
Sufficient convergence conditions for Gaussian BP have been developed in \citet{DiagnalDominant,WalkSum1,minsum09,Suqinliang}  when the underlying Gaussian distribution is expressed in terms of pairwise connections between \textit{scalar} variables, i.e., it is a Markov random field (MRF).
However,
depending on how the underlying joint Gaussian distribution is factorized, Gaussian BP may exhibit different convergence properties as different factorizations (different Gaussian models)  lead to fundamentally different recursive update structures.
In this paper,  we study the convergence of Gaussian BP derived from the distributed linear Gaussian model.
The motivation is twofold. From the factorization
viewpoint,  by specifically employing a factorization based on the linear Gaussian model, we are able to bypass difficulties in existing
convergence analyses (\citep{WalkSum1} and references therein) based on Gaussian Markov random field factorization.
From the distributed inference viewpoint, the linear Gaussian model and associated message passing requirements for implementing the Gaussian BP
readily conform to the physical network topology
arising in large-scale networks such as in \citep{A1,A2,JianClock,A4,A5,A6,A7,A8,A9}, thus it is practically important.

{Recently, \cite{giscard2012walk, giscard2013evaluating, pathsum} present a path-sum method to compute the  information matrix  inverse of a joint Gaussian distribution.
Then, the marginal mean is obtained using the information matrix inverse.
The path-sum method converges for an arbitrary valid Gaussian model,  however, it is not clear how to adapt it to the distributed and parallel inference setup.
In contrast,
Gaussian BP is a parallel and fully distributed method that computes the marginal means   by  computing only the block diagonal elements of the information matrix inverse.
Though the block diagonal elements computed by Gaussian BP may not be correct, it is   shown that the belief mean  still converges to the  correct value  once Gaussian BP converges.
This explains the popularity of Gaussian BP in distributed inference applications, even though its convergence properties are not fully understood.}

{To fill this gap, this paper  studies the convergence of
	Gaussian BP for linear Gaussian models.
Specifically, for the first time, by establishing certain contractive properties of the distributed information matrix (inverse covariance matrix)  updates with respect to the Birkhoff metric, we show that, with
arbitrary positive
semidefinite  (p.s.d.)  initial message information matrix, the belief covariance for each local variable converges
 to a unique positive definite limit,  {and it approaches an arbitrarily small neighborhood of this limit matrix at a \textit{doubly exponential} rate.}
Consequently, the recursive equation for the message mean, which depends on the information matrix, can be reduced to a linear recursive equation.
Further, we derive
a necessary and sufficient convergence condition for this linear recursive equation
under the assumption that the initial message information matrix is p.s.d. Furthermore, we show that, when the structure of the factor graph is
  the union of a single loop and a forest, Gaussian BP always converges.
Finally, it is demonstrated that  the proposed convergence condition for the linear Gaussian model encompasses the walk-summable convergence condition for  Gaussian MRFs \citep{WalkSum1}.}

Note that there exist other distributed estimation frameworks, e.g.,
consensus$+$inn-ovations \citep{Kar-SPS, Kar-SIAM} and
diffusion algorithms~\citep{Sayed10Diffusion} that enable distributed estimation of parameters and processes in multi-agent networked environments.
The consensus$+$innovation algorithms converge in mean square sense to the centralized optimal solution
under the assumption of global observability
of the (aggregate) sensing model and connectivity (on the average) of the inter-agent communication network.
In particular, these algorithms allow the communication or message exchange network to be different from the physical coupling network of the field being {estimated} where either networks can be arbitrarily connected with cycles.
The results in \citet{Kar-SPS, Kar-SIAM} imply that the unknown field or parameter   can be reconstructed completely at each agent in the network.
For large-scale networks with high dimensional unknown variable, it may be impractical though to {estimate}  all the unknowns at every agent.
Reference \citep[section 3.4]{Kar-thesis} develops approaches to address this problem, where under appropriate conditions, each agent can {estimate} only a subset of the unknown parameter variables.
This paper studies a different distributed inference problem where each agent {learns} only its own unknown random variables;
this leads to lower dimensional data exchanges between neighbors.

The rest of this paper is organized as follows.
Section~\ref{hybrid} presents
the system model for distributed inference.
Section~\ref{vmpgamma} derives the
vector-valued distributed inference algorithm based on Gaussian BP.
Section~\ref{analysis} establishes convergence  conditions, and Section~\ref{relationship} discloses the relationship between the derived results and existing convergence conditions of Gaussian BP.
Finally,  Section~\ref{conclusion}  presents our conclusions.

\textit{Notation}: Boldface uppercase and lowercase letters represent  matrices and vectors, respectively.
For a matrix $\textbf {A}$,
$\textbf {A}^{-1}$ and $\textbf {A}^T$ denote  its inverse (if it exists) and transpose, respectively.
The symbol $\textbf{I}_N$ denotes the $N\times N$ identity matrix,
and  $\mathcal N\left(\textbf{x}|\boldsymbol{\mu}, \textbf{R}\right)$ stands for the probability density function (PDF) of a Gaussian random vector $\textbf{x}$ with mean $\boldsymbol{\mu}$ and covariance matrix $\textbf{R}$.
The notation
$||\textbf{x}-\textbf{y}||^2_{\textbf{W}}$ stands for
$\left(\textbf{x}-\textbf{y}\right)^T\textbf{W} \left(\textbf{x}-\textbf{y}\right)$.
The symbol $ \propto$ represents the linear scalar relationship between two real valued functions.
For Hermitian matrices $\textbf{X}$ and $\textbf{Y}$, $\textbf{X} \succeq\textbf{Y}$ ($\textbf{X} \succ \textbf{Y}$) means that $\textbf{X} - \textbf{Y}$ is positive semidefinite (definite).
The sets $\left[\textbf{A}, \textbf{B}\right]$ are defined by
$\left[\textbf {A}, \textbf{B}\right]=
\left\{\textbf{X}:\textbf{B} \succeq \textbf{X}\succeq \textbf {A}\right\}$.
The symbol
$\texttt{Bdiag}\left\{\cdot\right\}
$ stands for
block diagonal matrix with elements listed inside the bracket;
$\otimes$ denotes
the Kronecker product;
and $\textbf X_{i,j}$ denotes the component of matrix $\textbf X$ on the $i$-th row and $j$-th column.

%
\section{Problem Statement and Markov Random Field}\label{hybrid}
Consider a general connected network\footnote{A connected network is one where any two distinct agents can communicate with each other through a finite number of hops.}
of $M$  agents, with ${\mathcal{V}}=\left\{1,\ldots, M\right\}$ denoting the set of agents,
and $\mathcal{E}_{\textrm{Net}} \subset {\mathcal{V}} \times  {\mathcal{V}}$  the set of all undirected communication links in the network, i.e., if $i$ and $j$ can communicate or exchange information directly, $\left(i, j\right) \in \mathcal{E}_{\textrm{Net}}$.
At every agent $n \in \mathcal{V}$,
the local observations are given by a linear Gaussian model:
\begin{equation} \label{linear}
\textbf{y}_n = \sum_{i\in  n\cup\mathcal{I}\left(n\right)}
\textbf{A}_{n,i}\textbf{x}_i + \textbf{z}_n,
\end{equation}
where
$\mathcal{I}\left(n\right)$ denotes the set of  neighbors of agent $n$ (i.e., all agents $i$ with $\left(n,i\right) \in \mathcal{E}_{\textrm{Net}}$),
$\textbf{A}_{n,i}$ is a known coefficient matrix with full column rank,
$\textbf{x}_i$ is the local unknown parameter at agent $i$ with  dimension $N_i \times 1$ and with  prior distribution $\textbf{x}_i\sim \mathcal{N}\left(\textbf{x}_i|\textbf{0},\textbf{W}_{i}\right)$ ($\textbf{W}_{i}\succ \textbf 0$),
and $\textbf{z}_n$ is the additive noise with distribution $\textbf{z}_n\sim \mathcal{N}\left(\textbf{z}_n|\textbf{0},\textbf{R}_n\right)$, {where $\textbf R_n\succ 0$}.
It is assumed that
$p\left(\textbf{x}_i, \textbf{x}_j\right)=p\left(\textbf{x}_i\right)p\left(\textbf{x}_j\right)$
and
$p\left(\textbf{z}_i,\textbf{z}_j\right)
=p\left(\textbf{z}_i\right)p\left(\textbf{z}_j\right)$ for $i\neq j$, and the $x_{i}$'s and $z_{j}$'s  are independent for all $i$ and $j$.
The goal is to learn $\textbf{x}_i$, based on $\textbf{y}_n$, $p\left(\textbf{x}_i\right)$, and $p\left(\textbf{z}_n\right)$.\footnote{{By slightly modifying (1), the local model would  allow two neighboring agents to share a common observation and the analyses in the following sections still apply. Please refer to \citet{pairwise} for  details, and \citet{DconveCon} for the corresponding models and associated (distributed) convergence conditions.}}

In centralized estimation, all the observations $\textbf{y}_n$'s at different agents are forwarded to a central processing unit.
Define  vectors $\textbf x$, $\textbf{y}$, and $\textbf z$ as the stacking of $\textbf x_n$, $\textbf{y}_n$,  and $\textbf{z}_n$ in ascending order with respect to $n$, respectively;  then, we obtain
\begin{equation}\label{blocklinear}
\textbf{y}= \textbf{A}\textbf x + \textbf z,
\end{equation}
where
$\textbf{A}$ is constructed from $\textbf{A}_{n,i}$, with specific arrangement dependent on the network topology.
Assuming $\textbf{A}$ is of full column rank, and since
 (\ref{blocklinear}) is a standard linear model, the optimal minimum mean squared error estimate $\widehat{\textbf x}\triangleq\left[\widehat{\textbf x}_1^T,\ldots,\widehat{\textbf x}_M^T\right]^T$ of $\textbf x$   is given by  \citep{KevinMurphy}
\begin{eqnarray}\label{Central}
\widehat{\textbf x}
= \int \textbf x\frac{ p\left(\textbf x\right)p\left(\textbf{y}|\textbf x\right)}{\int p\left(\textbf x\right)p\left(\textbf{y}|\textbf x\right)\mathrm{d}\textbf x}\mathrm{d} \textbf x
=
\left(\textbf{W}^{-1}+ \textbf{A}^T\textbf{R}^{-1}\textbf{A}\right)^{-1}\textbf{A}^T\textbf{R}^{-1} \textbf{y},
\end{eqnarray}
where $\textbf{W}$ and $\textbf{R}$ are block diagonal matrices containing $\textbf{W}_i$ and $\textbf{R}_i$ as their diagonal blocks, respectively.
Although well-established, centralized estimation in large-scale networks  has several drawbacks including: 1) the transmission of $\textbf{y}_n$, $\textbf{A}_{n,i}$ and $\textbf R_n$ from
 peripheral agents to the computation center imposes large communication overhead;
2) knowledge of global network topology is  needed in order to construct $\textbf{A}$;
3) the computation burden at the computation center scales up due to the matrix inversion required in (\ref{Central})   with   complexity order $\mathcal{O}\left(\left(\sum_{i=1}^{|\mathcal{V}|}N_i\right)^3\right)$,
i.e., cubic in the dimension in general.

On the other hand, Gaussian BP running over graphical models  representing the joint posterior distribution of all $\textbf{x}_i$'s provides a distributed way to
learn
 $\textbf{x}_i$ locally, thereby mitigating the disadvantages of the centralized approach.
In particular, with Gaussian MRF, the joint distribution $p\left(\textbf x\right)p\left(\textbf{y}|\textbf x\right)$ is expressed in a pairwise form  \citep{WalkSum1}:
\begin{equation}\label{MRF-eqn}
p\left(\textbf x\right)p\left(\textbf{y}|\textbf x\right)
=
\prod_{n\in \mathcal{V}} \psi_{n} \left(\textbf x_n, \left\{\textbf{y}_i\right\}_{i\in \left\{n\cup\mathcal{I}\left(n\right)\right\} }\right)
\prod_{\left(n,i\right)\in \mathcal{E}_{\textrm{MRF}} } \psi_{n,i} \left(\textbf x_n, \textbf{x}_i\right),
\end{equation}
where
\begin{equation}\label{def}
\mathcal{E}_{\textrm{MRF}} \triangleq \mathcal{E}_{\textrm{Net}} \cup \left\{\left(n,i\right) | \exists k,  k\neq n, k\neq i,  \text{such that}  \ \left(n,k\right) \in \mathcal{E}_{\textrm{Net}},  \text{and}\ \left(i,k\right) \in \mathcal{E}_{\textrm{Net}}\right\};
\end{equation}
\begin{equation}\label{MRF-local}
 \psi_{n} \left(\textbf x_n, \left\{\textbf{y}_i\right\}_{i\in n\cup\mathcal{I}\left(n\right) }\right)
=\exp\left\{\-\frac{1}{2}\left(
\textbf x_n^T \textbf{W}_n^{-1}\textbf x_n
 + \sum_{i\in n\cup\mathcal{I}\left(n\right) }\textbf{y}_i^T\textbf{R}_i^{-1}\textbf x_n
\right) \right\}
\end{equation}
is the  potential function at agent $n$, and
\begin{equation}\label{MRF-pair}
\begin{split}
\psi_{n,i} (\textbf x_n, \textbf{x}_i)
=&\exp -\bigg\{\frac{1}{2}
\big[(\textbf{A}_{n,n}\textbf x_n)^T\textbf R_n^{-1}(\textbf{A}_{n,i}\textbf{x}_i)
+(\textbf{A}_{i,n}\textbf x_n)^T\textbf{R}_i^{-1}(\textbf{A}_{i,i}\textbf{x}_i)\\
&
+
 \sum_{\substack{k\in \{\widetilde{k}|(\widetilde{k},i)\in \mathcal{E}_{\textrm{Net}}, \\ (\widetilde{k},n)\in \mathcal{E}_{\textrm{Net}}\} }}
 (\textbf{A}_{k,n}\textbf x_n)^T\textbf{R}_k^{-1}(\textbf{A}_{k,i}\textbf{x}_i)\big]
 \bigg\}
\end{split}
\end{equation}
is the edge potential between $\textbf x_n$ and $\textbf{x}_i$.
After setting up the graphical model representing the joint distribution  in  (\ref{MRF-eqn}),
messages are exchanged between  pairs of agents $n$ and $i$ with $\left(n,i\right)\in
\mathcal{E}_{\textrm{MRF}} $.
More specifically, according to the standard derivation of Gaussian BP, at the $\ell$-${\textrm{th}}$ iteration, the message passed from agent $n$ to agent $i$ is
\begin{equation}\label{MRF1}
w^{\left(\ell\right)}_{n\to i}\left(\textbf{x}_i\right) =
\int  \psi_{n} \left(\textbf x_n, \left\{\textbf{y}_k\right\}_{k\in n\cup\mathcal{I}\left(n\right) }\right)
\psi_{n,i}\left(\textbf x_n,\textbf{x}_i\right)\prod_{k\in \mathcal I\left(n\right)\setminus i}w^{\left(\ell-1\right)}_{k\to n}\left(\textbf x_n\right)\mathrm{d}\textbf x_n.
\end{equation}

As shown by (\ref{MRF1}),   Gaussian BP is iterative  with each agent alternatively receiving messages from its neighbors and forwarding out  updated  messages.  At each iteration,  agent $i$  computes its belief on  variable $\textbf{x}_i$ as
\begin{equation}\label{MRF2}
b_{\textrm{MRF}}^{\left(\ell\right)}\left(\textbf{x}_i\right)\propto
\psi_i\left(\textbf{x}_i, \left\{\textbf{y}_n\right\}_{n\in i\cup\mathcal{I}\left(i\right) }\right)\prod_{k\in \mathcal I\left(n\right)}w^{\left(\ell\right)}_{k\to i}\left(\textbf{x}_i\right).
\end{equation}
It is known that,  as the messages (\ref{MRF1}) converge, the mean of the belief (\ref{MRF2}) is the exact mean of the marginal distribution of $\textbf{x}_i$ \citep{DiagnalDominant}.

It might seem that our distributed inference problem is now solved, as a solution is readily available.
However, there are two serious limitations for the Gaussian MRF approach.

First, messages are passed between pairs of agents in $\mathcal{E}_{\textrm{MRF}}$, which according to the definition  (\ref{def}) includes not only those direct neighbors, but also pairs that are two hops away but share a common neighbor.
This is illustrated in Fig.~1, where Fig.~1(a) shows a network of $4$ agents {with a line between two neighboring agents indicating} the availability of a physical communication link, and Fig.~1(b) shows the equivalent pairwise graph.
For this example, in the physical network, there is no direct connection between agents $1$ and $4$, nor between agents $1$ and $3$.
But in the pairwise representation, those connections are present.  We summarize the above observations in the following remark.

\begin{remark}\label{graph}
For a network with communication edge set $\mathcal{E}_{\textit{Net}}$ and local observations following (\ref{linear}),  the corresponding MRF graph edge set satisfies
$\mathcal{E}_{\textit{{MRF}}}\supseteq \mathcal{E}_{\textit{{Net}}}$.
{Thus, Gaussian BP for Gaussian MRFs cannot be applied to the distributed inference  problem with the local observation model (1).}\footnote{{ In Section 5, we further show that the convergence condition of Gaussian BP obtained in this paper for model (1) encompasses all existing convergence conditions of Gaussian BP for the corresponding Gaussian MRF. }}
\end{remark}

The consequence of the above findings is that, not only does information need to be shared among agents two hops away from each other to construct the edge potential function in (\ref{MRF-pair}), but also the  messages (\ref{MRF1}) may be required to be exchanged among non-direct neighbors, where a physical communication link is not available.  This complicates significantly the message exchange scheduling.

\begin{figure}\label{SystemModel}
  \centering
\mbox{\subfigure[]{\epsfig{figure=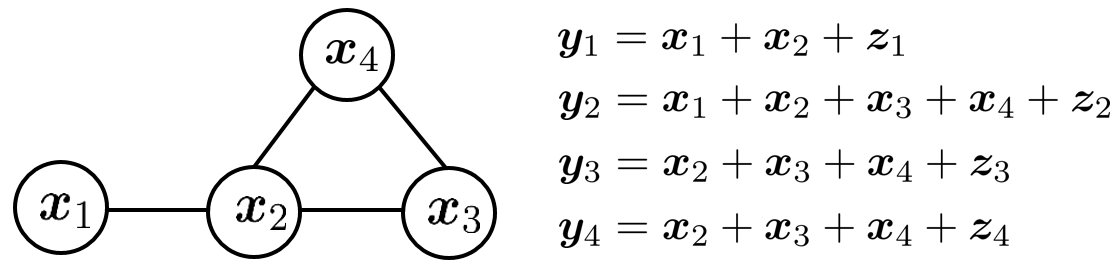,width=3.3in}}\label{Network} }\\
\mbox{ \subfigure[]{\epsfig{figure=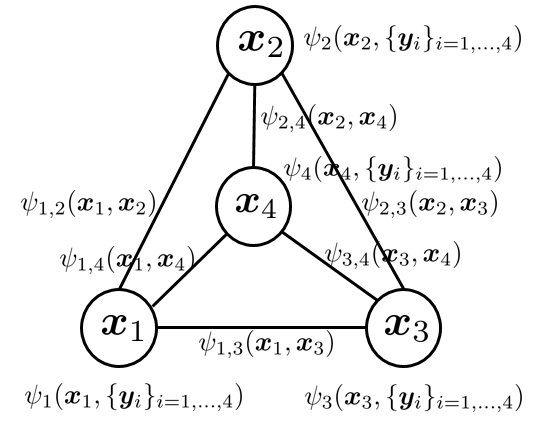,width=1.9in}}\label{MRF} }\\
\mbox{ \subfigure[]{\epsfig{figure=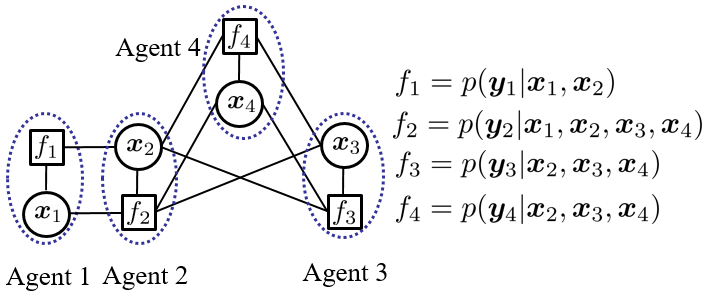,width=3.3in}}\label{FG} }\\
\caption{(a) A physical network with 4 agents, where $\left\{1,2\right\}$ and $\left\{2,3,4\right\}$ are two groups of agents that are within the communication range of each other, respectively. $\textbf{x}_i$ is the local unknown vector, and $\textbf{y}_i$ is the local observation at agent $i$ that follows (\ref{linear});
(b) The corresponding MRF of Fig.~1 (a) with $ \psi_{n} \left(\textbf x_n, \left\{\textbf{y}_i\right\}_{i\in n\cup\mathcal{I}(n) }\right)$ and $\psi_{n,i} \left(\textbf x_n, \textbf{x}_i\right)$ defined in
(\ref{MRF-local}) and (\ref{MRF-pair}), respectively.
(c) The corresponding factor graph of Fig.~1 (a) with $f_i$ defined in (\ref{jointpost}).
Since $p(\textbf{x}_i)$ does not involve  message passing, the $p(\textbf{x}_i)$ associated to each variable node is not drawn to keep the figure simple.}
\end{figure}

Secondly, even if the message scheduling between non-neighboring agents can be realized, the convergence of (\ref{MRF1}) is not guaranteed in loopy networks.  For Gaussian MRF with scalar variables, sufficient convergence conditions have been proposed in \citep{DiagnalDominant,WalkSum1,Suqinliang}.  However, depending on how the factorization of the underlying joint Gaussian distribution is performed, Gaussian BP may exhibit different convergence properties as different factorizations (different Gaussian models)  lead to fundamentally different recursive update structures.
Furthermore, these results apply only to scalar Gaussian BP, and extension to vector-valued Gaussian BP is nontrivial as we show in this paper.

The next section derives distributed vector inference based on Gaussian BP with high order interactions (beyond pairwise connections), where information sharing and message exchange requirement conform to the physical network topology.  Furthermore,  convergence conditions will be studied in Section \ref{analysis}, and we show in Section \ref{relationship} that the convergence condition obtained is strictly weaker than, i.e., subsumes the convergence conditions in \citep{DiagnalDominant,WalkSum1,Suqinliang}.


\section{Distributed Inference with Vector-Valued Gaussian BP and Non-Pairwise Interaction}\label{vmpgamma}

The joint distribution  $p\left(\textbf x\right)p\left(\textbf{y}|\textbf x\right)$ is first written as the product of the  prior distribution and the likelihood function of each local linear Gaussian model in (\ref{linear}) as
\begin{equation}\label{jointpost}
p\left(\textbf x\right)p\left(\textbf{y}|\textbf x\right) =
\prod_{n\in \mathcal{V}}
p\left(\textbf x_n\right)
\prod_{n\in \mathcal{V}}
\underbrace{p\left(\textbf{y}_n| \left\{ \textbf{x}_i\right\}_{i\in  n\cup\mathcal{I}\left(n\right)} \right)}_{\triangleq f_n }.
\end{equation}
To facilitate the derivation of the distributed inference algorithm,
the factorization in  (\ref{jointpost})
is expressed in terms of a
factor graph  \citep{Kschischang},
where  every vector variable  $\textbf{x}_i$ is represented by a circle (called variable node)
and the probability distribution of a vector variable or a group of vector variables is represented by a square (called factor node).
A variable node is connected to a factor node if the variable is involved in that particular factor.
For example, Fig.~1(c) shows the factor graph representation for the network in Fig.~1(a).

We derive the Gaussian
BP algorithm   over the corresponding factor graph to learn $\textbf x_n$ for all $n\in \mathcal V$ \citep{Kschischang}. It
involves two types of messages:
one is the message
from a variable node $\textbf x_j$ to its neighboring factor node $f_n$, defined as
\begin{equation} \label{BPv2f1}
m^{\left(\ell\right)}_{j \to f_n}\left(\textbf x_j\right)
= p\left(\textbf x_j\right)
\prod_{f_k\in \mathcal B(j)\setminus f_n}m^{\left(\ell-1\right)}_{f_k\to j}\left(\textbf x_j\right),
\end{equation}
where $\mathcal B\left(j\right)$ denotes the set of neighbouring factor nodes  of $\textbf x_j$,
and $m^{\left(\ell-1\right)}_{f_k\to j}\left(\textbf x_j\right)$ is the   message  from $f_k$ to $\textbf x_j$ at time $\ell-1$.
The second type of
message is from a factor node $f_n$ to a neighboring variable node $\textbf{x}_i$, defined as
\begin{equation}\label{BPf2v1}
m^{\left(\ell\right)}_{f_n \to i}\left(\textbf{x}_i\right)
=  \int \cdots \int
f_n \times \!
\prod_{j\in\mathcal B\left(f_n\right)\setminus i} m^{ \left(\ell\right)}_{j \to f_n}\left(\textbf x_j\right)
\,\mathrm{d}\left\{\textbf x_j\right\}_{j\in\B\left(f_n\right)\setminus i},
\end{equation}
where $\mathcal B\left(f_n\right)$ denotes the set of neighboring variable nodes of $f_n$.
The process iterates between equations (\ref{BPv2f1}) and (\ref{BPf2v1}).
At each iteration $\ell$, the approximate marginal distribution, also referred to as belief, on $\textbf{x}_i$ is computed locally at $\textbf{x}_i$ as
\begin{equation} \label{BPbelief}
b_{\textrm{BP}}^{\left(\ell\right)}\left(\textbf{x}_i\right)
 = p\left(\textbf{x}_i\right) \prod_{ f_n\in \mathcal B\left(i\right)} m^{\left(\ell\right)}_{ f_n \to i}\left(\textbf{x}_i\right).
\end{equation}

In the sequel, we  derive the exact expressions for the messages
$m^{\left(\ell\right)}_{j \to f_n}\left(\textbf x_j\right)$,
$m^{\left(\ell\right)}_{f_n \to i}\left(\textbf{x}_i\right)$, and belief $b_{\textrm{BP}}^{\left(\ell\right)}\left(\textbf{x}_i\right)$.
First, let
the initial  messages  at each variable node and factor node  be
in Gaussian function  forms as
\begin{equation}\label{initial}
m^{\left(0\right)}_{f_n \to i}
\left(\textbf{x}_i\right)\propto
\exp
\left\{-\frac{1}{2}
||\textbf{x}_i- \textbf{v}^{\left(0\right)}_{f_n\to i}||^2
_{ \textbf{J}^{\left(0\right)}_{f_n\to i} }
\right\}.
\end{equation}
In   Appendix A,
it is shown that the general expression for the message from variable node $j$ to factor node $f_n$ is
\begin{equation} \label{BPvs2f1}
m^{\left(\ell\right)}_{j \to f_n}\left(\textbf x_j\right) \propto
\exp
\left\{-\frac{1}{2}
||\textbf x_j- \textbf{v}^{\left(\ell\right)}_{j\to f_n}||^2
_{\textbf{J}^{\left(\ell\right)}_{j\to f_n}}
\right\},
\end{equation}
with
\begin{equation} \label{v2fV}
\textbf{J}^{\left(\ell\right)}_{j \to f_n}
= \textbf{W}_j^{-1} +
\sum_{f_k\in\B\left(j\right)\setminus f_n}
 \textbf{J}_{f_k\to j}^{\left(\ell-1\right)},
\end{equation}
\begin{equation}\label{v2fm}
\textbf{v}^{\left(\ell\right)}_{j\to f_n}=
\left[\textbf{J}^{\left(\ell\right)}_{j\to f_n}\right]^{-1}
\left[
\sum_{f_k\in\B\left(j\right)\setminus f_n}
\textbf{J}_{f_k\to j}^{\left(\ell-1\right)}
\textbf{v}^{\left(\ell-1\right)}_{f_k\to j}\right],
\end{equation}
where $\textbf{J}_{f_k\to j}^{\left(\ell-1\right)}$ and $ \textbf{v}_{f_k\to j}^{\left(\ell-1\right)}$ are the message information matrix (inverse of covariance matrix) and mean vector  received at variable node $j$ at the $\left(\ell-1\right)$-$\textrm{th}$ iteration, respectively.
Furthermore,
the message  from factor node  $f_n$ to variable node $i$ is given by
\begin{equation} \label{f2v}
m^{\left(\ell\right)}_{f_n \to i}\left(\textbf{x}_i\right)\propto
\alpha_{f_n \to i}^{\left(\ell\right)}
\exp
\left\{-\frac{1}{2}
||\textbf{x}_i- \textbf{v}^{\left(\ell\right)}_{f_n\to i}||^2
_{\textbf{J}^{\left(\ell\right)}_{f_n\to i}}
\right\},
\end{equation}
with
\begin{equation}\label{Cov}
\begin{split}
\textbf{J}^{\left(\ell\right)}_{f_n\to i}
=
\textbf{A}_{n,i}^T
\left[ \textbf{R}_n
+
\sum_{j\in\B\left(f_n\right)\setminus i} \textbf{A}_{n,j}
\left[\textbf{J}^{\left(\ell\right)}_{j\to f_n}\right]^{-1}\textbf{A}_{n,j}^T \right]^{-1}
\textbf{A}_{n,i},
\end{split}
\end{equation}

\begin{equation}\label{f2vmm}
\begin{split}
\textbf{v}^{\left(\ell\right)}_{f_n\to i}
=
\left[\textbf{J}_{f_n\to i}^{\left(\ell\right)}\right]^{-1}
\textbf{A}_{n,i}^T \left[\textbf{R}_n
+
\sum_{j\in\B\left(f_n\right)\setminus i} \textbf{A}_{n,j}
\left[\textbf{J}^{\left(\ell\right)}_{j\to f_n}\right]^{-1}\textbf{A}_{n,j}^T \right]^{-1}
\left(\textbf{y}_n-\sum_{j\in\B\left(f_n\right)\setminus i} \textbf{A}_{n,j}
\textbf{v}^{\left(\ell\right)}_{j\to f_n}\right),
\end{split}
\end{equation}
and
\begin{equation}\label{eigintegral-1}
\alpha_{f_n \to i}^{\left(\ell\right)}
\propto
\int\ldots \int \exp\left\{-\frac{1}{2}
    \textbf z^T
    \boldsymbol{\Lambda}_{f_n \to i}^{\left(\ell\right)}
    \textbf z \right\}    \, \mathrm{d}\textbf z.
\end{equation}
In (\ref{eigintegral-1}), $\boldsymbol{\Lambda}_{f_n \to i}^{\left(\ell\right)}$ is
a diagonal matrix containing the eigenvalues of
$\textbf{A}_{n,\left\{\B\left(f_n\right)\setminus i\right\}}^T
\textbf{R}_n^{-1}\textbf{A}_{n,\left\{\B\left(f_n\right)\setminus i\right\}}
+ \textbf{J}^{\left(\ell\right)}_{\left\{\B\left(f_n\right)\setminus i\right\}\to f_n}$,
with  $\textbf{A}_{n,\left\{\B\left(f_n\right)\setminus i\right\}}$ denoting a row block matrix containing $\textbf{A}_{n,j}$ as row elements for all $j\in \B\left(f_n\right)\setminus i$ arranged in ascending order,
and $\textbf{J}^{\left(\ell\right)}_{\left\{\B\left(f_n\right)\setminus i\right\}\to f_n}$ denoting a block diagonal matrix with $\textbf{J}^{\left(\ell\right)}_{j\to f_n}$ as its block diagonal elements
for all $j\in \B\left(f_n\right)\setminus i$ arranged in ascending order.

Obviously, the  validity of (\ref{f2v}) depends on the existence of $\alpha_{f_n \to i}^{\left(\ell\right)} $.
It is evident that (\ref{eigintegral-1}) is the integral of a Gaussian distribution  and equals to a constant  when $\boldsymbol{\Lambda}_{f_n \to i}^{\left(\ell\right)}
\succ \textbf{0}$ or equivalently $\textbf{A}_{n,\left\{\B\left(f_n\right)\setminus i\right\}}^T\textbf{R}_n^{-1}\textbf{A}_{n,\left\{\B\left(f_n\right)\setminus i\right\}}
+ \textbf{J}^{ \left(\ell\right)}_{\left\{\B\left(f_n\right)\setminus i\right\}\to f_n}\succ \textbf{0}.$
Otherwise,  $\alpha_{f_n \to i}^{\left(\ell\right)}$ does not exist.
Therefore, the necessary and sufficient condition for the  existence of  $m^{\left(\ell\right)}_{f_n \to i}\left(\textbf{x}_i\right)$  is
\begin{equation} \label{suffici}
\textbf{A}_{n,\left\{\B\left(f_n\right)\setminus i\right\}}^T
\textbf{R}_n^{-1}\textbf{A}_{n,\left\{\B\left(f_n\right)\setminus i\right\}}
+\textbf{J}^{ \left(\ell\right)}_{\left\{\B\left(f_n\right)\setminus i\right\}\to f_n} \succ \textbf{0}.
\end{equation}
In general, the necessary and sufficient condition is difficult to be verified, as
$\textbf{J}^{ \left(\ell\right)}_{\left\{\B\left(f_n\right)\setminus i\right\}\to f_n}$ changes in each iteration.
However, as $\textbf{R}_n^{-1}\succ \textbf{0}$, it can be decomposed as $\textbf{R}_n^{-1}= \widetilde{\textbf{R}}^T_n\widetilde{\textbf{R}}_n$.
Then $$\textbf{A}_{n,\left\{\B\left(f_n\right)\setminus i\right\}}^T
\textbf{R}_n^{-1}\textbf{A}_{n,\left\{\B\left(f_n\right)\setminus i\right\}}
=
\left(\widetilde{\textbf{R}}_n\textbf{A}_{n,\left\{\B\left(f_n\right)\setminus i\right\}}\right)^T
\left(\widetilde{\textbf{R}}_n\textbf{A}_{n,\left\{\B\left(f_n\right)\setminus i\right\}}\right) \succeq \textbf{0}.
$$
Hence, one simple sufficient condition  to guarantee (\ref{suffici})
 is
$\textbf{J}^{\left(\ell\right)}_{\left\{\B\left(f_n\right)\setminus i\right\}\to f_n}\succ \textbf{0}$ or equivalently its diagonal block matrix $\textbf{J}^{\left(\ell\right)}_{j\to f_n}\succ \textbf{0}$ for all $j\in \B\left(f_n\right)\setminus i$.
The following lemma shows that  setting the initial message covariances
$\textbf{J}_{f_n\to i}^{\left(0\right)}\succeq \textbf{0}$  for all $\left(n, i\right)\in \mathcal{E}_{\textrm{Net}}$
guarantees  $\textbf{J}^{\left(\ell\right)}_{j\to f_n}\succ \textbf{0}$ for $\ell \geq 1$ and all $\left(n,j\right) \in \mathcal{E}_{\textrm{Net}}$.

\begin{lemma}\label{pdlemma}
Let the initial messages at factor node $f_k$ be in Gaussian  forms with the initial message information matrix
$\textbf{J}_{f_k\to j}^{\left(0\right)} \succeq \textbf{0}$ for all $k \in \mathcal{V}$ and $j \in \mathcal{B}\left(f_k\right)$.
Then
$\textbf{J}_{j\to f_n}^{\left(\ell\right)} \succ \textbf{0}$
 and $\textbf{J}^{\left(\ell\right)}_{f_k \to j} \succ \textbf{0}$
 for all $\ell\geq 1$ with $j \in \mathcal{V}$ and
$f_n, f_k \in \mathcal{B}\left(j\right)$.
{Furthermore,
in this case,
all  messages $m^{\left(\ell\right)}_{j \to f_n}\left(\textbf x_j\right)$ and
$m^{\left(\ell\right)}_{f_k \to j}\left(\textbf{x}_i\right)$ are well defined.}
\end{lemma}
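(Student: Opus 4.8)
The plan is to prove both assertions by induction on $\ell$, exploiting the two one-step updates (\ref{v2fV}) and (\ref{Cov}) together with the three structural facts supplied by the model: the prior precisions satisfy $\textbf{W}_j^{-1}\succ\textbf{0}$, the noise covariances satisfy $\textbf{R}_n\succ\textbf{0}$, and each coefficient block $\textbf{A}_{n,i}$ has full column rank. Because the variable-to-factor precision $\textbf{J}^{(\ell)}_{j\to f_n}$ depends on the factor-to-variable precisions $\textbf{J}^{(\ell-1)}_{f_k\to j}$ from the \emph{previous} sweep, while $\textbf{J}^{(\ell)}_{f_n\to i}$ depends on the $\textbf{J}^{(\ell)}_{j\to f_n}$ of the \emph{current} sweep, the induction naturally interleaves the two message types: within each iteration I would first establish positive definiteness of all variable-to-factor precisions and then propagate it to the factor-to-variable precisions.

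For the base step, take $\ell=1$. In (\ref{v2fV}), $\textbf{J}^{(1)}_{j\to f_n}=\textbf{W}_j^{-1}+\sum_{f_k\in\B(j)\setminus f_n}\textbf{J}^{(0)}_{f_k\to j}$, and since $\textbf{W}_j^{-1}\succ\textbf{0}$ while every summand is p.s.d. by hypothesis, the sum of a positive definite matrix with positive semidefinite matrices is positive definite, so $\textbf{J}^{(1)}_{j\to f_n}\succ\textbf{0}$. Feeding this into (\ref{Cov}), the inner bracket $\textbf{R}_n+\sum_{j\in\B(f_n)\setminus i}\textbf{A}_{n,j}[\textbf{J}^{(1)}_{j\to f_n}]^{-1}\textbf{A}_{n,j}^T$ is a sum of $\textbf{R}_n\succ\textbf{0}$ with p.s.d. congruence terms, hence a positive definite matrix $\textbf{M}$, so $\textbf{M}^{-1}\succ\textbf{0}$. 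Then for any $\textbf{v}\neq\textbf{0}$, full column rank of $\textbf{A}_{n,i}$ gives $\textbf{A}_{n,i}\textbf{v}\neq\textbf{0}$, whence $\textbf{v}^T\textbf{A}_{n,i}^T\textbf{M}^{-1}\textbf{A}_{n,i}\textbf{v}=(\textbf{A}_{n,i}\textbf{v})^T\textbf{M}^{-1}(\textbf{A}_{n,i}\textbf{v})>0$, i.e. $\textbf{J}^{(1)}_{f_n\to i}\succ\textbf{0}$.

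For the inductive step I would assume $\textbf{J}^{(\ell-1)}_{f_k\to j}\succ\textbf{0}$ for all edges. Since $\succ\textbf{0}$ implies $\succeq\textbf{0}$, the two arguments above apply verbatim with the superscripts shifted: first $\textbf{J}^{(\ell)}_{j\to f_n}\succ\textbf{0}$ from (\ref{v2fV}), then $\textbf{J}^{(\ell)}_{f_n\to i}\succ\textbf{0}$ from (\ref{Cov}) via the same full-rank congruence argument. This closes the induction and establishes both positive definiteness claims for all $\ell\ge1$. Well-definedness of the messages is then immediate: each $m^{(\ell)}_{j\to f_n}$ is a proper Gaussian because $\textbf{J}^{(\ell)}_{j\to f_n}\succ\textbf{0}$, and since the block-diagonal matrix $\textbf{J}^{(\ell)}_{\{\B(f_n)\setminus i\}\to f_n}$ has positive definite diagonal blocks it is itself positive definite, so adding the p.s.d. term $\textbf{A}_{n,\{\B(f_n)\setminus i\}}^T\textbf{R}_n^{-1}\textbf{A}_{n,\{\B(f_n)\setminus i\}}$ keeps the left side of (\ref{suffici}) positive definite, guaranteeing existence of $\alpha^{(\ell)}_{f_n\to i}$ and hence of $m^{(\ell)}_{f_n\to i}$.

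The one step carrying real content—rather than being a mechanical sum-of-p.s.d. argument—is the factor-to-variable update, where strict positive definiteness is produced \emph{only} because the congruence $\textbf{A}_{n,i}^T(\cdot)\textbf{A}_{n,i}$ is applied with a full-column-rank $\textbf{A}_{n,i}$; dropping that full-rank hypothesis would leave this precision merely positive semidefinite and break the induction. The separation of the base case is likewise essential, since at $\ell=0$ we are given only $\textbf{J}^{(0)}_{f_k\to j}\succeq\textbf{0}$, and the strict positivity needed to start the recursion is injected by the term $\textbf{W}_j^{-1}\succ\textbf{0}$.
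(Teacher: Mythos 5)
Your proof is correct and follows essentially the same route as the paper's Appendix B: an induction that interleaves the two update equations (\ref{v2fV}) and (\ref{Cov}), using $\textbf{W}_j^{-1}\succ\textbf{0}$ and closure of the p.s.d. cone under addition for the variable-to-factor step, and $\textbf{R}_n\succ\textbf{0}$ together with the full-column-rank congruence (the paper's property P~B.2) for the factor-to-variable step, finishing with the sufficient condition (\ref{suffici}) for existence of the messages. No gaps; the argument matches the paper's proof in both structure and substance.
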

\begin{proof}
See Appendix B.
\end{proof}

For this factor graph based approach, according to the message updating procedure  (\ref{BPvs2f1}) and (\ref{f2v}),  message exchange is only needed between neighboring agents {(an agent refers to a variable-factor pair as shown in Fig.~1 (c))}.
For example, the messages transmitted from agent $n$ to its neighboring agent $i$ are   $m_{f_n\to i}^{\left(\ell\right)}\left(\textbf{x}_i\right)$ and $m_{n\to f_i}^{\left(\ell\right)}\left(\textbf x_n\right)$.
{Thus, the factor graph does impose a clear messaging schedule, and
the message passing scheme given in (\ref{BPv2f1}) and (\ref{BPf2v1}) conforms with the network topology.}
Furthermore, if the messages $m_{j\to f_n}^{\left(\ell\right)}\left(\textbf x_j\right)$
and $m_{f_n\to i}^{\left(\ell\right)}\left(\textbf{x}_i\right)$ exist for all $\ell$
(which can be achieved using Lemma \ref{pdlemma}),
the messages  are Gaussian, therefore only the corresponding mean vectors and information matrices (inverse of covariance matrices) are needed to be exchanged.

Finally, if the Gaussian BP messages exist, according to the definition of belief in (\ref{BPbelief}),
$b_{\textrm{BP}}^{\left(\ell\right)}\left(\textbf{x}_i\right)$ at  iteration  $\ell$ is computed as
\begin{equation} \label{belief2}
\begin{split}
\mathbf{b}_{\textrm{BP}}^{\left(\ell\right)}\left(\textbf{x}_i\right)
&=p\left(\textbf{x}_i\right)\prod_{f_n\in\mathcal B\left(i\right)} m_{f_n\to i}^{\left(\ell\right)}\left(\textbf{x}_i\right),\\
&\propto  \mathcal{N}\left(\textbf{x}_i|
\boldsymbol{\mu}_i^{\left(\ell\right)}, \textbf{P}_i^{\left(\ell\right)}\right),\nonumber
\end{split}
\end{equation}
where the belief covariance matrix
\begin{equation}  \label{beliefcov}
\textbf{P}_i^{\left(\ell\right)} =
\left[\textbf{W}_i^{-1}
+\sum_{f_n\in\B\left(i\right)}
\textbf{J}_{f_n\to i}^{\left(\ell\right)}\right]^{-1},
\end{equation}
and mean vector
\begin{equation} \label{beliefmean}
\boldsymbol{\mu}_i^{\left(\ell\right)}=\textbf{P}_i^{\left(\ell\right)}\left[
\sum_{f_n\in\B\left(i\right)}
\textbf{J}_{f_n\to i}^{\left(\ell\right)}\textbf{v}^{\left(\ell\right)}_{f_n\to i}\right].
\end{equation}

{The iterative algorithm based on Gaussian BP is summarized as follows.
The algorithm is started by setting the messages from factor nodes to variable nodes as in
(\ref{initial}).
At each round of message exchange, every variable node computes the output messages to its neighboring factor nodes
according to (\ref{v2fV}) and (\ref{v2fm}).
After receiving the messages from its neighboring variable nodes, each factor node computes its output messages according to (\ref{Cov}) and (\ref{f2vmm}).
The iterative computation terminates when the iterates in  (\ref{BPvs2f1}) or  (\ref{f2v}) tend to approach  a fixed value or the maximum number of iterations is reached.}

\begin{remark}
{ 
We assume  that $\textbf R_n\succ \textbf 0$ in this paper.
If, however, some of the observations are noiseless, for example,
 $\textbf R_n=\textbf 0$, the local observation is
$\textbf{y}_n = \sum_{i\in  n\cup\mathcal{I}\left(n\right)}
\textbf{A}_{n,i}\textbf{x}_i$.
Then the corresponding local likelihood function is represented by the Dirac measure
$\delta(\textbf{y}_n - \sum_{i\in  n\cup\mathcal{I}\left(n\right)}
\textbf{A}_{n,i}\textbf{x}_i)$.
 Suppose, for example, there is only one agent with $\textbf R_n=\textbf 0$, and all others are $\textbf R_i\succ\textbf 0$.
The the joint distribution is written as
\begin{equation}
p\left(\textbf x\right)p\left(\textbf{y}|\textbf x\right) =
\delta(\textbf{y}_n - \sum_{i\in  n\cup\mathcal{I}\left(n\right)}
\textbf{A}_{n,i}\textbf{x}_i)
\prod_{j\in \mathcal{V}}
p\left(\textbf x_j\right)
\prod_{k\in \mathcal{V}}
{p\left(\textbf{y}_k| \left\{ \textbf{x}_i\right\}_{i\in  k\cup\mathcal{I}\left(k\right)} \right)}.\nonumber
\end{equation}
In this case, if $\textbf A_{n,n}$ is  invertible, then, by the definition of the Dirac measure, we have
 $\textbf{x}_n = \textbf{A}_{n,n}^{-1}\left(\textbf{y}_n - \sum_{i\in  \mathcal{I}\left(n\right)}
\textbf{A}_{n,i}\textbf{x}_i\right)$.
By substituting this equation into all of the likelihood functions  involving $\textbf x_n$, we have the equivalent joint distribution as in (\ref{jointpost}) with all the likelihood functions having a positive definite noise covariance.
We thereafter can apply Gaussian BP to this new factorization and the convergence analysis in this paper still applies.
Therefore, without loss of generality, we assume all $\textbf R_n\succ \textbf 0$.
Note that when  $\textbf R_n=\textbf 0$ for all $n$, this problem is equivalent to solving   algebraic equations, which has been studied in \citep{linearequations} using Gaussian BP.}
\end{remark}

%
%
\section{Convergence Analysis}\label{analysis}
The challenge of deploying the Gaussian BP algorithm for large-scale networks is in determining whether it will converge or not.
In particular, it
is generally known that if the factor graph contains cycles, the Gaussian BP algorithm may
diverge.
Thus, determining  convergence conditions for the Gaussian BP algorithm is very important.
Sufficient conditions for the convergence of Gaussian BP with scalar variables in loopy graphs are available in \citep{DiagnalDominant, WalkSum1,Suqinliang}.
 However,  these conditions are derived based on  pairwise graphs with local functions  in
 the form of (\ref{MRF-local}) and (\ref{MRF-pair}).
This  contrasts with the model considered in this paper, where the
$f_n$ in (\ref{jointpost}) involves high-order interactions between vector variables, and
thus the convergence results in \citep{DiagnalDominant, WalkSum1, Suqinliang} cannot be applied to the factor graph based vector-form Gaussian BP.

Due to the recursive  updating property of  $m_{j\to f_n}^{\left(\ell\right)}\left(\textbf x_j\right)$ and $m_{f_n\to i}^{\left(\ell\right)}\left(\textbf{x}_i\right)$ in (\ref{BPvs2f1}) and (\ref{f2v}), the message evolution can be simplified by combining these two kinds of messages into one.
By substituting
 $ \textbf{J}^{\left(\ell\right)}_{j \to f_n}$ in (\ref{v2fV}) into   (\ref{Cov}), the updating of the message covariance matrix inverse, referred to as message information matrix in the following, can be denoted as
\begin{eqnarray}\label{CovFunc}
\textbf{J}_{f_n\to i}^{\left(\ell\right)}
&=&
\textbf{A}_{n,i}^T \bigg[\textbf{R}_n
+ \sum_{j\in\B\left(f_n\right)\setminus i} \textbf{A}_{n,j}
\bigg[
 \textbf{W}_{j}^{-1} +
\sum_{f_k\in\B\left(j\right)\setminus f_n}
\textbf{J}_{f_k\to j}^{\left(\ell-1\right)}
\bigg]^{-1}
\textbf{A}_{n,j}^T \bigg]^{-1}
\textbf{A}_{n,i}\nonumber\\
&\triangleq&
\mathcal{F}_{n\to i}
\left(\left\{
\textbf{J}_{f_k\to j}^{\left(\ell-1\right)}\right\}_{\left(f_k, j\right)\in \mathcal{\widetilde{B}}\left(f_n, i\right)}
  \right),
\end{eqnarray}
where $\mathcal{\widetilde{B}}\left(f_n, i\right)=\left\{\left(f_k, j\right) | j \in \B\left(f_n\right)\setminus i,  f_k\in \B\left(j\right)\setminus f_n\right\}$.
Observing that $\textbf{J}_{f_n\to i}^{\left(\ell\right)}$ in (\ref{CovFunc}) is independent of $\textbf{v}^{\left(\ell\right)}_{j\to f_n} $ and $\textbf{v}^{\left(\ell\right)}_{f_n\to i}$ in (\ref{v2fm}) and (\ref{f2v}), so
we can first focus on the convergence property of $\textbf{J}_{f_n\to i}^{\left(\ell\right)}$ alone and then later on that of $\textbf{v}^{\left(\ell\right)}_{f_n\to i}$.
With the convergence characterization of
$\textbf{J}_{f_n\to i}^{\left(\ell\right)}$ and
$\textbf{v}^{\left(\ell\right)}_{f_n\to i}$,  we will further investigate the convergence  of belief covariances  and means in (\ref{beliefcov}) and (\ref{beliefmean}), respectively.

{ Note that computing $\textbf P_{j}^{(\ell)}$ requires all the incoming messages from neighboring nodes including $\textbf{J}_{f_n\to j}^{\left(\ell\right)}$ as shown in
	(\ref{beliefcov}) by replacing the subscript $i$ with $j$ in (\ref{beliefcov}). However, according to (\ref{CovFunc}), when computing  $\textbf{J}_{f_n\to i}^{\left(\ell\right)}$ the quantity $\textbf{J}_{f_n\to j}^{\left(\ell-1\right)}$ is excluded, i.e., the quantity inside the inner square brackets equals $[\textbf{P}_j^{\left(\ell-1\right)}]^{-1} - \textbf{J}_{f_n\to j}^{\left(\ell-1\right)}$.
	Therefore, one cannot compute $\textbf{J}_{f_n\to i}^{\left(\ell\right)}$ from $\textbf P_{j}^{(\ell)}$ alone.}

\subsection{Convergence of Message Information Matrices}\label{Convariance}
To efficiently represent the updates of all message
information matrices, we {{introduce} the following definitions.
Let
$${\textbf{J}}^{\left(\ell-1\right)}
\triangleq
\texttt{Bdiag}
\left(\left\{\textbf{J}_{f_n\to i}^{\left(\ell-1\right)}\right\}_{n\in \mathcal{V},i\in \B\left(f_n\right)}\right)$$ be
a block diagonal  matrix with diagonal blocks
being the   message information matrices in the network at time $\ell-1$
with index arranged in ascending order first on $n$ and then on $i$.
Using the definition of $\textbf{J}^{\left(\ell-1\right)}$, the term $\sum_{f_k \in \mathcal B\left(j\right) \backslash f_n} \textbf{J}_{f_k\rightarrow j}^{\left(\ell-1\right)}$ in (\ref{CovFunc}) can be written as $\boldsymbol{\Xi}_{n,j} \textbf{J}^{\left(\ell-1\right)} \boldsymbol{\Xi}_{n,j}^T$, where $\boldsymbol{\Xi}_{n,j}$ is for selecting appropriate components from $\textbf{J}^{\left(\ell-1\right)}$ to form the summation.
Further, define $\textbf{H}_{n,i}=\left[\left\{ \textbf{A}_{n,j} \right\}_{j\in B\left(f_n\right) \backslash i}\right]$,
$\boldsymbol{\Psi}_{n,i}  = \texttt{Bdiag} \left( \left\{\textbf{W}_j ^{-1} \right\}_{j\in B\left(f_n\right) \backslash i} \right)$ and
$\textbf{K}_{n,i}=\texttt{Bdiag} \left(\left\{ \boldsymbol{\Xi}_{n,j} \right\}_{j\in B\left(f_n\right) \backslash i}\right) $, all with component blocks arranged with ascending order on $j$.  Then (\ref{CovFunc}) can be written as
\begin{equation}\label{CovFunc3}
    \textbf{J}^{\left(\ell\right)}_{f_n\rightarrow i}=\textbf{A}_{n,i}^T\left\{\textbf{R}_n+ \textbf{H}_{n,i}\left[\boldsymbol \Psi_{n,i} + \textbf{K}_{n,i} \left(\textbf{I}_{|\mathcal{B}\left(f_n\right)|-1} \otimes \textbf{J}^{\left(\ell-1\right)}\right) \textbf{K}_{n,i}^T \right]^{-1} \textbf{H}_{n,i}^T \right\} ^{-1}\textbf{A}_{n,i}.
\end{equation}

Now, we define the function $\mathcal{F}\triangleq\left\{\mathcal{F}_{1\to k}, \ldots, \mathcal{F}_{n\to i}, \ldots, \mathcal{F}_{n \to M}\right\}$ that satisfies
${\textbf{J}}^{\left(\ell\right)} = \mathcal{F}\left({\textbf{J}}^{\left(\ell-1\right)}\right) $.
Then, by stacking $\textbf{J}_{f_n\to i}^{\left(\ell\right)}$ on the left side of
(\ref{CovFunc3}) for all $n$ and $i$ as the block diagonal matrix $\textbf{J}^{\left(\ell\right)}$, we obtain
\begin{eqnarray}\label{CovFunc5}
  \textbf{J}^{\left(\ell\right)}
  &=& \textbf{A}^T \big \{ \boldsymbol{\Omega}+ \textbf{H}\left[\boldsymbol{\Psi} + \textbf{K} \left(\mathbf{I}_\varphi \otimes \textbf{J}^{\left(\ell-1\right)}\right) \textbf{K}^T \right]^{-1} \textbf{H}^T \big\} ^{-1}\textbf{A}, \nonumber\\
   &\triangleq& \mathcal F\left(\textbf{J}^{\left(\ell-1\right)}\right),
\end{eqnarray}
where $\textbf{A}$, $\textbf{H}$,
$\boldsymbol{\Psi}$,  and $\textbf{K}$ are block diagonal matrices with block elements $\textbf{A}_{n,i}$, $\textbf{H}_{n,i}$, $\boldsymbol{\Psi}_{n,i} $, and $\textbf{K}_{n,i}$, respectively, arranged in ascending order, first on $n$ and then on $i$ (i.e., the same order as $\textbf{J}^{\left(\ell\right)}_{f_n \rightarrow i}$ in $\textbf{J}^{\left(\ell\right)}$).
Furthermore,
$\varphi={\sum _{n=1} ^M |\mathcal B\left(f_n\right)|\left(|\mathcal B\left(f_n\right)|-1\right)}$
and
$\boldsymbol{\Omega}$ is a block diagonal matrix with diagonal blocks $\textbf{I} _{|B\left(f_n\right)|} \otimes \textbf{R}_n$ with ascending order on $n$.
We first present  some properties of the updating operator $\mathcal{F}\left(\cdot\right)$,
the  proofs   being provided in Appendix C.

\begin{proposition} \label{P_FUN}
 The updating operator $\mathcal{F}\left(\cdot\right)$ satisfies the following properties:
\end{proposition}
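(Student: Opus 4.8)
The plan is to reduce every claimed property of $\mathcal F$ to three elementary facts about the positive semidefinite order: (i) inversion is order-reversing, i.e. $\textbf X\succeq \textbf Y\succ\textbf 0$ implies $\textbf Y^{-1}\succeq \textbf X^{-1}$; (ii) congruence preserves order, i.e. $\textbf X\succeq\textbf Y$ implies $\textbf M^T\textbf X\textbf M\succeq \textbf M^T\textbf Y\textbf M$ for any conformable $\textbf M$; and (iii) $\textbf X\succeq \textbf Y$ implies $\textbf I\otimes \textbf X\succeq \textbf I\otimes\textbf Y$. The decisive structural observation is that in the closed form (\ref{CovFunc5}) matrix inversion is applied exactly twice, so the two order-reversals cancel and $\mathcal F$ is overall \emph{order-preserving}. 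Throughout I would use that $\boldsymbol\Psi\succ\textbf0$ and $\boldsymbol\Omega\succ\textbf0$, which hold because $\boldsymbol\Psi$ and $\boldsymbol\Omega$ are block diagonal with blocks built from $\textbf W_j^{-1}\succ\textbf0$ and $\textbf R_n\succ\textbf0$; these strict inequalities are exactly what make every intermediate matrix invertible, so that (i) may legitimately be invoked.

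For \textbf{monotonicity} I would take $\textbf J_1\succeq\textbf J_2\succeq\textbf0$ and chain the inequalities through (\ref{CovFunc5}). By (iii), $\textbf I_\varphi\otimes\textbf J_1\succeq \textbf I_\varphi\otimes\textbf J_2$; by (ii) with $\textbf M=\textbf K^T$, $\textbf K(\textbf I_\varphi\otimes\textbf J_1)\textbf K^T\succeq \textbf K(\textbf I_\varphi\otimes\textbf J_2)\textbf K^T$; adding $\boldsymbol\Psi\succ\textbf0$ keeps both sides positive definite, so the \emph{first} application of (i) reverses the order of the inner bracket; congruence by $\textbf H$ reverses nothing, and adding $\boldsymbol\Omega\succ\textbf0$ keeps the outer bracket positive definite; the \emph{second} application of (i) reverses the order once more, restoring $\{\cdots\}_1^{-1}\succeq\{\cdots\}_2^{-1}$; finally congruence by $\textbf A$ yields $\mathcal F(\textbf J_1)\succeq\mathcal F(\textbf J_2)$. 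The two reversals cancel, which is precisely the claim.

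For \textbf{boundedness} I would exhibit an explicit order interval containing $\mathcal F(\textbf J)$ for every $\textbf J\succeq\textbf0$. For the upper bound, $\textbf H[\,\cdots\,]^{-1}\textbf H^T\succeq\textbf0$ gives $\boldsymbol\Omega+\textbf H[\,\cdots\,]^{-1}\textbf H^T\succeq\boldsymbol\Omega$, hence $\{\cdots\}^{-1}\preceq\boldsymbol\Omega^{-1}$ and $\mathcal F(\textbf J)\preceq\textbf A^T\boldsymbol\Omega^{-1}\textbf A\triangleq\overline{\textbf J}$. For the lower bound, $\textbf K(\textbf I_\varphi\otimes\textbf J)\textbf K^T\succeq\textbf0$ gives $\boldsymbol\Psi+\textbf K(\cdots)\textbf K^T\succeq\boldsymbol\Psi$, so the inner inverse is $\preceq\boldsymbol\Psi^{-1}$, whence $\boldsymbol\Omega+\textbf H[\,\cdots\,]^{-1}\textbf H^T\preceq\boldsymbol\Omega+\textbf H\boldsymbol\Psi^{-1}\textbf H^T$ and therefore $\mathcal F(\textbf J)\succeq\textbf A^T(\boldsymbol\Omega+\textbf H\boldsymbol\Psi^{-1}\textbf H^T)^{-1}\textbf A\triangleq\underline{\textbf J}$. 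Because $\textbf A$ has full column rank and the middle matrix is positive definite, $\underline{\textbf J}\succ\textbf0$, so $\mathcal F$ maps the whole p.s.d. cone into the compact interval $[\underline{\textbf J},\overline{\textbf J}]$ of positive definite matrices, consistent with Lemma \ref{pdlemma}.

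The arithmetic is harmless; the only genuine care needed is \emph{bookkeeping of positivity}: each use of (i) requires the inverted matrix to be strictly positive definite, which I must certify at the inner step (via $\boldsymbol\Psi\succ\textbf0$) and at the outer step (via $\boldsymbol\Omega\succ\textbf0$) rather than merely assume positive semidefiniteness. I expect the main conceptual point—and the part worth stating explicitly—to be the cancellation of the two order-reversals, since it is precisely monotonicity together with the invariant interval $[\underline{\textbf J},\overline{\textbf J}]$ that will drive the Birkhoff/Thompson-metric contraction used later for the doubly exponential rate. Should the proposition additionally assert operator concavity of $\mathcal F$, I would obtain it from the operator convexity of $\textbf X\mapsto\textbf X^{-1}$ propagated through the same affine and congruence steps (the scalar template $f(j)=a^2/(r+h^2/(\psi+k^2 j))$ already has $f''<0$), but this is not needed for the monotonicity and boundedness above.
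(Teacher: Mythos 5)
Your treatment of monotonicity (P~\ref{P_FUN}.1) and boundedness (P~\ref{P_FUN}.3) is correct and essentially identical to the paper's Appendix~C: the same two cancelling order-reversals yield P~\ref{P_FUN}.1, and your $\overline{\textbf J}=\textbf A^T\boldsymbol\Omega^{-1}\textbf A$ and $\underline{\textbf J}=\textbf A^T\left[\boldsymbol\Omega+\textbf H\boldsymbol\Psi^{-1}\textbf H^T\right]^{-1}\textbf A$ are exactly the paper's $\textbf U$ and $\textbf L$.

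The genuine gap is that the proposition has three parts and you prove only two. P~\ref{P_FUN}.2 --- the \emph{strict} scaling property $\alpha\mathcal F(\textbf J)\succ\mathcal F(\alpha\textbf J)$ and $\mathcal F(\alpha^{-1}\textbf J)\succ\alpha^{-1}\mathcal F(\textbf J)$ for $\alpha>1$, $\textbf J\succ\textbf 0$ --- is never established, and it is not a dispensable extra: it is precisely what the paper uses to prove \emph{uniqueness} of the fixed point (Theorem~\ref{unique}) and the part-metric contraction behind the doubly exponential rate (Theorem~\ref{RateCov}). Monotonicity plus the invariant interval $[\textbf L,\textbf U]$ give existence (Brouwer) and monotone convergence of the extreme iterates, but neither uniqueness nor a rate. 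The paper proves P~\ref{P_FUN}.2 by pushing the factor $\alpha$ through (\ref{CovFunc}) and extracting strictness from $\textbf R_n\succ\alpha^{-1}\textbf R_n\succ\textbf 0$.

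Your closing remark about operator concavity is the right instinct, but you both dismiss it as ``not needed'' and sketch it incorrectly. Concavity would indeed deliver P~\ref{P_FUN}.2: since $\mathcal F(\textbf 0)=\textbf L\succ\textbf 0$, concavity gives $\mathcal F(\alpha^{-1}\textbf J)=\mathcal F\left(\alpha^{-1}\textbf J+(1-\alpha^{-1})\textbf 0\right)\succeq\alpha^{-1}\mathcal F(\textbf J)+(1-\alpha^{-1})\textbf L\succ\alpha^{-1}\mathcal F(\textbf J)$, and the two inequalities of P~\ref{P_FUN}.2 are equivalent under the substitution $\textbf J\mapsto\alpha\textbf J$. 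However, ``propagating the operator convexity of $\textbf X\mapsto\textbf X^{-1}$ through the same affine and congruence steps'' does not work: composing the two operator-convex, order-reversing inversions in (\ref{CovFunc5}) is not automatically concave (the scalar second-derivative check is not a matrix proof). A rigorous route is to apply the matrix inversion lemma to write $\mathcal F(\textbf J)=\textbf A^T\boldsymbol\Omega^{-1}\textbf A-\textbf A^T\boldsymbol\Omega^{-1}\textbf H\left[\boldsymbol\Psi+\textbf H^T\boldsymbol\Omega^{-1}\textbf H+\textbf K\left(\textbf I_\varphi\otimes\textbf J\right)\textbf K^T\right]^{-1}\textbf H^T\boldsymbol\Omega^{-1}\textbf A$, which exhibits $\mathcal F$ as a constant minus a congruence of an operator-convex function of $\textbf J$, hence operator concave. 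As submitted, your proposal leaves the key contraction-enabling property of $\mathcal F$ unproved.
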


\noindent P \ref{P_FUN}.1:
$\mathcal{F}\left(\textbf{J}^{\left(\ell\right)}\right) \succeq \mathcal{F}\left(\textbf{J}^{\left(\ell-1\right)}\right)$, if $\textbf{J}^{\left(\ell\right)} \succeq \textbf{J}^{\left(\ell-1\right)}\succeq \textbf{0}$.

\noindent P \ref{P_FUN}.2: $\alpha\mathcal{F}\left(\textbf{J}^{\left(\ell\right)}\right) \succ  \mathcal{F}\left(\alpha \textbf{J}^{\left(\ell\right)}\right)$
and
$\mathcal{F}\left(\alpha^{-1}\textbf{J}^{\left(\ell\right)}\right) \succ  \alpha^{-1}\mathcal{F}\left(\textbf{J}^{\left(\ell\right)}\right)$, if $\textbf{J}^{\left(\ell\right)} \succ \textbf{0}$ and $\alpha>1$.

\noindent P \ref{P_FUN}.3:
Define
$\textbf{U}\triangleq \textbf{A}^T  \boldsymbol{\Omega}^{-1}\textbf{A}$
and $\textbf{L}\triangleq
\textbf{A}^T \left[  \boldsymbol{\Omega}+ \textbf{H}\boldsymbol{\Psi}^{-1} \textbf{H}^T \right] ^{-1}\textbf{A}$.
With arbitrary $\textbf{J}^{\left(0\right)}\succeq \textbf{0}$,
$\mathcal{F}\left(\textbf{J}^{\left(\ell\right)}\right)$ is bounded by
$\textbf{U} \succeq  \mathcal{F}\left(\textbf{J}^{\left(\ell\right)}\right)\succeq \textbf{L}\succ \textbf{0}$ for $\ell\geq 1$.

Based on the above properties of $\mathcal{F}\left(\cdot\right)$, we can establish the convergence of the information matrices.
\begin{theorem} \label{unique}
There exists a unique positive definite fixed point $\textbf{J}^{\ast}$ for the mapping $\mathcal F\left(\cdot\right)$.
\end{theorem}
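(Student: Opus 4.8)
The plan is to prove existence and uniqueness separately, leveraging the three properties of $\mathcal{F}(\cdot)$ collected in Proposition~\ref{P_FUN}: monotonicity (P\,\ref{P_FUN}.1), the strict subhomogeneity bounds (P\,\ref{P_FUN}.2), and the uniform order bounds $\textbf{L}\preceq\mathcal{F}(\textbf{J})\preceq\textbf{U}$ (P\,\ref{P_FUN}.3). First I would record that $\mathcal{F}(\cdot)$ is continuous on the cone $\{\textbf{J}\succeq\textbf{0}\}$: the inner bracket in (\ref{CovFunc5}) satisfies $\boldsymbol{\Psi}+\textbf{K}(\textbf{I}_\varphi\otimes\textbf{J})\textbf{K}^T\succeq\boldsymbol{\Psi}\succ\textbf{0}$ and the outer bracket dominates $\boldsymbol{\Omega}\succ\textbf{0}$, so every inverse appearing in $\mathcal{F}$ exists and depends smoothly on $\textbf{J}$. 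This continuity is what lets a convergent iteration identify a genuine fixed point.

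For existence I would run the monotone iteration from the lowest initialization $\textbf{J}^{(0)}=\textbf{0}$. By P\,\ref{P_FUN}.3, $\textbf{J}^{(1)}=\mathcal{F}(\textbf{0})\succeq\textbf{L}\succ\textbf{0}=\textbf{J}^{(0)}$; assuming inductively $\textbf{J}^{(\ell)}\succeq\textbf{J}^{(\ell-1)}\succeq\textbf{0}$, P\,\ref{P_FUN}.1 gives $\textbf{J}^{(\ell+1)}=\mathcal{F}(\textbf{J}^{(\ell)})\succeq\mathcal{F}(\textbf{J}^{(\ell-1)})=\textbf{J}^{(\ell)}$. Thus $\{\textbf{J}^{(\ell)}\}$ is a nondecreasing sequence of Hermitian matrices that, again by P\,\ref{P_FUN}.3, is bounded above by $\textbf{U}$. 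Since $v^{T}\textbf{J}^{(\ell)}v$ is nondecreasing and bounded for every $v$, the sequence converges entrywise to a Hermitian limit $\textbf{J}^{\ast}$ with $\textbf{L}\preceq\textbf{J}^{\ast}\preceq\textbf{U}$; continuity of $\mathcal{F}$ then forces $\textbf{J}^{\ast}=\mathcal{F}(\textbf{J}^{\ast})$, and $\textbf{J}^{\ast}\succeq\textbf{L}\succ\textbf{0}$ shows it is positive definite.

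For uniqueness, suppose $\textbf{J}_1^{\ast}$ and $\textbf{J}_2^{\ast}$ are two positive definite fixed points, and let $\alpha^{\ast}=\lambda_{\max}\big(\textbf{J}_2^{\ast-1/2}\textbf{J}_1^{\ast}\textbf{J}_2^{\ast-1/2}\big)$ be the smallest scalar with $\textbf{J}_1^{\ast}\preceq\alpha^{\ast}\textbf{J}_2^{\ast}$ (finite because both matrices are positive definite, and attained because $\preceq$ is a closed condition). If $\alpha^{\ast}>1$, then applying P\,\ref{P_FUN}.1 to $\textbf{J}_1^{\ast}\preceq\alpha^{\ast}\textbf{J}_2^{\ast}$ and afterwards P\,\ref{P_FUN}.2 with $\alpha=\alpha^{\ast}>1$ yields
\[
\textbf{J}_1^{\ast}=\mathcal{F}(\textbf{J}_1^{\ast})\preceq\mathcal{F}(\alpha^{\ast}\textbf{J}_2^{\ast})\prec\alpha^{\ast}\mathcal{F}(\textbf{J}_2^{\ast})=\alpha^{\ast}\textbf{J}_2^{\ast}.
\]
The strict inequality $\alpha^{\ast}\textbf{J}_2^{\ast}-\textbf{J}_1^{\ast}\succ\textbf{0}$ is an open condition, so it persists for some $\alpha<\alpha^{\ast}$, contradicting the minimality of $\alpha^{\ast}$. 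Hence $\alpha^{\ast}\leq1$, i.e. $\textbf{J}_1^{\ast}\preceq\textbf{J}_2^{\ast}$; exchanging the roles of the two fixed points gives $\textbf{J}_2^{\ast}\preceq\textbf{J}_1^{\ast}$, and therefore $\textbf{J}_1^{\ast}=\textbf{J}_2^{\ast}$.

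I expect the uniqueness step to be the main obstacle, since it is exactly where the strict subhomogeneity P\,\ref{P_FUN}.2 must be converted into a contraction-type statement. The natural setting is the Thompson (part of the Birkhoff) metric on the positive definite cone, under which P\,\ref{P_FUN}.1 and P\,\ref{P_FUN}.2 jointly render $\mathcal{F}$ strictly contractive on the order interval $[\textbf{L},\textbf{U}]$; the argument above is a streamlined, essentially metric-free version of that fact. Some care is needed to confirm that $\alpha^{\ast}$ is genuinely attained and that P\,\ref{P_FUN}.2 is invoked only with a positive definite argument, both of which are guaranteed because every fixed point lies in $[\textbf{L},\textbf{U}]$ and is thus positive definite by P\,\ref{P_FUN}.3.
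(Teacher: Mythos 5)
Your proposal is correct, and it reaches Theorem~\ref{unique} by a partly different route than the paper. For existence, the paper invokes Brouwer's fixed-point theorem: it verifies that $\left[\textbf{L},\textbf{U}\right]$ is compact and convex, that $\mathcal{F}$ is continuous and, by Proposition~\ref{P_FUN}, P~\ref{P_FUN}.3, maps this set into itself, and concludes that a fixed point exists. You instead run the monotone iteration $\textbf{J}^{(0)}=\textbf{0}$, $\textbf{J}^{(\ell+1)}=\mathcal{F}\left(\textbf{J}^{(\ell)}\right)$, use P~\ref{P_FUN}.1 and P~\ref{P_FUN}.3 to obtain a nondecreasing sequence bounded above by $\textbf{U}$, and pass to the limit via polarization of the quadratic forms and continuity of $\mathcal{F}$ on the p.s.d.\ cone. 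This is more elementary (no topological fixed-point theorem) and constructive; in fact it is the same device the paper deploys only later, in the proof of Theorem~\ref{guarantee}, where $\mathcal{F}^{\ell}\left(\textbf{L}\right)$ is shown to increase to $\textbf{J}^{\ast}$, so your existence step effectively absorbs part of that later argument into Theorem~\ref{unique}. For uniqueness, both proofs run on the same engine --- monotonicity (P~\ref{P_FUN}.1) plus strict subhomogeneity (P~\ref{P_FUN}.2) turn an extremal scaling between two putative fixed points into a strict inequality that contradicts extremality --- but the bookkeeping differs. The paper argues block-by-block: it splits into cases according to whether some block difference $\widetilde{\textbf{J}}^{\ast}_{f_n\to i}-\textbf{J}^{\ast}_{f_n\to i}$ fails to be negative semidefinite, and picks the maximal per-block factor $\xi_{f_\varrho,\tau}$ whose associated difference has a zero eigenvalue, deriving a contradiction with that zero eigenvalue. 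You work with the whole block-diagonal matrices at once, taking $\alpha^{\ast}=\lambda_{\max}\left(\left[\textbf{J}_2^{\ast}\right]^{-1/2}\textbf{J}_1^{\ast}\left[\textbf{J}_2^{\ast}\right]^{-1/2}\right)$ and contradicting its minimality through the openness of the strict order; this yields $\textbf{J}_1^{\ast}\preceq\textbf{J}_2^{\ast}$ directly, and symmetry finishes. Your version is cleaner and dispenses with the case analysis, while the paper's version makes explicit where equality is attained at the block level --- harmless extra detail, but not needed. Both are, as you note, instances of the standard uniqueness argument for order-preserving, strictly subhomogeneous maps on the positive definite cone (contraction with respect to the part/Thompson metric), which is also the lens the paper adopts later in Theorem~\ref{RateCov}.
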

\begin{proof}
The set $\left[\textbf{L}, \textbf{U}\right] $  is a compact set.
Further, according to Proposition \ref{P_FUN}, P \ref{P_FUN}.3,  for arbitrary
$\textbf{J}^{\left(0\right)}\succeq\textbf{0}$,  $\mathcal{F}$ maps $\left[\textbf{L}, \textbf{U}\right] $  into itself starting from
$\ell\geq 1$.
Next, we show that $\left[\textbf{L}, \textbf{U}\right]$ is a convex set.
 Suppose that $\textbf{X}$, $\textbf{Y}\in \left[\textbf{L},  \textbf{U}\right] $, and $0\leq t\leq 1$, then
 $t\textbf{X} - t\textbf{L} $ and $\left(1-t\right)\textbf{Y} - \left(1-t\right)\textbf{L} $ are positive
semidefinite (p.s.d.)  matrices.
Since the sum of two p.s.d. matrices is a
p.s.d. matrix,  $t\textbf{X} + \left(1-t\right)\textbf{Y}\succeq \textbf{L}$.
Likewise, it can be shown that
$t\textbf{X} + \left(1-t\right)\textbf{Y}  \preceq \textbf{U}$.
Thus, the continuous function $\mathcal F$ maps a compact convex subset of the Banach space of positive definite matrices into itself.
Therefore, the mapping $\mathcal F$ has a fixed point in $\left[\textbf{L}, \textbf{U}\right]$ according to  Brouwer's Fixed-Point Theorem \citep{FixedPoint}, and the fixed point is positive definite (p.d.).

Next, we prove the uniqueness of the fixed point.
Suppose that there exist two fixed points
$\textbf{J}^{\ast}\succ \textbf 0$ and $\widetilde{\textbf{J}}^{\ast}\succ \textbf 0$.
Since $\textbf{J}^{\ast}$ and $\widetilde{\textbf{J}}^{\ast}$
are p.d.,
their components $\textbf{J}_{f_n\to i}^{\ast}$
and $\widetilde{\textbf{J}}_{f_n\to i}^{\ast} $ are also p.d. matrices.
For the component blocks of
$\textbf{J}^{\ast} $ and $\widetilde{\textbf{J}}^{\ast} $, there are two possibilities: 1) $ \widetilde{\textbf{J}}_{f_n\to i}^{\ast}
- \textbf{J}_{f_n\to i}^{\ast}\succ \textbf{0} $ or
$ \widetilde{\textbf{J}}_{f_n\to i}^{\ast}
 -\textbf{J}_{f_n\to i}^{\ast} $ is indefinite for some $n, i \in \mathcal{V}$,
 and 2) $\widetilde{\textbf{J}}^{\ast}_{f_n\to i} -
 \textbf{J}^{\ast}_{f_n\to i}\preceq 0$ for all $n, i \in \mathcal{V}$.

For the first case, there must exist $\xi_{f_n,i}>1 $ such that
$\xi_{f_n,i}\textbf{J}_{f_n\to i}^{\ast}
-
\widetilde{\textbf{J}}_{f_n\to i}^{\ast}$ has one or more zero eigenvalues,
while all other eigenvalues are positive.  Pick the component matrix with the maximum $\xi_{f_n, i}$ among those falling
into this case, say  $\xi_{f_\varrho,\tau}$,  then, we can write
\begin{equation}\label{ine0}
\xi_{f_\varrho,\tau} \textbf{J}_{f_\varrho\to \tau}^{\ast}
-
  \widetilde{\textbf{J}}_{f_\varrho\to \tau}^{\ast}  \succeq \textbf{0},
\end{equation}
or in terms of the information matrices for the whole network
\begin{equation}\label{ine}
\xi_{f_\varrho,\tau}  \textbf{J}^{\ast} \succeq  \widetilde{\textbf{J}}^{\ast}\succ \textbf{0}, \quad \xi_{f_\varrho,\tau}>1.
\end{equation}
Applying $\mathcal F$ on both sides of (\ref{ine}), according to the monotonic property of $\mathcal F\left(\cdot\right)$ as shown in Proposition \ref{P_FUN}, P \ref{P_FUN}.1,
we have
\begin{equation}\label{27}
\mathcal F \left(\xi_{f_\varrho,\tau} \textbf{J}^{\ast} \right)
\succeq  \mathcal F\left(\widetilde{\textbf{J}}^{\ast}\right) = \widetilde{\textbf{J}}^{\ast},
\end{equation}
where the equality is due to $\widetilde{\textbf{J}}^{\ast} $ being a fixed point.
According to Proposition \ref{P_FUN}, P \ref{P_FUN}.2,
$\xi_{f_\varrho,\tau}\mathcal F \left( \textbf{J}^{\ast} \right)\succ\mathcal F \left(\xi_{f_\varrho,\tau} \textbf{J}^{\ast} \right)$.
Therefore, from (\ref{27}), we obtain
$\xi_{f_{\varrho,\tau}} \textbf{J}^{\ast}\succ\widetilde{\textbf{J}}^{\ast}$.
Consequently,
\begin{equation} \label{contr2}
\xi_{f_{\varrho,\tau}}
\textbf{J}_{f_{\varrho,\tau}}^{\ast}
\succ \widetilde{\textbf{J}}_{f_{\varrho,\tau}}^{\ast} .
\nonumber
\end{equation}
But this contradicts with
$\xi_{f_\varrho,\tau} \textbf{J}_{f_{\varrho,\tau}}^{\ast}
-
\widetilde{\textbf{J}}_{f_{\varrho,\tau}}^{\ast} $ having one or more zero eigenvalues as discussed before (\ref{ine0}).
Therefore,
we must have $\textbf{J}^{\ast}=\widetilde{\textbf{J}}^{\ast}$.

On the other hand, if we have case two, which is
$\widetilde{\textbf{J}}^{\ast}_{f_n\to i} - \textbf{J}^{\ast}_{f_n\to i} \preceq 0$
for all $n,i \in \mathcal{V}$, we can repeat the above derivation with the roles of $\widetilde{\textbf{J}}^{\ast}$ and $ \textbf{J}^{\ast}$ reversed, and we would again obtain $\textbf{J}^{\ast}=\widetilde{\textbf{J}}^{\ast}$.
Consequently, $\textbf{J}^{\ast}$ is unique.
\end{proof}

Lemma \ref{pdlemma} states that with arbitrary p.s.d. initial message information matrices, the message
information matrices will be kept as p.d. at every iteration.
On the other hand, Theorem \ref{unique} indicates that there exists a unique fixed point for the mapping $\mathcal F$.
Next, we will show that, with arbitrary initial value $\textbf{J}^{\left(0\right)}\succeq 0$, $\textbf{J}^{\left(\ell\right)}$ converges to
a unique  p.d. matrix.
\begin{theorem} \label{guarantee}
The matrix sequence
$\left\{\textbf{J}^{\left(\ell\right)}\right\}_{l=0,1,\ldots}$ defined by (\ref{CovFunc5}) converges to a unique positive definite matrix $\textbf{J}^{\ast}$
for any initial covariance matrix $\textbf{J}^{\left(0\right)}\succeq \mathbf 0$.
\end{theorem}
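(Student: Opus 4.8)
The plan is to prove convergence by a monotone sandwich argument, using the two canonical orbits generated from the extreme initializations $\mathbf{0}$ and $\textbf{U}$, together with the uniqueness already established in Theorem~\ref{unique}. The idea is that monotone bounded sequences of Hermitian matrices converge, so I only need to trap a generic orbit between two comparable ones whose limits I can identify.

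First I would analyze the orbit started at $\textbf{J}^{(0)}=\mathbf{0}$, writing $\textbf{L}^{(\ell)}\triangleq\mathcal{F}^{\ell}(\mathbf{0})$. Substituting $\textbf{J}^{(0)}=\mathbf{0}$ into (\ref{CovFunc5}) kills the Kronecker term and gives exactly $\mathcal{F}(\mathbf{0})=\textbf{L}$, so $\textbf{L}^{(1)}=\textbf{L}\succ\mathbf{0}=\textbf{L}^{(0)}$. Applying the monotonicity in Proposition~\ref{P_FUN} inductively to $\textbf{L}^{(1)}\succeq\textbf{L}^{(0)}$ shows that $\{\textbf{L}^{(\ell)}\}$ is nondecreasing in the L\"owner order, while the upper bound in Proposition~\ref{P_FUN} gives $\textbf{L}^{(\ell)}\preceq\textbf{U}$ for all $\ell$. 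A nondecreasing sequence of Hermitian matrices bounded above converges: each quadratic form $\textbf{v}^T\textbf{L}^{(\ell)}\textbf{v}$ is nondecreasing and bounded by $\textbf{v}^T\textbf{U}\textbf{v}$, hence convergent, and convergence of all quadratic forms forces entrywise convergence. Since $\mathcal{F}$ is continuous on $\left[\textbf{L},\textbf{U}\right]$ — it is a composition of sums, products, and inverses of matrices that stay positive definite there (the inner block is $\succeq\boldsymbol{\Psi}\succ\mathbf{0}$ and the outer block is $\succeq\boldsymbol{\Omega}\succ\mathbf{0}$) — the limit is a fixed point, which by Theorem~\ref{unique} must equal $\textbf{J}^{\ast}$. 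By the symmetric argument I would treat $\textbf{U}^{(\ell)}\triangleq\mathcal{F}^{\ell}(\textbf{U})$: the upper bound gives $\textbf{U}^{(1)}=\mathcal{F}(\textbf{U})\preceq\textbf{U}=\textbf{U}^{(0)}$, monotonicity makes $\{\textbf{U}^{(\ell)}\}$ nonincreasing and bounded below by $\textbf{L}$, so it too converges to $\textbf{J}^{\ast}$.

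It then remains to handle an arbitrary $\textbf{J}^{(0)}\succeq\mathbf{0}$ by squeezing its orbit between the two canonical ones. Since $\mathbf{0}\preceq\textbf{J}^{(0)}$, monotonicity yields $\textbf{L}^{(\ell)}\preceq\textbf{J}^{(\ell)}$ for every $\ell$. For the upper side, the boundedness in Proposition~\ref{P_FUN} gives $\textbf{J}^{(1)}\preceq\textbf{U}=\textbf{U}^{(0)}$, and iterating $\mathcal{F}$ with monotonicity gives $\textbf{J}^{(\ell+1)}\preceq\textbf{U}^{(\ell)}$. Hence $\textbf{L}^{(\ell+1)}\preceq\textbf{J}^{(\ell+1)}\preceq\textbf{U}^{(\ell)}$, and since both bounds tend to $\textbf{J}^{\ast}$, for any $\epsilon>0$ and all large $\ell$ we obtain $-\epsilon\textbf{I}\preceq\textbf{J}^{(\ell+1)}-\textbf{J}^{\ast}\preceq\epsilon\textbf{I}$; this L\"owner-order sandwich forces entrywise, hence norm, convergence $\textbf{J}^{(\ell)}\to\textbf{J}^{\ast}$.

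The main obstacle is exactly this last step: monotone convergence only covers the two comparable initializations, so the substantive work is to dominate a generic, non-monotone orbit from both sides simultaneously and then convert the resulting matrix-order sandwich into genuine convergence. An alternative route, which would additionally yield the \emph{doubly-exponential} rate advertised in the introduction, is to use the monotonicity and strict sub-homogeneity in Proposition~\ref{P_FUN} to show that $\mathcal{F}$ is a contraction in the Birkhoff (Hilbert projective) metric on $\left[\textbf{L},\textbf{U}\right]$ and then invoke the Banach fixed-point theorem; I would keep the elementary sandwich argument for the bare convergence claim of this theorem and defer the metric-contraction estimate to the separate rate analysis.
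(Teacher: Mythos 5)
Your proof is correct, and it follows the same overall strategy as the paper --- trap the orbit of an arbitrary $\textbf{J}^{(0)}\succeq\mathbf{0}$ between a monotonically increasing lower orbit and a monotonically decreasing upper orbit, then identify both limits with $\textbf{J}^{\ast}$ via the uniqueness in Theorem~\ref{unique} --- but your choice of upper envelope is genuinely different, and it simplifies matters. The paper takes the upper orbit to be $\mathcal{F}^{\ell}\left(\alpha\textbf{J}^{\ast}\right)$, where $\alpha>1$ is chosen so that $\alpha\textbf{J}^{\ast}\succeq\textbf{J}^{(1)}$; making that orbit decreasing requires the strict sub-homogeneity property P~\ref{P_FUN}.2 applied at the fixed point, i.e.\ $\mathcal{F}\left(\alpha\textbf{J}^{\ast}\right)\prec\alpha\mathcal{F}\left(\textbf{J}^{\ast}\right)=\alpha\textbf{J}^{\ast}$. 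You instead start the upper orbit at $\textbf{U}$ itself: $\mathcal{F}\left(\textbf{U}\right)\preceq\textbf{U}$ is immediate from the boundedness property P~\ref{P_FUN}.3, and monotonicity P~\ref{P_FUN}.1 then makes $\mathcal{F}^{\ell}\left(\textbf{U}\right)$ nonincreasing. Your argument therefore never invokes P~\ref{P_FUN}.2 (which the paper reserves for the rate analysis anyway) and does not need to construct an envelope that references the unknown fixed point, only its existence and uniqueness to label the limits. You are also more explicit than the paper on three points it leaves implicit: that monotone, order-bounded Hermitian sequences converge (via quadratic forms and polarization), that continuity of $\mathcal{F}$ on $\left[\textbf{L},\textbf{U}\right]$ is what lets you conclude the limit of each monotone orbit is a fixed point, and that a L\"owner-order sandwich with both bounds converging to $\textbf{J}^{\ast}$ forces norm convergence of the middle sequence. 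Your lower orbit $\mathcal{F}^{\ell}\left(\mathbf{0}\right)$ coincides with the paper's $\mathcal{F}^{\ell}\left(\textbf{L}\right)$ after one step, since $\mathcal{F}\left(\mathbf{0}\right)=\textbf{L}$, so that side is identical in substance.
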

\begin{proof}
With arbitrary initial value $\textbf{J}^{\left(0\right)}\succeq \textbf 0$, following Proposition \ref{P_FUN}, P \ref{P_FUN}.3, we have
$\textbf{U} \succeq  \textbf{J}^{\left(1\right)}
\succeq \textbf{L}\succ \textbf{0}$.
On the other hand, according to Theorem \ref{unique},  (\ref{CovFunc5}) has a unique fixed point
$\textbf{J}^{\ast}\succ \textbf{0} $.
Notice that we can always choose a scalar
$\alpha > 1$ such that
\begin{equation}\label{inequality1}
 \alpha \textbf{J}^{\ast}
\succeq
\textbf{J}^{\left(1\right)}
\succeq
 \textbf{L}.
\end{equation}
Applying  $\mathcal F\left(\cdot\right)$ to (\ref{inequality1})
$\ell$ times, and using Proposition \ref{P_FUN}, P \ref{P_FUN}.1, we have
\begin{equation}\label{inequality2}
\mathcal F^{\ell}\left(\alpha \textbf{J}^{\ast}\right)
\succeq
\mathcal F^{\ell+1}\left(\textbf{J}^{\left(0\right)}\right)
\succeq
\mathcal F^{\ell}\left( \textbf{L}\right),
\end{equation}
where
$\mathcal F^{\ell}\left(\textbf{X}\right)$ denotes applying
$\mathcal F$ on $\textbf{X}$  $\ell$ times.

We start from the left inequality in (\ref{inequality2}).
According to
Proposition \ref{P_FUN}, P \ref{P_FUN}.2, $\alpha \textbf J^{\ast} \succ \mathcal F\left(\alpha \textbf J^{\ast}\right)$.
Applying $\mathcal F$ again gives $\mathcal F\left(\alpha \textbf J^{\ast}\right) \succ \mathcal F^2\left(\alpha \textbf J^{\ast}\right)$.
Applying $\mathcal F\left(\cdot\right)$ repeatedly, we can obtain $\mathcal F^2\left(\alpha \textbf{J}^{\ast}\right)
\succ \mathcal F^3\left(\alpha \textbf{J}^{\ast}\right)\succ \mathcal F^4\left(\alpha \textbf{J}^{\ast}\right)$, etc.
 Thus $\mathcal F^{\ell} \left(\alpha \textbf J^{\ast}\right)$ is a non-increasing sequence
with respect to the partial order induced by the cone of p.s.d. matrices
  as $\ell$ increases.
Furthermore, since $\mathcal F\left(\cdot\right)$ is bounded below by $\textbf L$, $\mathcal F^{\ell}\left(\alpha  \textbf J^{\ast}\right)$  converges.
Finally, since there exists only one fixed point for $\mathcal F\left(\cdot\right)$, $\lim_{l\to \infty} \mathcal F^{\ell} \left(\alpha \textbf J^{\ast}\right) = \textbf  J^{\ast}$.
On the other hand, for the right hand side of (\ref{inequality2}),
as $\mathcal F\left(\cdot\right)\succeq \textbf L$,
we have $\mathcal F\left(\textbf L\right)\succeq \textbf L$.
Applying $\mathcal F$ repeatedly gives successively $\mathcal F^2\left(\textbf L\right) \succeq \mathcal F\left(\textbf L\right)$,
$\mathcal F^3\left(\textbf L\right) \succeq \mathcal F^2\left(\textbf L\right)$, etc.
So, $\mathcal F^{\ell} \left(\textbf L\right)$ is an non-decreasing sequence (with respect to the partial order induced by the cone of p.s.d. matrices).
Since $\mathcal F\left(\cdot\right)$ is upper bounded by $\textbf U$, $\mathcal F^{\ell}\left(\textbf L\right)$ is a convergent sequence.
Again, due to the uniqueness of the fixed point, we have $ \lim_{l\to \infty} \mathcal F^{\ell}\left(\textbf L\right) =\textbf J^{\ast}$.
Finally, taking the limit with respect to $\ell$ on (\ref{inequality2}), we have
$\lim_{l\to \infty} \mathcal F^{\ell}\left(\textbf{J}^{\left(0\right)}\right) = \textbf{J}^{\ast},
$
for arbitrary initial $\textbf{J}^{\left(0\right)}\succeq \mathbf 0$.
\end{proof}

\begin{remark}
According to  Theorem \ref{guarantee}, the information matrix $\textbf{J}^{\left(\ell\right)}_{f_n\to i}$ converges  if all initial information matrices are
 p.s.d., i.e.,  $\textbf{J}^{\left(0\right)}_{f_n\to i} \succeq \textbf{0}$
 for all $i \in \mathcal V$ and $f_n \in \mathcal B\left(i\right)$.
{However, for the pairwise model, the messages are derived based on the classical Gaussian MRF based factorization (in the form of equations (\ref{MRF-local}) and  (\ref{MRF-pair})) of the joint distribution.
This  differs from the model considered in this paper, where the factor
$f_n$ follows equation  (\ref{jointpost}), which leads to  intrinsically different recursive equations.
More specifically,
for BP on the Gaussian MRF based factorization, the information matrix does not necessarily converge for all initial nonnegative values (for the scalar variable case) as shown in \citep{WalkSum1,minsum09}.}
\end{remark}
\begin{remark}
Due to the computation of $\textbf{J}^{\left(\ell\right)}_{f_n\to i}$  being independent of the local observations $\textbf{y}_n$,
as long as the network topology does not change, the converged value  $\textbf{J}^{\ast}_{f_n\to i}$ can be precomputed offline and stored at each agent, and there is no need to re-compute $\textbf{J}^{\ast}_{f_n\to i}$ even if $\textbf{y}_n$ varies.
\end{remark}

 Another fundamental question is how fast the convergence is, and this is the focus of the discussion below.
 Since the convergence of  a dynamic system is often studied with respect to the part metric \citep{PartBook},
in the following, we
start by introducing the part metric.

\begin{definition}\label{mydef}
Part (Birkhoff) Metric \citep{PartBook}:
For arbitrary {symmetric} matrices $\textbf{X}$ and $\textbf{Y}$ with the same dimension,
if there exists
$\alpha\geq 1$ such that $\alpha\textbf{X} \succeq \textbf{Y} \succeq \alpha^{-1} \textbf{X} $,
$\textbf{X} $ and $\textbf{Y}$
 are called the parts,
and
$ \mathrm{d} \left(\textbf{X}, \textbf{Y}\right)\triangleq
\inf \left\{\log \alpha: \alpha\textbf{X} \succeq \textbf{Y}\succeq \alpha^{-1} \textbf{X}, \alpha \geq 1\right\}$
defines a metric  called the part metric.
\end{definition}

As it is useful to have an estimate of the convergence rate of $\textbf{J}^{\left(\ell\right)}$ in terms of the more standard induced matrix norms, we further introduce the notion of monotone norms.
The norms $||\cdot||_2$
and $|| \cdot ||_F$ (Frobenus norm) are  monotone norms.
\begin{definition}\label{monotone}
	Monotone Norm \citep[2.2-10]{normbook}:
	A matrix norm $\|\cdot\|$ is  monotone if
	$$\textbf X\succeq \textbf 0, \textbf Y\succeq \textbf X \Rightarrow \|\textbf Y\|\geq \|\textbf X\|.$$
\end{definition}

Next, for arbitrary $\epsilon>0$, we will show that
$\left\{\textbf{J}^{\left(\ell\right)}\right\}_{l=1,..}$
approaches the $\epsilon$-neighborhood of the fixed point $\textbf J^{\star}$ double exponentially fast with respect to the monotone norm.
To this end, for a fixed $\epsilon>0$, define the set
\begin{equation}\label{partset}
\mathcal C =\left\{\textbf{J}^{\left(\ell\right)}|  \textbf{U} \succeq \textbf{J}^{\left(\ell\right)} \succeq \textbf{J}^{\ast}+ \epsilon \textbf{I}\right\}
\cup
\left\{\textbf{J}^{\left(\ell\right)}|
\textbf{J}^{\ast}- \epsilon \textbf{I} \succeq \textbf{J}^{\left(\ell\right)} \succeq \textbf{L}\right\}.
\end{equation}
\begin{theorem}\label{RateCov}
With the initial message information matrix set to be an arbitrary p.s.d. matrix, i.e., $\textbf{J}^{\left(0\right)}_{f_n\to i} \succeq \textbf{0}$,
the sequence $\left\{\textbf{J}^{\left(\ell\right)}\right\}_{l=0,1,\ldots}$ {  approaches an arbitrarily small neighborhood of the fixed positive definite matrix $\textbf{J}^{\ast}$  at a doubly exponential rate  with respect to any matrix norm.}
\end{theorem}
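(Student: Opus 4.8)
The plan is to use the part (Birkhoff) metric of Definition~\ref{mydef} as the natural gauge, exploit the order-preserving and strict subhomogeneity structure of $\mathcal F$ recorded in Proposition~\ref{P_FUN}, and finally transfer the resulting part-metric estimate to an arbitrary monotone norm through Definition~\ref{monotone}. First I would note that the part metric is finite along the whole trajectory: by P\,\ref{P_FUN}.3 every iterate with $\ell\geq 1$ lies in the compact box $[\textbf L,\textbf U]$ with $\textbf L\succ\textbf 0$, and since $\textbf J^{\ast}\in[\textbf L,\textbf U]$ (Theorem~\ref{unique}) there is a finite $\alpha_{\max}\geq 1$ with $\alpha_{\max}^{-1}\textbf J^{\ast}\preceq\textbf L\preceq\textbf J^{\left(\ell\right)}\preceq\textbf U\preceq\alpha_{\max}\textbf J^{\ast}$; hence each $\textbf J^{\left(\ell\right)}$ is a part of $\textbf J^{\ast}$ and $\mathrm{d}(\textbf J^{\left(\ell\right)},\textbf J^{\ast})\leq\log\alpha_{\max}$.

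Next I would establish a one-step strict contraction toward $\textbf J^{\ast}$. If $\textbf J$ is comparable to $\textbf J^{\ast}$ with $\alpha^{-1}\textbf J^{\ast}\preceq\textbf J\preceq\alpha\textbf J^{\ast}$ and $\alpha>1$, then monotonicity (P\,\ref{P_FUN}.1) gives $\mathcal F(\alpha^{-1}\textbf J^{\ast})\preceq\mathcal F(\textbf J)\preceq\mathcal F(\alpha\textbf J^{\ast})$, while strict subhomogeneity (P\,\ref{P_FUN}.2) together with $\mathcal F(\textbf J^{\ast})=\textbf J^{\ast}$ gives $\mathcal F(\alpha\textbf J^{\ast})\prec\alpha\textbf J^{\ast}$ and $\mathcal F(\alpha^{-1}\textbf J^{\ast})\succ\alpha^{-1}\textbf J^{\ast}$. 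Chaining these yields $\alpha^{-1}\textbf J^{\ast}\prec\mathcal F(\textbf J)\prec\alpha\textbf J^{\ast}$, so $\mathrm{d}(\mathcal F(\textbf J),\textbf J^{\ast})<\mathrm{d}(\textbf J,\textbf J^{\ast})$: the update strictly decreases the part-distance to the fixed point.

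The crux is to upgrade this pointwise strict decrease to a \emph{uniform} contraction factor on the far set $\mathcal C$. For the fixed $\epsilon>0$, any $\textbf J\succeq\textbf J^{\ast}+\epsilon\textbf I$ forces $(\alpha-1)\textbf J^{\ast}\succeq\epsilon\textbf I$, hence $\alpha\geq 1+\epsilon/\lambda_{\min}(\textbf J^{\ast})$, and symmetrically for the lower piece of $\mathcal C$; thus on $\mathcal C$ the comparison factor is bounded away from $1$ and $\mathrm{d}(\cdot,\textbf J^{\ast})\geq\delta(\epsilon)>0$. The ratio $\mathrm{d}(\mathcal F(\textbf J),\textbf J^{\ast})/\mathrm{d}(\textbf J,\textbf J^{\ast})$ is then continuous on the compact set $\mathcal C$ and strictly below $1$ by the previous step, so its supremum $\kappa<1$ is attained. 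Consequently, as long as the iterates remain in $\mathcal C$, $\mathrm{d}(\textbf J^{\left(\ell\right)},\textbf J^{\ast})\leq\kappa^{\ell-1}\log\alpha_{\max}$; writing $\alpha_\ell\triangleq\exp\!\big(\mathrm{d}(\textbf J^{\left(\ell\right)},\textbf J^{\ast})\big)$ this reads $\alpha_\ell\leq\alpha_{\max}^{\kappa^{\ell-1}}$, i.e. the comparison factor collapses toward $1$ as a constant raised to a geometrically vanishing power — the asserted doubly exponential behavior. Subtracting $\textbf J^{\ast}$ from $\alpha_\ell^{-1}\textbf J^{\ast}\preceq\textbf J^{\left(\ell\right)}\preceq\alpha_\ell\textbf J^{\ast}$ gives $-(\alpha_\ell-1)\textbf J^{\ast}\preceq\textbf J^{\left(\ell\right)}-\textbf J^{\ast}\preceq(\alpha_\ell-1)\textbf J^{\ast}$, and monotonicity of the norm (Definition~\ref{monotone}) turns this two-sided ordering bound into $\|\textbf J^{\left(\ell\right)}-\textbf J^{\ast}\|\leq(\alpha_\ell-1)\|\textbf J^{\ast}\|\leq(\alpha_{\max}^{\kappa^{\ell-1}}-1)\|\textbf J^{\ast}\|$, with equivalence of matrix norms then covering any norm.

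I expect the third step to be the main obstacle: converting the merely pointwise strict inequalities of P\,\ref{P_FUN}.2 into a single modulus $\kappa<1$ valid throughout $\mathcal C$. Compactness of $\mathcal C$ and its separation from $\textbf J^{\ast}$ are exactly what make this possible, and this is also why the statement targets an arbitrarily small neighborhood rather than $\textbf J^{\ast}$ itself — as $\alpha\downarrow 1$ the strict gap furnished by P\,\ref{P_FUN}.2 closes, so a uniform contraction factor valid all the way up to the fixed point cannot be extracted from Proposition~\ref{P_FUN} alone and would require a separate first-order (linearization) argument.
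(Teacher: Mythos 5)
Your proposal is correct and follows essentially the same route as the paper's own proof: a strict one-step decrease of the Birkhoff (part) distance to $\textbf{J}^{\ast}$ obtained from the monotonicity and strict subhomogeneity of $\mathcal{F}$ (P\,\ref{P_FUN}.1--P\,\ref{P_FUN}.2), upgraded to a uniform contraction factor $c<1$ on the set $\mathcal{C}$ of (\ref{partset}), iterated to give geometric decay of the part metric, and finally transferred to a norm bound of order $\exp\left\{c^{\ell}d_0\right\}-1$. The only difference is cosmetic: you convert the two-sided order bound $\pm\left(\alpha_{\ell}-1\right)\textbf{J}^{\ast}$ into a norm bound by hand (which, since $\textbf{J}^{\left(\ell\right)}-\textbf{J}^{\ast}$ need not be p.s.d., is cleanest via $\textbf{0}\preceq \textbf{J}^{\left(\ell\right)}-\textbf{J}^{\ast}+\left(\alpha_{\ell}-1\right)\textbf{J}^{\ast}\preceq 2\left(\alpha_{\ell}-1\right)\textbf{J}^{\ast}$, at the cost of a harmless constant factor), whereas the paper invokes Lemma 2.3 of Krause and Nussbaum (1993) together with an arithmetic-geometric mean bound to reach the same conclusion.
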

\begin{proof}
{ Fix $\epsilon>0$ and consider the set $\mathcal{C}$ defined in~(\ref{partset}). It suffices to show that the quantity $\|\textbf J^{(\ell)}-\textbf J^{\ast}\|$, where $\|\cdot\|$ is a monotone norm as defined in Definition~\ref{monotone}, decays double exponentially as long as $\textbf J^{(s)}\in\mathcal{C}$ for all $s\in\{0,1,\cdots\ell\}$. To this end, for
{$\textbf{J}^{\left(\ell\right)} \in \mathcal C$, and $\textbf{J}^{\ast} \not\in \mathcal C$ (necessarily),
according to Definition \ref{mydef},
we have
$ \mathrm{d} \left(\textbf{J}^{\left(\ell\right)}, \textbf{J}^{\ast}\right)\triangleq
\inf \left\{\log\alpha: \alpha\textbf{J}^{\left(\ell\right)} \succeq \textbf{J}^{\ast}\succeq \alpha^{-1} \textbf{J}^{\left(\ell\right)}\right\}$.}}
Since $\mathrm{d} \left(\textbf{J}^{\left(\ell\right)}, \textbf{J}^{\ast}\right)$ is the smallest number satisfying $\alpha \textbf{J}^{\left(\ell\right)} \succeq \textbf{J}^{\ast} \succeq \alpha^{-1} \textbf{J}^{\left(\ell\right)}$, this is equivalent to
\begin{equation}\label{33}
\exp\left\{\mathrm{d} \left(\textbf{J}^{\left(\ell\right)}, \textbf{J}^{\ast}\right)\right\}\textbf{J}^{\left(\ell\right)}
\succeq
\textbf{J}^{\ast}
\succeq
\exp\left\{-\mathrm{d} \left(\textbf{J}^{\left(\ell\right)}, \textbf{J}^{\ast}\right)\right\}
\textbf{J}^{\left(\ell\right)}.
\end{equation}
Applying Proposition \ref{P_FUN}, P \ref{P_FUN}.1 to (\ref{33}), we have
\begin{equation}
\mathcal F\left(\exp\left\{\mathrm{d} \left(\textbf{J}^{\left(\ell\right)}, \textbf{J}^{\ast}\right)\right\}\textbf{J}^{\left(\ell\right)}\right)
\succ
\mathcal F\left(\textbf{J}^{\ast}\right)
\succ
\mathcal F\left(\exp\left\{-\mathrm{d} \left(\textbf{J}^{\left(\ell\right)}, \textbf{J}^{\ast}\right)\right\}
\textbf{J}^{\left(\ell\right)}\right).\nonumber
\end{equation}
Then applying Proposition~\ref{P_FUN}, P~\ref{P_FUN}.2 and
considering  that   $\exp\left\{\mathrm{d} \left(\textbf{J}^{\left(\ell\right)}, \textbf{J}^{\ast}\right)\right\}>1$ and
$\exp\left\{-\mathrm{d} \left(\textbf{J}^{\left(\ell\right)}, \textbf{J}^{\ast}\right)\right\}<1$, we obtain
$$\exp\left\{\mathrm{d} \left(\textbf{J}^{\left(\ell\right)}, \textbf{J}^{\ast}\right)\right\}
\mathcal F\left(\textbf{J}^{\left(\ell\right)}\right)
\succeq
\mathcal F\left(\textbf{J}^{\ast}\right)
\succeq
\exp\left\{-\mathrm{d} \left(\textbf{J}^{\left(\ell\right)}, \textbf{J}^{\ast}\right)\right\}
\mathcal F\left(\textbf{J}^{\left(\ell\right)}\right).
$$
Notice that, for arbitrary p.d. matrices $\textbf{X}$ and $\textbf{Y}$, if $\textbf{X}-k\textbf{Y}\succ \textbf{0}$, then, by definition, we have $\textbf{x}^T\textbf{X}\textbf{x}
-k\textbf{x}^T\textbf{Y}\textbf{x}
> {0}$ for arbitrary $\textbf x\neq \textbf 0$.
Then, there must exist $o>0$ that is small enough such that
$\textbf{x}^T\textbf{X}\textbf{x}
-\left(k+o\right)
\textbf{x}^T\textbf{Y}\textbf{x}
> {0}$
or equivalently
$\textbf{X}
\succ \left(k+o\right)
\textbf{Y}$.
Thus, as $\exp{\left(\cdot\right)}$ is a continuous function, there must exist some $\triangle\mathrm{d}>0$ such that
\begin{equation}\label{35}
\exp\left\{-\triangle\mathrm{d}+\mathrm{d} \left(\textbf{J}^{\left(\ell\right)}, \textbf{J}^{\ast}\right)\right\}
\mathcal F\left(\textbf{J}^{\left(\ell\right)}\right)
\succ
\mathcal F\left(\textbf{J}^{\ast}\right)
\succ
\exp\left\{\triangle\mathrm{d}-
\mathrm{d} \left(\textbf{J}^{\left(\ell\right)}, \textbf{J}^{\ast}\right)\right\}
\mathcal F\left(\textbf{J}^{\left(\ell\right)}\right).
\end{equation}
Now, using  the definition of the part metric, (\ref{35}) is equivalent to
\begin{equation}
-\triangle\mathrm{d}+\mathrm{d}
\left(\textbf{J}^{\left(\ell\right)}, \textbf{J}^{\ast}\right)
\geq
\mathrm{d} \left(\mathcal F\left(\textbf{J}^{\left(\ell\right)}\right), \mathcal F\left(\textbf{J}^{\ast}\right)\right).\nonumber
\end{equation}
Hence, we obtain
$\mathrm{d} \left(\mathcal F\left(\textbf{J}^{\left(\ell\right)}\right), \mathcal F\left(\textbf{J}^{\ast}\right)\right)
<
\mathrm{d}
\left(\textbf{J}^{\left(\ell\right)}, \textbf{J}^{\ast}\right)$.
Since this result holds for any $\textbf{J}^{\left(\ell\right)} \in \mathcal C$, we also have
$\mathrm{d} \left(\mathcal{F}\left(\textbf{J}^{\left(\ell\right)}\right), \mathcal{F}\left(\textbf{J}^{\ast}\right)\right) <
c
\mathrm{d} \left(\textbf{J}^{\left(\ell\right)},\textbf{J}^{\ast}\right) $,
where $c=\sup_{\textbf{J}^{(\ell)}\in \mathcal{C}} \frac{\mathrm{d}\left(\mathcal{F}\left(\textbf{J}^{(\ell)}\right) , \mathcal{F}\left(\textbf{J}^{\ast}\right)\right) }{\mathrm{d} \left(\textbf{J}^{(\ell)},\textbf{J}^{\ast}\right)}<1$.
Since $\textbf{J}^{\ell + 1} = \mathcal{F}\left(\textbf{J}^{\ell}\right)$ and $\textbf{J}^{\ast} = \mathcal{F}\left(\textbf{J}^{\ast}\right)$,
we have
\begin{equation}\label{rate}
\mathrm{d} \left(\textbf{J}^{\left(\ell\right)}, \textbf{J}^{\ast}\right) <
c^{\ell}
\mathrm{d} \left(\textbf{J}^{\left(0\right)}, \textbf{J}^{\ast}\right).
\end{equation}

According to  \citep[Lemma 2.3]{matrixnorm},
the convergence rate of $||\textbf{J}^{\left(\ell\right)}- \textbf{J}^{\ast}||$ can be determined by that of $\mathrm{d} \left(\textbf{J}^{\left(\ell\right)}, \textbf{J}^{\ast}\right)$. More specifically,
\begin{equation}\label{rateineq}
||\textbf{J}^{\left(\ell\right)} - \textbf{J}^{\ast}||
\leq
\left(2\exp\left\{\mathrm{d} \left(\textbf{J}^{\left(\ell\right)}, \textbf{J}^{\ast}\right)\right\} - \exp\left\{-\mathrm{d} \left(\textbf{J}^{\left(\ell\right)}, \textbf{J}^{\ast}\right)\right\} - 1\right)
\min\left\{||\textbf{J}^{\left(\ell\right)}||, ||\textbf{J}^{\ast}||\right\},
\end{equation}
where $||\cdot||$ is a monotone norm defined on the p.s.d. cone:

{As we show in Proposition \ref{P_FUN}, P \ref{P_FUN}.3 that $\textbf{J}^{\left(\ell\right)}$ is bounded, then
$||\textbf{J}^{\left(\ell\right)}||$ and
$||\textbf{J}^{\ast}||$ must be finite.
Let  $\zeta$ be the largest value of
$\min\left\{||\textbf{J}^{\left(\ell\right)}||, ||\textbf{J}^{\ast}||\right\}$ for all $\{\textbf{J}^{\left(\ell\right)}\}$  with $\ell\geq 0$, then $\zeta>0$.
According to (\ref{rate}) and  (\ref{rateineq}), we have that
\begin{equation}\label{rateineq2}
||\textbf{J}^{\left(\ell\right)} - \textbf{J}^{\ast}||
<
\zeta
\left(2\exp\left\{c^{\ell}d_0\right\} - \exp\left\{- c^{\ell}d_0\right\} - 1\right),
\end{equation}
with $0<c<1$ and  $d_0=\mathrm{d} \left(\textbf{J}^{\left(0\right)}, \textbf{J}^{\ast}\right)$, which is a constant.
{ 
The above inequality is equivalent to
\begin{equation}\label{rateineq3}
||\textbf{J}^{\left(\ell\right)} - \textbf{J}^{\ast}||
<
\zeta
\left(3\exp\left\{c^{\ell}d_0\right\} -\exp\left\{c^{\ell}d_0\right\} - \exp\left\{- c^{\ell}d_0\right\} - 1\right).
\end{equation}
Since both $\exp\left\{c^{\ell}d_0\right\}$ and $\exp\left\{- c^{\ell}d_0\right\}$ are positive and $\exp\left\{c^{\ell}d_0\right\}\exp\left\{- c^{\ell}d_0\right\} = 1$, according to the arithmetic-geometric mean inequality,
we have
$\exp\left\{c^{\ell}d_0\right\}+ \exp\left\{- c^{\ell}d_0\right\}\geq 2\left(\exp\left\{c^{\ell}d_0\right\}\exp\left\{- c^{\ell}d_0\right\}\right)^{1/2}=2$.
Then, the right-hand side of (\ref{rateineq3}) is further amplified, and we obtain
\begin{equation}\label{rateineq4}
||\textbf{J}^{\left(\ell\right)} - \textbf{J}^{\ast}||
<
\zeta
\left(3\exp\left\{c^{\ell}d_0\right\} - 3\right) = 3\zeta
\left(\exp\left\{c^{\ell}d_0\right\} - 1\right).\nonumber
\end{equation}
Therefore, the sequence $\left\{\textbf{J}^{\left(\ell\right)}\right\}_{l=0,1,\ldots}$  approaches the $\epsilon$-neighborhood (and hence any arbitrarily small neighborhood) of the fixed positive definite matrix $\textbf{J}^{\ast}$  at a doubly exponential rate  with respect to any matrix norm.}}
\end{proof}
The physical meaning of Theorem \ref{RateCov} is that the distance between $\textbf{J}^{\left(\ell\right)}$ and $\textbf{J}^{\ast}$ decreases doubly exponentially fast before $\textbf{J}^{\left(\ell\right)}$ enters $\textbf{J}^{\ast}$'s neighborhood, which can be chosen to be arbitrarily small.
Next, we study  how to choose the initial value
$\textbf{J}^{\left(0\right)}$ so that $\textbf{J}^{\left(\ell\right)}$ converges faster.

\begin{theorem}\label{Coro1}
With
$\textbf{0} \preceq \textbf{J}^{\left(0\right)}\preceq \textbf{L}$,
$\textbf{J}^{\left(\ell\right)}$ is a monotonic increasing sequence, and
$\textbf{J}^{\left(\ell\right)}$ converges most rapidly with
$\textbf{J}^{\left(0\right)}= \textbf{L}$.
Moreover, with
$\textbf{J}^{\left(0\right)}\succeq \textbf{U}$,
$\textbf{J}^{\left(\ell\right)}$ is a monotonic decreasing  sequence, and
$\textbf{J}^{\left(\ell\right)}$ converges most rapidly
with
 $\textbf{J}^{\left(0\right)}=
\textbf{U}$.
\end{theorem}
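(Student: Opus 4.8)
The plan is to prove the two monotonicity claims directly from the order-preservation and boundedness properties of $\mathcal F$, and then upgrade them to the ``fastest convergence'' statements by a L\"owner-order comparison across different initializations combined with the monotonicity of the norm (Definition \ref{monotone}).

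For the increasing case I would start from $\textbf{0}\preceq \textbf{J}^{\left(0\right)}\preceq \textbf{L}$. By Proposition \ref{P_FUN}, P \ref{P_FUN}.3 we have $\textbf{J}^{\left(1\right)}=\mathcal F\left(\textbf{J}^{\left(0\right)}\right)\succeq \textbf{L}\succeq \textbf{J}^{\left(0\right)}$, which gives the base case $\textbf{J}^{\left(1\right)}\succeq \textbf{J}^{\left(0\right)}$. Feeding this into the monotonicity property P \ref{P_FUN}.1 and iterating, a straightforward induction yields $\textbf{J}^{\left(\ell+1\right)}=\mathcal F\left(\textbf{J}^{\left(\ell\right)}\right)\succeq \mathcal F\left(\textbf{J}^{\left(\ell-1\right)}\right)=\textbf{J}^{\left(\ell\right)}$ for all $\ell$, so the sequence is monotonically increasing. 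The analogous argument handles the decreasing case: from $\textbf{J}^{\left(0\right)}\succeq \textbf{U}$, P \ref{P_FUN}.3 gives $\textbf{J}^{\left(1\right)}\preceq \textbf{U}\preceq \textbf{J}^{\left(0\right)}$, and P \ref{P_FUN}.1 propagates $\textbf{J}^{\left(\ell+1\right)}\preceq \textbf{J}^{\left(\ell\right)}$ by the same induction.

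To make ``converges most rapidly'' precise, I would first trap each such sequence on one side of the fixed point $\textbf{J}^{\ast}$. Since $\textbf{J}^{\ast}\in\left[\textbf{L},\textbf{U}\right]$ by Theorem \ref{unique}, in the increasing case $\textbf{J}^{\left(0\right)}\preceq \textbf{L}\preceq \textbf{J}^{\ast}$; applying P \ref{P_FUN}.1 and using $\mathcal F\left(\textbf{J}^{\ast}\right)=\textbf{J}^{\ast}$ gives $\textbf{J}^{\left(\ell\right)}\preceq \textbf{J}^{\ast}$ for all $\ell$, so the error $\textbf{J}^{\ast}-\textbf{J}^{\left(\ell\right)}$ is positive semidefinite. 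Now compare two admissible initializations $\textbf{J}^{\left(0\right)}_a\preceq \textbf{J}^{\left(0\right)}_b\preceq \textbf{L}$: repeated application of P \ref{P_FUN}.1 keeps the iterates ordered, $\textbf{J}^{\left(\ell\right)}_a\preceq \textbf{J}^{\left(\ell\right)}_b\preceq \textbf{J}^{\ast}$, whence $\textbf{0}\preceq \textbf{J}^{\ast}-\textbf{J}^{\left(\ell\right)}_b\preceq \textbf{J}^{\ast}-\textbf{J}^{\left(\ell\right)}_a$. By the monotonicity of the norm this gives $\|\textbf{J}^{\ast}-\textbf{J}^{\left(\ell\right)}_b\|\leq \|\textbf{J}^{\ast}-\textbf{J}^{\left(\ell\right)}_a\|$ at every step, so the largest admissible start $\textbf{J}^{\left(0\right)}=\textbf{L}$ minimizes the error norm at each iteration. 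The decreasing case is symmetric: each iterate obeys $\textbf{J}^{\left(\ell\right)}\succeq \textbf{J}^{\ast}$, ordered starts $\textbf{U}\preceq \textbf{J}^{\left(0\right)}_b\preceq \textbf{J}^{\left(0\right)}_a$ give $\textbf{J}^{\ast}\preceq \textbf{J}^{\left(\ell\right)}_b\preceq \textbf{J}^{\left(\ell\right)}_a$, and monotonicity of the norm shows the smallest admissible start $\textbf{U}$ yields the smallest error at each step.

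The induction for monotonicity is routine; the only part requiring care is the ``most rapidly'' claim, where I expect the main obstacle to be pinning down its precise meaning. The clean resolution is to read it as a per-iteration comparison of error norms across initializations, which forces me to (i) trap each sequence on one side of $\textbf{J}^{\ast}$ so that the error matrix is sign-definite, and (ii) invoke the monotonicity of the norm to convert L\"owner-order domination of the iterates into domination of the scalar error norms. Without the one-sided trapping the difference $\textbf{J}^{\ast}-\textbf{J}^{\left(\ell\right)}$ need not be sign-definite and the monotone-norm comparison would fail, so establishing $\textbf{J}^{\left(\ell\right)}\preceq \textbf{J}^{\ast}$ (respectively $\succeq$) is the load-bearing step.
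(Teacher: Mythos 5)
Your proposal is correct and follows essentially the same route as the paper's own proof: the two monotonicity claims come from Proposition \ref{P_FUN}, P \ref{P_FUN}.3 (base case $\textbf{J}^{\left(1\right)}\succeq \textbf{L}\succeq \textbf{J}^{\left(0\right)}$, respectively $\textbf{J}^{\left(1\right)}\preceq \textbf{U}\preceq \textbf{J}^{\left(0\right)}$) plus P \ref{P_FUN}.1 and induction, and the ``most rapidly'' claim comes from the same observation that ordered initializations yield iterates that remain ordered under P \ref{P_FUN}.1. Your two refinements --- explicitly trapping the iterates on one side of $\textbf{J}^{\ast}$ (via $\mathcal F\left(\textbf{J}^{\ast}\right)=\textbf{J}^{\ast}$ and P \ref{P_FUN}.1) and then converting the L\"owner-order domination into a per-iteration error-norm comparison using the monotone norm --- only make precise what the paper leaves implicit, since the paper compares iterates in the L\"owner order and asserts boundedness by $\textbf{J}^{\ast}$ without further elaboration.
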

\begin{proof}
Following Proposition \ref{P_FUN}, P \ref{P_FUN}.3, it can be  verified that for $\textbf{0}\preceq \textbf{J}^{\left(0\right)}\preceq \textbf{L}$, we have
$\textbf{J}^{\left(1\right)}\succeq \textbf{J}^{\left(0\right)}$.
Then, according to Proposition \ref{P_FUN}, P \ref{P_FUN}.1,
and by induction, this relationship can be extended to
${\textbf{J}}^{\left(\ell\right)}\succeq \ldots {\textbf{J}}^{\left(1\right)}
\succeq {\textbf{J}}^{\left(0\right)}$,
which states that ${\textbf{J}}^{\left(\ell\right)}$ is a monotonic increasing sequence.
Now,
suppose that there are two sequences $\textbf{J}^{\left(\ell\right)}$ and
$\widetilde{\textbf{J}}^{\left(\ell\right)}$ that are
started with different initial values
 $\textbf{0}\preceq \textbf{J}^{\left(0\right)}\prec \textbf{L} $
and $\textbf{0}\preceq
\widetilde{\textbf{J}}^{\left(0\right)} \prec \textbf{L} $, respectively.
Then these two sequences are monotonically increasing and  bounded by $\textbf{J}^{\ast}$.
To prove that $\textbf{J}^{\left(0\right)}=\textbf{L}$ leads to the fastest convergence,
it is sufficient to prove that
$\textbf{J}^{\left(\ell\right)}  \succ
\widetilde{\textbf{J}}^{\left(\ell\right)}$ for $\ell=0, 1 \ldots$.
First, note that
$\textbf{J}^{\left(0\right)}  \succ
\widetilde{\textbf{J}}^{\left(0\right)}$.
Assume $\textbf{J}^{\left(n\right)}  \succ
\widetilde{\textbf{J}}^{\left(n\right)}$ for some $n\geq 0$.
According to Proposition \ref{P_FUN}, P \ref{P_FUN}.1,
we have $\mathcal{F}\left(\textbf{J}^{\left(n\right)}\right) \succeq \mathcal{F}\left(\widetilde{\textbf{J}}^{\left(n\right)}\right)$,
or equivalently
$\textbf{J}^{\left(n+1\right)}  \succeq
\widetilde{\textbf{J}}^{\left(n+1\right)}$.
Therefore, by induction,
 we have proven that,
with $\textbf{J}^{\left(0\right)}= \textbf{L}$,
$\textbf{J}^{\left(\ell\right)}$ converges more rapidly than  with any other initial value
$\textbf{0}\preceq \textbf{J}^{\left(0\right)}\prec \textbf{L} $.

With similar logic,  we can show that,
 with
$\textbf{J}^{\left(0\right)}\succeq \textbf{U}$,
$\textbf{J}^{\left(\ell\right)}$ is a monotonic decreasing  sequence; and, with
$\textbf{J}^{\left(0\right)}=
\textbf{U}$,
$\textbf{J}^{\left(\ell\right)}$ converges more rapidly than that with any other initial value
$\textbf{J}^{\left(0\right)}\succ \textbf{U} $.
\end{proof}

Notice that it is a common practice in the Gaussian BP literature that the initial information matrix (or inverse variance  for the scalar case) is set  to be $\textbf{0}$, i.e.,
$\textbf{J}^{\left(0\right)}_{f_n\to i} =\mathbf {0}$
\citep{DiagnalDominant, WalkSum1}.
Theorem \ref{Coro1} reveals that there is a better
choice  to guarantee faster convergence.

\subsection{Convergence of Message Mean Vector}\label{B}

According to Theorems \ref{guarantee} and \ref{RateCov},
as long as we choose $\textbf{J}_{f_k\to j}^{\left(0\right)}\succeq \textbf{0}$
for all $j\in \mathcal V$ and $f_k\in \mathcal B\left(j\right)$,
the distance between $\textbf{J}_{f_k\to j}^{\left(\ell\right)}$ and $\textbf{J}_{f_k\to j}^{\ast}$ decreases doubly exponentially fast before $\textbf{J}_{f_k\to j}^{\left(\ell\right)}$ enters  $\textbf{J}_{f_k\to j}^{\ast}$'s neighborhood, which can be chosen to be arbitrarily small.
Furthermore, according to (\ref{v2fV}),
$
\left[\textbf{J}^{\left(\ell\right)}_{j \to f_n}\right]^{-1}$ also converges to a p.d. matrix once
$\textbf{J}_{f_k\to j}^{\left(\ell\right)}$ converges, and the converged value for $
\left[\textbf{J}^{\left(\ell\right)}_{j \to f_n}\right]^{-1}$ is denoted by
$
\left[\textbf{J}^{\ast}_{j \to f_n}\right]^{-1}$.
Then for arbitrary initial value
$\textbf{v}^{\left(0\right)}_{f_k\to j}$, the evolution of
$\textbf{v}^{\left(\ell\right)}_{j\to f_n}$ in
(\ref{v2fm}) can be written in terms of the converged message information matrices, which is
\begin{equation}\label{v2fm36}
\textbf{v}^{\left(\ell\right)}_{j\to f_n}=
\left[\textbf{J}^{\ast}_{j\to f_n}\right]^{-1}
\sum_{f_k\in\B\left(j\right)\setminus f_n}
\textbf{J}_{f_k\to j}^{\ast}
\textbf{v}^{\left(\ell-1\right)}_{f_k\to j}.
\end{equation}
Using (\ref{f2vmm}), and
replacing indices $j$, $i$, $n$ with $z$, $j$, $k$ respectively,
$\textbf{v}^{\left(\ell-1\right)}_{f_k\to j}$  is given by
\begin{equation}\label{f2vmm37}
\begin{split}
\textbf{v}^{\left(\ell-1\right)}_{f_k\to j}
=
[\textbf{J}_{f_k\to j}^{\ast}]^{-1}
\textbf{A}_{k,j}^T
\Bigg[\underbrace{\textbf{R}_k
+
\sum_{z\in\B\left(f_k\right)\setminus j} \textbf{A}_{k,z}
\left[\textbf{J}^{\ast}_{z\to f_k}\right]^{-1}\textbf{A}_{k,z}^T
}_{\triangleq\textbf{M}_{k,j}} \Bigg]^{-1}
\left(\textbf{y}_k-\sum_{z\in\B\left(f_k\right)\setminus j} \textbf{A}_{k,z}
\textbf{v}^{\left(\ell-1\right)}_{z\to f_k}\right).
\end{split}
\end{equation}
Putting (\ref{f2vmm37}) into
 (\ref{v2fm36}), we have
\begin{equation}\label{v2fm3}
\textbf{v}^{\left(\ell\right)}_{j\to f_n}=
\textbf{b}_{j \to f_n}-
\sum_{f_k\in\B\left(j\right)\setminus f_n}
\sum_{z\in\B\left(f_k\right)\setminus j}
[\textbf{J}^{\ast}_{j\to f_n}]^{-1}
\textbf{A}_{k,j}^T\textbf{M}_{k,j}^{-1}
\textbf{A}_{k,z}
\textbf{v}^{\left(\ell-1\right)}_{z\to f_k},
\end{equation}
where $\textbf{b}_{j\to f_n}=
[\textbf{J}^{\ast}_{j\to f_n}]^{-1}
\sum_{f_k\in\B\left(j\right)\setminus f_n}
\textbf{A}_{k,j}^T\textbf{M}_{k,j}^{-1}
\textbf{y}_k $.
The above equation can be further written in compact form as
\begin{equation}\label{v2fm4}
\textbf{v}^{\left(\ell\right)}_{j\to f_n}=
\textbf{b}_{j \to f_n} -
\textbf{Q}_{j \to f_n}
\textbf{v}^{\left(\ell-1\right)},\nonumber
\end{equation}
with the column vector $\textbf{v}^{\left(\ell-1\right)}$
 containing  $\textbf{v}^{\left(\ell-1\right)}_{z\to f_k}$ for all $z\in \mathcal V$ and $f_k\in \mathcal B\left(z\right)$ as subvector
with ascending index first on $z$ and then on $k$.
The matrix   $\textbf{Q}_{j \to f_n}$
is a row block matrix with component block
$[\textbf{J}^{\ast}_{j\to f_n}]^{-1}
\textbf{A}_{k,j}^T\textbf{M}_{k,j}^{-1}
\textbf{A}_{k,z}$
if $f_k\in\B\left(j\right)\setminus f_n$ and ${z\in\B\left(f_k\right)\setminus j}$, and $\textbf{0}$ otherwise.
Let $\textbf{Q}$ be the block matrix that stacks  $\textbf{Q}_{j \to f_n}$ with the order first on $j$ and then on $ n$,
and  $\textbf{b}$ be the vector  containing
$\textbf{b}_{j \to f_n}$ with the same stacking order as $\textbf{Q}_{j \to f_n}$. We have
\begin{equation}\label{meanvectorupdate}
\textbf{v}^{\left(\ell\right)} =
 -\textbf{Q} \textbf{v}^{\left(\ell-1\right)} + \textbf{b}, \quad \ell\geq 1,2,\ldots.
\end{equation}
It is known that for arbitrary initial value $\textbf{v}^{\left(0\right)}$, $\textbf{v}^{\left(\ell\right)}$ converges
if and only if the spectral radius $\rho\left(\textbf{Q}\right)<1$ \citep[pp. 280]{Demmel}.
Since the elements of $\textbf{v}^{\left(0\right)}$, i.e., $\textbf v_{j\to f_n}^{(0)}$, depends on $\textbf v_{f_k\to j}^{(0)}$, we can choose arbitrary $\textbf v_{f_k\to j}^{(0)}$.
Furthermore, as $\textbf{v}^{\left(\ell\right)}$ depends on the convergence of  $\textbf{J}^{\left(\ell\right)}$, we have the following result.
\begin{theorem} \label{meanvector}
The vector sequence
$\left\{\textbf{v}^{\left(\ell\right)}\right\}_{l=1, 2,\ldots}$ defined by (\ref{meanvectorupdate}) converges to a unique value
under any initial value $\left\{
\textbf v_{f_k\to j}^{(0)}
\right\}_{k\in \mathcal V, j\in \B(f_k)}$ and initial message information matrix $\textbf{J}^{\left(0\right)}\succeq \mathbf 0$ if and only if $\rho\left(\mathbf {Q}\right)<1$.
\end{theorem}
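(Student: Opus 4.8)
The plan is to prove both implications by comparing the \emph{actual} time-varying mean dynamics with the autonomous recursion~(\ref{meanvectorupdate}) governed by the fixed-point information matrices. The key observation is that combining the time-varying versions of~(\ref{v2fm}) and~(\ref{f2vmm}) yields a recursion of the form $\textbf{v}^{(\ell)} = -\textbf{Q}^{(\ell)}\textbf{v}^{(\ell-1)} + \textbf{b}^{(\ell)}$, where $\textbf{Q}^{(\ell)}$ and $\textbf{b}^{(\ell)}$ are assembled from $\textbf{J}^{(\ell)}$ and $\textbf{J}^{(\ell-1)}$ exactly as $\textbf{Q}$ and $\textbf{b}$ are assembled from $\textbf{J}^{\ast}$. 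Since $\textbf{J}^{(0)}\succeq\textbf{0}$, Theorem~\ref{guarantee} gives $\textbf{J}^{(\ell)}\to\textbf{J}^{\ast}$, and because $\textbf{J}^{(\ell)}$ stays in the compact set $[\textbf{L},\textbf{U}]$ of positive definite matrices for $\ell\geq 1$ (Proposition~\ref{P_FUN}, P~\ref{P_FUN}.3), every inversion appearing in $\textbf{M}_{k,j}$, $[\textbf{J}^{\ast}_{j\to f_n}]^{-1}$, and in~(\ref{v2fV}) is continuous on that domain, so $\textbf{Q}^{(\ell)}\to\textbf{Q}$ and $\textbf{b}^{(\ell)}\to\textbf{b}$.

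For necessity I would use a short exact-initialization trick. Suppose the mean sequence converges for every admissible initialization. Initialize the information matrices at the fixed point, $\textbf{J}^{(0)}_{f_k\to j}=\textbf{J}^{\ast}_{f_k\to j}$, which is legitimate since $\textbf{J}^{\ast}\succ\textbf{0}$. Because $\textbf{J}^{\ast}$ is the fixed point of $\mathcal{F}$ (Theorem~\ref{unique}), one has $\textbf{J}^{(\ell)}=\textbf{J}^{\ast}$ for all $\ell$, so $\textbf{Q}^{(\ell)}=\textbf{Q}$ and $\textbf{b}^{(\ell)}=\textbf{b}$ identically, and the mean recursion is \emph{exactly} the constant-coefficient system~(\ref{meanvectorupdate}). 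Convergence for arbitrary $\textbf{v}^{(0)}$ of an autonomous iteration then forces $\rho(\textbf{Q})<1$ by the standard result \citep[pp. 280]{Demmel}.

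For sufficiency, assume $\rho(\textbf{Q})<1$. Then $-1$ is not an eigenvalue of $\textbf{Q}$, so $\textbf{I}+\textbf{Q}$ is invertible and there is a unique fixed point $\textbf{v}^{\ast}=(\textbf{I}+\textbf{Q})^{-1}\textbf{b}$. Writing the error $\textbf{e}^{(\ell)}=\textbf{v}^{(\ell)}-\textbf{v}^{\ast}$ and subtracting $\textbf{v}^{\ast}=-\textbf{Q}\textbf{v}^{\ast}+\textbf{b}$ from the time-varying recursion gives $\textbf{e}^{(\ell)}=-\textbf{Q}^{(\ell)}\textbf{e}^{(\ell-1)}+\boldsymbol{\epsilon}^{(\ell)}$, where $\boldsymbol{\epsilon}^{(\ell)}=(\textbf{Q}-\textbf{Q}^{(\ell)})\textbf{v}^{\ast}+(\textbf{b}^{(\ell)}-\textbf{b})\to\textbf{0}$. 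Fixing $\delta>0$ with $\rho(\textbf{Q})+\delta<1$, choose an induced norm $\|\cdot\|$ with $\|\textbf{Q}\|\leq\rho(\textbf{Q})+\delta<1$; by $\textbf{Q}^{(\ell)}\to\textbf{Q}$ there is $L$ with $\|\textbf{Q}^{(\ell)}\|\leq\gamma<1$ for $\ell\geq L$. Unrolling $\|\textbf{e}^{(\ell)}\|\leq\gamma\|\textbf{e}^{(\ell-1)}\|+\|\boldsymbol{\epsilon}^{(\ell)}\|$ (a geometric factor convolved with a null sequence) yields $\|\textbf{e}^{(\ell)}\|\to 0$, so $\textbf{v}^{(\ell)}\to\textbf{v}^{\ast}$ for every initialization.

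The main obstacle is the sufficiency direction: one must argue that the asymptotically autonomous recursion inherits the convergence of its limiting constant-coefficient system. The delicate points are justifying $\textbf{Q}^{(\ell)}\to\textbf{Q}$ through continuity of the inversions on the compact positive definite set $[\textbf{L},\textbf{U}]$ (using Proposition~\ref{P_FUN}, P~\ref{P_FUN}.3 together with Theorem~\ref{guarantee}), and absorbing the vanishing perturbation $\boldsymbol{\epsilon}^{(\ell)}$; the doubly exponential rate of Theorem~\ref{RateCov} makes the latter comfortable, though mere convergence of the coefficients suffices. Necessity, by contrast, is immediate once the fixed-point initialization collapses the dynamics to~(\ref{meanvectorupdate}).
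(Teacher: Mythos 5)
Your proposal is correct, and it is in fact more careful than the paper's own treatment. The paper's argument is shorter: having established convergence of the information matrices (Theorems \ref{guarantee} and \ref{RateCov}), it simply \emph{rewrites} the mean updates (\ref{v2fm}) and (\ref{f2vmm}) with the limiting matrices $\textbf{J}^{\ast}$ substituted in place of $\textbf{J}^{(\ell)}$, obtaining the constant-coefficient recursion (\ref{meanvectorupdate}), and then invokes the standard linear-iteration result \citep[pp. 280]{Demmel} that such a recursion converges for every initialization if and only if $\rho(\textbf{Q})<1$; the theorem statement itself refers to the sequence ``defined by (\ref{meanvectorupdate}),'' so under that literal reading the citation completes the proof. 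What the paper does not do is address the transient regime in which the actual BP coefficients $\textbf{Q}^{(\ell)},\textbf{b}^{(\ell)}$ (built from $\textbf{J}^{(\ell)}$, $\textbf{J}^{(\ell-1)}$) differ from $\textbf{Q},\textbf{b}$. Your proof closes exactly that gap: sufficiency is handled by treating the true mean iteration as an asymptotically autonomous linear system and absorbing the vanishing perturbation $\boldsymbol{\epsilon}^{(\ell)}=(\textbf{Q}-\textbf{Q}^{(\ell)})\textbf{v}^{\ast}+(\textbf{b}^{(\ell)}-\textbf{b})$ with a norm satisfying $\|\textbf{Q}\|\leq\rho(\textbf{Q})+\delta<1$, while necessity follows from your exact-initialization trick $\textbf{J}^{(0)}=\textbf{J}^{\ast}$ (legitimate since $\textbf{J}^{\ast}\succ\textbf{0}$ and hence admissible), which collapses the dynamics to (\ref{meanvectorupdate}) identically by the fixed-point property of Theorem \ref{unique}. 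Both steps are sound: the continuity of the coefficient maps on $\left[\textbf{L},\textbf{U}\right]$ follows from Proposition \ref{P_FUN}, P \ref{P_FUN}.3 as you say, and the convolution of a geometric factor with a null sequence indeed vanishes. In short, the paper's route buys brevity at the cost of identifying the BP mean dynamics with their limiting autonomous form, whereas your route proves the statement for the genuine time-varying iterates, which is the stronger and more honest claim.
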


{{
The row block matrix $\textbf Q_j$, a row block of $\textbf Q$, contains only block entries $\textbf 0$ and $\textbf Q_{j\to f_n}$.
When the observation model (1)  reduces  to the pairwise model, where only two unknown variables are involved in each local observation, it can be shown that $\textbf Q_j$ and $\textbf Q_i$ are orthogonal if $i\neq j$. A distributed convergence condition is obtained utilizing this orthogonal property  in \citet{DconveCon}.
However, for the more general case studied in this paper,  properties of $\textbf Q_j$ and $\textbf Q$ need to be further  exploited to show when $\rho(\textbf Q)<1$ is satisfied. }

In the sequel, we will show that
$\rho \left(\textbf{Q}\right) <1$ is satisfied for a single loop factor graph with multiple chains/trees (an example is shown in Fig. 2), thus Gaussian BP converges  in such a topology.
Although \citet{Weiss2000}
shows the convergence of Gaussian BP on the MRF with a single loop, the analysis  cannot be applied here since the local observations model (\ref{linear})  is different from the pairwise model in \citep{Weiss2000}.

\begin{theorem}\label{ref}
For any factor graph that is
 the union of a single loop and a forest,
with arbitrary positive semi-definite initial information matrix, i.e., $\textbf{J}^{\left(0\right)}_{f_n\rightarrow i}\succeq \textbf{0}$ for all $i\in \mathcal V$ and $f_n\in \mathcal B\left(i\right)$, the message information matrix $\textbf{J}^{\left(\ell\right)}_{f_n\rightarrow i}$ and mean vector
$\textbf{v}^{\left(\ell\right)}_{i\rightarrow f_n}$ is guaranteed to
  converge to their corresponding unique points.
\end{theorem}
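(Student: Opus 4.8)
The plan is to split the statement into its two assertions and dispatch the information-matrix part at once. Convergence of $\textbf{J}^{\left(\ell\right)}_{f_n\to i}$ to a unique positive definite limit $\textbf{J}^{\ast}_{f_n\to i}$ for arbitrary p.s.d. initialization is exactly Theorem~\ref{guarantee}, which holds for \emph{any} factor graph and in particular for the union of a single loop and a forest, so nothing extra is required there. For the mean vectors, Theorem~\ref{meanvector} reduces everything to verifying the single scalar condition $\rho\left(\textbf{Q}\right)<1$ for the matrix $\textbf{Q}$ of (\ref{meanvectorupdate}). Thus the whole theorem comes down to bounding the spectral radius of $\textbf{Q}$ under this special topology.

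Second, I would exploit the combinatorial structure of $\textbf{Q}$. Its block $\textbf{Q}_{\left(j\to f_n\right),\left(z\to f_k\right)}=[\textbf{J}^{\ast}_{j\to f_n}]^{-1}\textbf{A}_{k,j}^T\textbf{M}_{k,j}^{-1}\textbf{A}_{k,z}$ is nonzero only when $f_k\in\mathcal B\left(j\right)\setminus f_n$ and $z\in\mathcal B\left(f_k\right)\setminus j$, i.e.\ when the directed edge $z\to f_k$ feeds $j\to f_n$ along a \emph{non-backtracking} walk of the factor graph. On a forest no such walk can close up, so ordering tree edges from the leaves inward makes the corresponding diagonal blocks of $\textbf{Q}$ strictly triangular, hence nilpotent (a leaf-outgoing edge has an all-zero row because $\mathcal B\left(z\right)\setminus f_k=\varnothing$). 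The only recurrent directed edges lie on the loop, and on a simple cycle a non-backtracking walk proceeds in only one of the two orientations. Ordering the edges as \emph{tree edges oriented toward the loop}, then \emph{loop edges}, then \emph{tree edges oriented away from the loop} makes $\textbf{Q}$ block lower-triangular with nilpotent first and last diagonal blocks and a middle block that splits into two block-cyclic chains (clockwise and counterclockwise). Hence $\rho\left(\textbf{Q}\right)$ equals the spectral radius of the loop block, namely $\max\{\rho\left(\textbf{P}_{\mathrm{cw}}\right)^{1/L},\rho\left(\textbf{P}_{\mathrm{ccw}}\right)^{1/L}\}$, where $L$ is the loop length and $\textbf{P}_{\mathrm{cw}},\textbf{P}_{\mathrm{ccw}}$ are the ordered products of the transfer blocks once around the loop.

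Third, and this is the crux, I would show each loop product is a contraction. Label the clockwise loop edges $e_1,\dots,e_L$ so that $e_{t-1}=\left(j_{t-1}\to f_{k_t}\right)$ feeds $e_t=\left(j_t\to f_{n_t}\right)$, where $f_{k_t}=f_{n_{t-1}}$ is the loop factor joining $j_{t-1}$ and $j_t$; the transfer block is $\textbf{B}_t=\textbf{D}_t^{-1}\textbf{A}_{k_t,j_t}^T\textbf{M}_{k_t,j_t}^{-1}\textbf{A}_{k_t,j_{t-1}}$ with $\textbf{D}_t\triangleq\textbf{J}^{\ast}_{j_t\to f_{n_t}}$ and $\textbf{D}_{t-1}=\textbf{J}^{\ast}_{j_{t-1}\to f_{k_t}}$. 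Conjugating by $\textbf{D}_t^{1/2}$, set $\widehat{\textbf{B}}_t=\textbf{D}_t^{1/2}\textbf{B}_t\textbf{D}_{t-1}^{-1/2}=\textbf{D}_t^{-1/2}\textbf{A}_{k_t,j_t}^T\textbf{M}_{k_t,j_t}^{-1}\textbf{A}_{k_t,j_{t-1}}\textbf{D}_{t-1}^{-1/2}$. Two strict fixed-point inequalities then force $\widehat{\textbf{B}}_t\widehat{\textbf{B}}_t^T\prec\textbf{I}$: from the factor update (\ref{f2vmm37}) together with $\textbf{R}_{k_t}\succ\textbf{0}$ we get $\textbf{A}_{k_t,j_{t-1}}\textbf{D}_{t-1}^{-1}\textbf{A}_{k_t,j_{t-1}}^T\prec\textbf{M}_{k_t,j_t}$, whence $\textbf{M}_{k_t,j_t}^{-1}\textbf{A}_{k_t,j_{t-1}}\textbf{D}_{t-1}^{-1}\textbf{A}_{k_t,j_{t-1}}^T\textbf{M}_{k_t,j_t}^{-1}\prec\textbf{M}_{k_t,j_t}^{-1}$; and from the node update (\ref{v2fV}) with $\textbf{W}_{j_t}^{-1}\succ\textbf{0}$ we get $\textbf{A}_{k_t,j_t}^T\textbf{M}_{k_t,j_t}^{-1}\textbf{A}_{k_t,j_t}\prec\textbf{D}_t$. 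Chaining these yields $\widehat{\textbf{B}}_t\widehat{\textbf{B}}_t^T\prec\textbf{D}_t^{-1/2}\textbf{A}_{k_t,j_t}^T\textbf{M}_{k_t,j_t}^{-1}\textbf{A}_{k_t,j_t}\textbf{D}_t^{-1/2}\prec\textbf{I}$, so $\|\widehat{\textbf{B}}_t\|_2<1$. Because the conjugations telescope around the loop ($\textbf{D}_0=\textbf{D}_L$), $\textbf{P}_{\mathrm{cw}}$ is similar to $\prod_t\widehat{\textbf{B}}_t$, giving $\rho\left(\textbf{P}_{\mathrm{cw}}\right)=\rho\left(\prod_t\widehat{\textbf{B}}_t\right)\le\prod_t\|\widehat{\textbf{B}}_t\|_2<1$; the counterclockwise product is identical. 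Hence $\rho\left(\textbf{Q}\right)<1$ and the means converge by Theorem~\ref{meanvector}.

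I expect the main obstacle to be the bookkeeping in the second and third steps: correctly matching the loop-factor and node indices so that the conjugating weight $\textbf{D}_{t-1}$ is \emph{precisely} the information matrix appearing inside $\textbf{M}_{k_t,j_t}$ (this alignment is what makes the factor inequality usable), and rigorously justifying the block-triangular/block-cyclic reduction of $\textbf{Q}$—in particular checking that each loop edge receives contributions only from the preceding loop edge and from tree edges oriented toward the loop, so that no ``outgoing'' tree edge spoils the ordering. Once this alignment is pinned down, the two strict inequalities from $\textbf{R}_n\succ\textbf{0}$ and $\textbf{W}_j^{-1}\succ\textbf{0}$ do the real work with essentially no further computation.
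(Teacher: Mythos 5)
Your proposal is correct, and its computational heart coincides with the paper's: the two strict inequalities you chain (from $\textbf{R}_{k}\succ\textbf{0}$ via the factor update, and from $\textbf{W}_{j}^{-1}\succ\textbf{0}$ via the node update at the fixed point) to get $\widehat{\textbf{B}}_t\widehat{\textbf{B}}_t^T\prec\textbf{I}$ are exactly the chain the paper runs in (\ref{rowproduct})--(\ref{57}) to show $\textbf{Q}_{j\to \overline{f}_{n}}\textbf{Q}_{j\to \overline{f}_{n}}^T\prec\textbf{I}$, including the recognition of $\textbf{A}_{k,j}^T\textbf{M}_{k,j}^{-1}\textbf{A}_{k,j}$ as the fixed-point message $\textbf{J}^{\ast}_{f_k\to j}$ dominated by $\textbf{J}^{\ast}_{j\to f_n}$. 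Where you genuinely diverge is in the scaffolding around this bound. First, the trees: the paper integrates the (already converged) tree-to-loop messages into modified factors $\overline{f}_k$, producing an equivalent factor graph that is a pure single loop, and only then sets up the mean recursion; you instead keep the full edge set and argue via the non-backtracking dependency structure that, under the ordering (tree edges toward the loop, loop edges, tree edges away), $\textbf{Q}$ is block lower triangular with nilpotent tree blocks--a valid alternative that avoids constructing the equivalent graph but requires the bookkeeping you flag. Second, the spectral step: the paper observes that in the pure loop each row of the symmetrized $\textbf{Q}$ has a single nonzero block in non-overlapping column positions, so $\textbf{Q}\textbf{Q}^T$ is block diagonal and $\rho(\textbf{Q})\le\sqrt{\rho(\textbf{Q}\textbf{Q}^T)}<1$ follows at once; you instead identify the loop block as block-cyclic, giving $\rho(\textbf{Q})=\max\left\{\rho\left(\textbf{P}_{\mathrm{cw}}\right),\rho\left(\textbf{P}_{\mathrm{ccw}}\right)\right\}^{1/L}$ and then bound the cyclic product by the product of per-edge norms via the telescoping similarity. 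Your version needs the extra (standard) lemma that a block-cyclic matrix has spectral radius equal to the $L$-th root of that of its cycle product, but in exchange it yields a quantitative handle on the rate through $\prod_t\|\widehat{\textbf{B}}_t\|_2$, whereas the paper's $\textbf{Q}\textbf{Q}^T$ trick is shorter and dispatches the spectral radius with no cyclic-product argument.
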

\begin{proof}
{In this proof,  Fig.~2 is being used as a reference throughout.}
For a single loop factor graph with chains/trees as shown in Fig.~2 (a), there are two kinds of nodes.
One  is the factors/variables in the loop, and they are denoted by
$f_n/\textbf{x}_j$.
The other  is the factors/variables on the chains/trees but outside the loop, denoted as
$\widetilde{f}_k/\widetilde{\textbf{z}}_i$.
Then message from a variable node to a neighboring factor node on the graph can be categorized into three groups:

\noindent 1) message on a tree/chain passing towards the   loop, e.g.,
$m_{\widetilde{z}\to {f}_k}^{\ast}\left(\widetilde{\textbf{x}}_z\right)$
and
$m_{\widetilde{s}\to \widetilde{f}_k}^{\ast}\left(\widetilde{\textbf{x}}_s\right)$
;

\noindent 2) message on a tree/chain passing away from the loop, e.g.,
$m_{j\to \widetilde{f}_k}^{\left(\ell\right)}\left({\textbf{x}}_j\right)$,
$m_{\widetilde{s}\to \widetilde{f}_s}^{\left(\ell\right)}\left({\textbf{x}}_s\right)$ and
$m_{\widetilde{z}\to \widetilde{f}_z}^{\left(\ell\right)}\left({\textbf{x}}_z\right)$;

\noindent 3) message in the loop, e.g.,
$m_{j\to f_n}^{\left(\ell\right)}\left(\textbf{x}_j\right)$,
$m_{z\to {f}_k}^{\left(\ell\right)}\left({\textbf{x}}_z\right)$
and
$m_{i\to {f}_n}^{\left(\ell\right)}\left({\textbf{x}}_i\right)$.

\begin{figure}
  \centering
\mbox{\subfigure[]{\epsfig{figure=./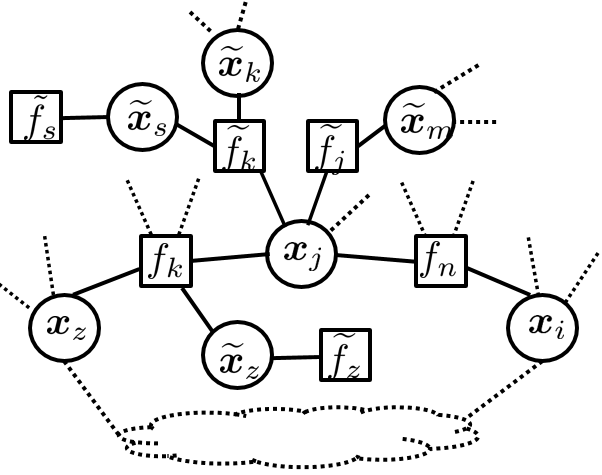,width=2.8in}}\label{SingleLoopA} }\\
\mbox{ \subfigure[]{\epsfig{figure=./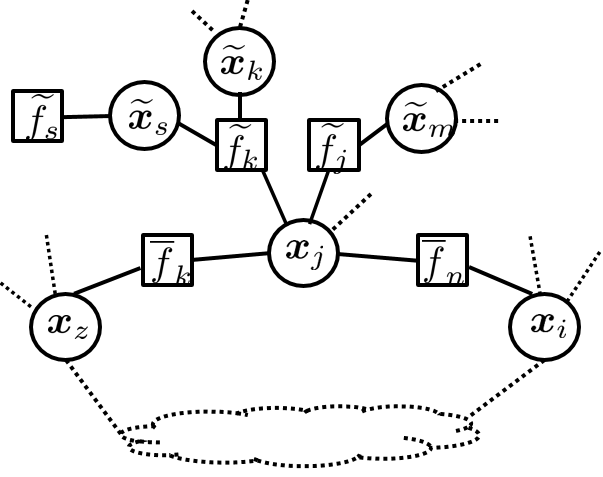,width=2.8in}}\label{SingleLoopB} }\\
\caption{(a) An example of factor graph with a single loop and chains/trees topology, where the dashed line indicates possible chains/trees;
(b) The equivalent factor graph of Fig 2 (a) with new factor functions that do not have neighboring variable nodes except those in the loop.}
\end{figure}

According to (\ref{BPv2f1}), computation of  the messages in the first group does not depend on messages in the loop and is thus convergence guaranteed.
Therefore, the message iteration number is replaced with a $\ast$ to denote the converged message.
Also,  from the definition of message computation in
(\ref{BPv2f1}),
 if messages in the third group converge, the second group messages  should also converge.
Therefore, we next focus on showing the convergence of messages in the third group.

For a factor node $f_k$ in the loop with $\textbf x_z$ and $\textbf x_j$ being its two neighboring variable nodes in the loop and $\widetilde{\textbf x}_z$ being its neighboring variable node outside the loop, according to the definition of message computation in (\ref{BPf2v1}), we have
\begin{equation}\label{BPf2v41}
\begin{split}
m^{\left(\ell\right)}_{f_k \to j}\left(\textbf{x}_j\right)
&=  \int \int
f_k \times
m^{ \left(\ell\right)}_{z \to f_k}\left(\textbf x_z\right)
\prod_{\widetilde{z}\in\B\left(f_k\right)\setminus j} m^{ \ast}_{\widetilde{z} \to f_k}\left(
\widetilde{\textbf{x}}_z\right)
\,\mathrm{d}\left\{
\widetilde{\textbf{x}}_z\right\}_{\widetilde{z}\in\B\left(f_k\right)\setminus j}
\,\mathrm{d}\textbf{x}_z,\\
&
=
\int
m^{ \left(\ell\right)}_{z \to f_k}\left(\textbf x_z\right)
\left[
\int
f_k \times
\prod_{\widetilde{z}\in\B\left(f_k\right)\setminus j} m^{ \ast}_{\widetilde{z} \to f_k}\left(
\widetilde{\textbf{x}}_z\right)
\,\mathrm{d}\left\{
\widetilde{\textbf{x}}_z\right\}_{\widetilde{z}\in\B\left(f_k\right)\setminus j}\right]
\,\mathrm{d}\textbf{x}_z.
\end{split}
\end{equation}
As shown in Lemma \ref{pdlemma},
$m^{ \ast}_{\widetilde{z} \to f_k}\left(
\widetilde{\textbf{x}}_z\right)$ must be in  Gaussian function form, which is denoted by
$m^{ \ast}_{\widetilde{z} \to f_k}\left(
\widetilde{\textbf{x}}_z\right)
\propto
\mathcal{N}\left(\widetilde{\textbf{x}}_z|
\textbf{v}_{\widetilde{z} \to f_k}^{\ast}, \left[\textbf{J}_{\widetilde{z} \to f_k}^{\ast}\right]^{-1}\right)$.
Besides, from (\ref{linear}) we obtain
\begin{displaymath}f_k =\mathcal N\left(\textbf{y}_k|
\textbf{A}_{k,z}{\textbf{x}}_z
+
\textbf{A}_{k,j}{\textbf{x}}_j
+
\sum_{\widetilde{z}\in  \mathcal{B}\left(f_k\right)}
\textbf{A}_{k,\widetilde{z}}\widetilde{\textbf{x}}_z, \textbf{R}_k\right).\end{displaymath}
It can be shown that
the inner integration in the second line of (\ref{BPf2v41}) is given by
\begin{displaymath}
\mathcal N\left(\overline{\textbf y}_k|
\overline{\textbf{A}}_{k,z} \textbf{x}_z\\+
\overline{\textbf{A}}_{k,j} \textbf{x}_j, \overline{\textbf{R}}_k\right)
\triangleq\overline{f}_k,\end{displaymath}
where
the overbar is used  to denote the new constant matrix or vector.
Then (\ref{BPf2v41}) can be written as
\begin{equation}\label{newdef}
\begin{split}
m^{\left(\ell\right)}_{f_k \to j}\left(\textbf{x}_j\right)
=
\int
\overline{f}_k\times
m^{ \left(\ell\right)}_{z \to f_k}\left(\textbf x_z\right)
\,\mathrm{d}\textbf{x}_z.
\end{split}
\end{equation}
Comparing  (\ref{newdef}) with (\ref{BPf2v1}), we obtain
$m^{\left(\ell\right)}_{f_k \to j}\left(\textbf{x}_j\right)$  as if
$m^{\left(\ell\right)}_{\overline{f}_k \to j}\left(\textbf{x}_j\right)$
is being passed to a factor node
$\overline{f}_k$. Therefore, a
factor graph with a single loop and multiple trees/chains is equivalent
to  a single loop factor graph in which each factor node has no neighboring variable node outside the loop.
As a result,
the example of Fig. 2 (a)   is equivalent to Fig. 2 (b).
In the following, we focus on this equivalent topology for the convergence analysis.

Note that, for arbitrary variable node $j$ in the loop, there are two neighboring
factor nodes in the loop.
Further, using the notation for the equivalent topology,
(\ref{v2fm3}) is reduced to
\begin{equation}\label{46}
\begin{split}
\textbf{v}^{\left(\ell\right)}_{j\to \overline{f}_n}
=&
-
\left[\textbf{J}^{\ast}_{j\to \overline{f}_n}\right]^{-1}
\overline{\textbf{A}}_{k,j}^T
\textbf{T}_{k,j}^{-1}
\overline{\textbf{A}}_{k,z}
\textbf{v}^{\left(\ell-1\right)}_{z\to \overline{f}_k}\\
&+
\underbrace{\overline{\textbf{b}}_{j \to \overline{f}_n}-
\sum_{\widetilde{f}_k\in\B\left(j\right)\setminus f_n}
\sum_{\widetilde{s}\in\B\left(\widetilde{f}_k\right)\setminus j}
\left[\textbf{J}^{\ast}_{j\to \overline{f}_n}\right]^{-1}
\overline{\textbf{A}}_{k,j}^T\overline{\textbf{M}}_{k,j}^{-1}
\overline{\textbf{A}}_{k,\widetilde{s}}
\textbf{v}^{\ast}_{\widetilde{s}\to \widetilde{f}_k}}_{\triangleq \textbf{c}_{j\to \overline{f}_n}},
\end{split}
\end{equation}
where $\textbf{v}^{\ast}_{\widetilde{s}\to \widetilde{f}_k}$
 is the converged mean vector on the chain/tree;
 $$\overline{\textbf{b}}_{j\to \overline{f}_n}=
\left[\textbf{J}^{\ast}_{j\to \overline{f}_n}\right]^{-1}
\sum_{\overline{f}_k\in\B\left(j\right)\setminus \overline{f}_n}
\overline{\textbf{A}}_{k,j}^T
\overline{\textbf{M}}_{k,j}^{-1}
\overline{\textbf{y}}_k $$ with
$\overline{\textbf{M}}_{k,j} = \overline{\textbf{R}}_k
+
\sum_{\widetilde{s}\in\B\left(\widetilde{f}_k\right)\setminus j} \overline{\textbf{A}}_{k,\widetilde{s}}
\left[\textbf{J}^{\ast}_{\widetilde{s}\to \widetilde{f}_k}\right]^{-1}\overline{\textbf{A}}_{k,\widetilde{s}}^T$, and
 \begin{equation}\label{M}
 \textbf{T}_{k,j}
 =
 \overline{\textbf{R}}_k
+
\overline{\textbf{A}}_{k,z}
\left[\textbf{J}^{\ast}_{z\to \overline{f}_k}\right]^{-1}\overline{\textbf{A}}_{k,z}^T,
 \end{equation}
 with $\textbf{x}_z$ and $\overline{f}_k$
 in the loop where
$\overline{f}_k \in \mathcal B\left(j\right)\setminus \overline{f}_n$ and $\textbf{x}_z \in \mathcal B\left(\overline{f}_k\right)\setminus j
$.
By multiplying $\left[\textbf J^{\ast}_{j\to  \overline{f}_{n}}\right]^{1/2}$ on both sides of (\ref{46}),
and defining $\bm \beta_{j\to  \overline{f}_{n}}^{\left(\ell\right)}=
\left[\textbf J^{\ast}_{j\to  \overline{f}_{n}}\right]^{1/2}
\textbf v^{\left(\ell\right)}_{j\to  \overline{f}_{n}}$,
 we have
\begin{equation}\label{47}
\bm \beta^{\left(\ell\right)}_{j\to  \overline{f}_{n}}=
-
\left[\textbf J^{\ast}_{j\to  \overline{f}_{n}}\right]^{-1/2}
\overline{\textbf{A}}_{k,j}^T
\textbf{T}_{k,j}^{-1}
\overline{\textbf{A}}_{k,z}
\left[\textbf{J}^{\ast}_{z\to  \overline{f}_{k}}\right]^{-1/2}
\bm \beta^{\left(\ell-1\right)}_{z\to \overline{f}_{k}}
+
\left[\textbf J^{\ast}_{j\to  \overline{f}_{n}}\right]^{1/2}\textbf{c}_{j\to \overline{f}_{n}},
\end{equation}
Let $\bm{\beta}^{\left(\ell-1\right)}$
 contain $\bm{\beta}^{\left(\ell-1\right)}_{z\to \overline{f}_{k}}$ for all
$\textbf{x}_z$ with $z\in \mathcal B\left(\overline{f}_k\right)$ and
$\overline{f}_k$ being in the loop, and the index is arranged first on $k$ and then on $z$.
Then,
the above equation is written in a compact form as
\begin{equation}\label{linear-vector}
\bm \beta^{\left(\ell\right)}_{j\to  \overline{f}_{n}}
= -\textbf{Q}_{j\to  \overline{f}_{n}} \bm \beta^{\left(\ell-1\right)} +
\left[\textbf J^{\ast}_{j\to  \overline{f}_{n}}\right]^{1/2}
\textbf{c}_{j\to  \overline{f}_{n}},
\end{equation}
where $\textbf{Q}_{j \to \overline{f}_{n}}$
is a row block matrix with the only nonzero block $$\left[\textbf J^{\ast}_{j\to  \overline{f}_{n}}\right]^{-1/2}
\overline{\textbf{A}}_{k,j}^T
\textbf{T}_{k,j}^{-1}
\overline{\textbf{A}}_{k,z}
\left[\textbf{J}^{\ast}_{z\to  \overline{f}_{k}}\right]^{-1/2}$$ located at the position corresponding to the position
$\bm\beta_{z\to \overline{f}_k}^{\left(\ell\right)}$ in $\bm\beta^{\left(\ell\right)}$.
Then let $\textbf{Q}$ be a  matrix that stacks  $\textbf{Q}_{j\to \overline{f}_{n}}$
as its row, where  $j$ and $\overline{f}_n$ are in the loop  with $j\in \mathcal B\left(\overline{f}_n\right)$.
Besides, let $\textbf{c}$ be the vector containing the subvector
$\left[\textbf J^{\ast}_{j\to  \overline{f}_{n}}\right]^{1/2}
\textbf{c}_{j\to  \overline{f}_{n}}$ with the same order as $\textbf{Q}_{j\to \overline{f}_{n}}$ in $\textbf{Q}$. We have
\begin{equation}\label{49}
\bm \beta^{\left(\ell\right)} = -\textbf{Q} \bm \beta^{\left(\ell-1\right)} + \textbf{c}.
\end{equation}
Since $\textbf{Q}$ is a square matrix,
$\rho\left(\textbf{Q}\right)\leq
\sqrt{\rho\left(\textbf{Q}\textbf{Q}^T\right)}$ and therefore $\rho\left(\textbf{Q}\textbf{Q}^T\right)<1$ is the  sufficient condition for the convergence of
$\bm \beta^{\left(\ell\right)}$.
We next investigate the elements in $\textbf{Q}\textbf{Q}^T$.

Due to the single loop structure of the graph, every $\bm\beta_{j\to \overline{f}_n}^{\left(\ell\right)} $ in (\ref{47}) would be dependent on a unique $\bm\beta_{z\to \overline{f}_k}^{\left(\ell\right)}$, where $\overline{f}_k \in \mathcal B\left(j\right)\setminus \overline{f}_n$ and $z \in \mathcal \B\left(\overline{f}_k\right) \setminus j$ (i.e., the message two hops backward along the loop in the factor graph).
Thus, the position of the non-zero block in $\textbf{Q}_{j\to \overline{f}_n}$ will be different and non-overlapping for different combinations of ($j,\overline{f}_n$).
As a result,
there exists a column permutation matrix $\bm \Xi$ such that $\textbf{Q}\bm\Xi$ is a block diagonal matrix.
Therefore, $\left(\textbf{Q}\bm\Xi\right)\left(\textbf{Q}\bm\Xi\right)^T
=\textbf{Q}\textbf{Q}^T
$
is also a diagonal matrix, and we can write
\begin{equation}\label{58}
\textbf{Q} \textbf{Q}^{T}=
\texttt{Bdiag}\left\{\textbf{Q}_{j\to \overline{f}_n}\textbf{Q}_{j\to \overline{f}_n}^T
\right\}
_{\textrm{$j\in \mathcal B\left(\overline{f}_n\right)$. } }\nonumber
\end{equation}
As a consequence, $\rho\left(\textbf{Q}\textbf{Q}^T\right)<1$ is equivalent to
$\rho\left(\textbf{Q}_{j\to \overline{f}_n}\textbf{Q}_{j\to \overline{f}_n}^T\right)<1$
for all $j$ and $\overline{f}_n$ in the loop with $j\in \mathcal B\left(\overline{f}_n\right)$.
Following  the definition of $\textbf{Q}_{j\to \overline{f}_n}$ below (\ref{linear-vector}), we obtain
\begin{equation}\label{rowproduct}
\begin{split}
\textbf{Q}_{j\to \overline{f}_n}
\textbf{Q}_{j\to \overline{f}_n}^T
=
&
\left[\textbf J^{\ast}_{j\to  \overline{f}_{n}}\right]^{-1/2}
\overline{\textbf{A}}_{k,j}^T
\textbf{T}_{k,j}^{-1}
\overline{\textbf{A}}_{k,z}
\left[\textbf J^{\ast}_{z\to  \overline{f}_{k}}\right]^{-1}
\overline{\textbf{A}}_{k,z}^T
\textbf{T}_{k,j}^{-1}
\overline{\textbf{A}}_{k,j}
\left[\textbf J^{\ast}_{j\to  \overline{f}_{n}}\right]^{-1/2}
\\
=&
\left[\textbf J^{\ast}_{j\to  \overline{f}_{n}}\right]^{-1/2}
\overline{\textbf A}_{k,j}^T
\textbf{T}_{k,j}^{-1}
\left(\textbf{T}_{k,j}-\overline{\textbf{R}}_{k}
\right)
\textbf{T}_{k,j}^{-1}
\overline{\textbf A}_{k,j}
\left[\textbf J^{\ast}_{j\to  \overline{f}_{n}}\right]^{-1/2},
\end{split}
\end{equation}
where the second equation  follows from the definition of $\textbf{T}_{k,j}$ in (\ref{M}).
Besides, since $\overline{\textbf R}_{k} \succ \textbf 0$, we have $ \textbf{T}_{k,j}-\overline{\textbf{R}}_{k}
\prec
\textbf{T}_{k,j}
$.
Following P B.2 in Appendix B, and due to $\textbf{T}_{k,j}=\textbf{T}_{k,j}^T$, we have
\begin{equation}\label{temp}
\begin{split}
\textbf{T}_{k,j}^{-1/2}
\left(
\textbf{T}_{k,j}-\overline{\textbf{R}}_{k}
\right)
\textbf{T}_{k,j}^{-1/2}
\prec \textbf{I}.
\end{split}
\end{equation}
Applying P B.2 in Appendix B again to (\ref{temp}), and making use of (\ref{rowproduct}), we obtain
\begin{equation}\label{QQineq}
\begin{split}
\textbf{Q}_{j\to \overline{f}_{n}}\textbf{Q}_{j\to \overline{f}_{n}}^T
&\prec
\left[\textbf J^{\ast}_{j\to  \overline{f}_{n}}\right]^{-1/2}
\overline{\textbf A}_{k,j}^T
\textbf{T}_{k,j}^{-1}
\overline{\textbf A}_{k,j}
\left[\textbf J^{\ast}_{j\to  \overline{f}_{n}}\right]^{-1/2}.
\end{split}
\end{equation}

According to (\ref{M}), we have
\begin{equation}\label{amplify}
\overline{\textbf A}_{k,j}^T
\textbf{T}_{k,j}^{-1}
\overline{\textbf A}_{k,j}
=\overline{\textbf A}_{k,j}^T
\left[\overline{\textbf{R}}_k
+
\overline{ \textbf{A}}_{k,z}
\left[\textbf{J}^{\ast}_{z\to \overline{f}_k}\right]^{-1}\overline{\textbf{A}}_{k,z}^T
 \right]^{-1}
\overline{\textbf A}_{k,j}.
\end{equation}
On the other hand, using
(\ref{Cov}),
due to $\mathcal B\left(\overline{f}_k\right)\setminus j=\textbf x_z$ in the considered topology,
the right hand side of (\ref{amplify}) is
$\textbf J^{\ast}_{\overline{f}_{k}\to j} $.
Therefore,
(\ref{QQineq})
 is further written as
 \begin{equation}\label{57}
\begin{split}
\textbf{Q}_{j\to \overline{f}_{n}}\textbf{Q}_{j\to \overline{f}_{n}}^T
&\prec
\left[\textbf J^{\ast}_{j\to  \overline{f}_{n}}\right]^{-1/2}
\textbf J^{\ast}_{\overline{f}_{k}\to j}
\left[\textbf J^{\ast}_{j\to  \overline{f}_{n}}\right]^{-1/2}.
\end{split}
\end{equation}
From (\ref{v2fV}),  $\textbf J^{\ast}_{j\to  \overline{f}_{n}} =
\textbf{W}_j^{-1} +
\textbf J^{\ast}_{\overline{f}_{k}\to j}
+
\sum_{\widetilde{f}_k\in\B\left(j\right)\setminus \overline{f}_n}
\textbf{J}_{\widetilde{f}_k\to j}^{\ast}
$,
thus
$\textbf J^{\ast}_{\overline{f}_{k}\to j}
\preceq\textbf J^{\ast}_{j\to  \overline{f}_{n}} $.
Therefore,
$
\left[\textbf J^{\ast}_{j\to  \overline{f}_{n}}\right]^{-1/2}
\textbf J^{\ast}_{\overline{f}_{k}\to j}
\left[\textbf J^{\ast}_{j\to  \overline{f}_{n}}\right]^{-1/2}
\preceq
\textbf{I}$, and, together with
(\ref{57}), we have
\begin{equation}
\textbf{Q}_{j\to \overline{f}_{n}}\textbf{Q}_{j\to \overline{f}_{n}}^T
\prec
\textbf{I}.\nonumber
\end{equation}
Hence $\rho\left(\textbf{Q}_{j\to \overline{f}_{n}}\textbf{Q}_{j\to \overline{f}_{n}}^T\right)<1$
for all $j$ and $\overline{f}_n$ in the loop and $j\in \mathcal B\left(\overline{f}_n\right)$,
and equivalently $\rho\left(\textbf{Q}\right)<1$.
This completes the proof.
\end{proof}

\subsection{Convergence of Belief Covariance and Mean Vector}\label{C}
As the computation of the belief covariance $\textbf{P}_i^{\left(\ell\right)}$ depends on the message information matrix $
\textbf{J}_{f_n\to i}^{\left(\ell\right)}$,
using
 Theorems \ref{guarantee}
and   \ref{RateCov}, we can derive  the convergence and uniqueness properties of $\textbf{P}_i^{\left(\ell\right)}$.

Before we present the main result, we first present  some properties of the part metric  $\mathrm{d}\left( \textbf{X}, \textbf{Y}\right)$,  with positive definite arguments $ \textbf{X}$, $ \textbf{Y}$, and $\triangle \textbf{X}$.
The proofs are provided in Appendix D.

\begin{proposition} \label{P_Part}
	The part metric  $\mathrm{d}\left( \textbf{X}, \textbf{Y}\right)$  satisfies the following properties
\end{proposition}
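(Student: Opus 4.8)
The plan is to read off every property directly from the defining relation of the part metric: $\mathrm{d}(\textbf{X},\textbf{Y})=\log\alpha^{\ast}$, where $\alpha^{\ast}\geq 1$ is the \emph{smallest} scalar satisfying $\alpha^{\ast}\textbf{X}\succeq\textbf{Y}\succeq (\alpha^{\ast})^{-1}\textbf{X}$. Since the statement carries arguments $\textbf{X}$, $\textbf{Y}$, and $\triangle\textbf{X}$, I anticipate three clauses of the standard Thompson/Birkhoff type: (i) invariance under inversion, $\mathrm{d}(\textbf{X}^{-1},\textbf{Y}^{-1})=\mathrm{d}(\textbf{X},\textbf{Y})$; (ii) contraction under addition of a p.s.d. matrix, $\mathrm{d}(\textbf{X}+\triangle\textbf{X},\textbf{Y}+\triangle\textbf{X})\leq\mathrm{d}(\textbf{X},\textbf{Y})$ for $\triangle\textbf{X}\succeq\textbf{0}$; and (iii) a max-type bound for sums, $\mathrm{d}(\textbf{X}_1+\textbf{X}_2,\textbf{Y}_1+\textbf{Y}_2)\leq\max\{\mathrm{d}(\textbf{X}_1,\textbf{Y}_1),\mathrm{d}(\textbf{X}_2,\textbf{Y}_2)\}$. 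Each reduces to manipulating L\"owner-order inequalities rather than to any limiting argument, so the work is algebraic bookkeeping on the p.d. cone; these are precisely the ingredients needed later to transport the doubly exponential convergence of $\textbf{J}^{(\ell)}$ through the inversion $\textbf{P}_i^{(\ell)}=[\textbf{W}_i^{-1}+\sum_{f_n}\textbf{J}_{f_n\to i}^{(\ell)}]^{-1}$ onto the belief covariance.

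For (i) I would start from $\alpha\textbf{X}\succeq\textbf{Y}\succeq\alpha^{-1}\textbf{X}$ with $\alpha=\exp\{\mathrm{d}(\textbf{X},\textbf{Y})\}$ and use that $\textbf{Z}\mapsto\textbf{Z}^{-1}$ is order-reversing on the p.d. cone (the same fact invoked as P~B.2 in Appendix B): inverting $\alpha\textbf{X}\succeq\textbf{Y}$ yields $\textbf{Y}^{-1}\succeq\alpha^{-1}\textbf{X}^{-1}$, and inverting $\textbf{Y}\succeq\alpha^{-1}\textbf{X}$ yields $\alpha\textbf{X}^{-1}\succeq\textbf{Y}^{-1}$, so the \emph{same} $\alpha$ certifies the inverted pair; running the argument in both directions forces equality. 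For (ii), keeping the same $\alpha$, I would use $\alpha\geq 1\geq\alpha^{-1}$ together with $\triangle\textbf{X}\succeq\textbf{0}$ to write $\alpha(\textbf{X}+\triangle\textbf{X})=\alpha\textbf{X}+\alpha\triangle\textbf{X}\succeq\textbf{Y}+\triangle\textbf{X}$ and $\textbf{Y}+\triangle\textbf{X}\succeq\alpha^{-1}\textbf{X}+\triangle\textbf{X}\succeq\alpha^{-1}(\textbf{X}+\triangle\textbf{X})$, whence $\mathrm{d}(\textbf{X}+\triangle\textbf{X},\textbf{Y}+\triangle\textbf{X})\leq\log\alpha$. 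For (iii), with $\alpha_k=\exp\{\mathrm{d}(\textbf{X}_k,\textbf{Y}_k)\}$ and $\gamma=\max_k\alpha_k$, the chain $\gamma\textbf{X}_k\succeq\alpha_k\textbf{X}_k\succeq\textbf{Y}_k\succeq\alpha_k^{-1}\textbf{X}_k\succeq\gamma^{-1}\textbf{X}_k$ summed over $k$ gives $\gamma\sum_k\textbf{X}_k\succeq\sum_k\textbf{Y}_k\succeq\gamma^{-1}\sum_k\textbf{X}_k$, i.e. the claimed maximum bound.

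I expect the only delicate point to be any clause demanding a \emph{strict} improvement or a quantitative contraction factor $c<1$ (the same ingredient that powered the doubly exponential rate in Theorem~\ref{RateCov}). The scalings above are tight enough only to yield a non-strict bound; to sharpen $\alpha$ to some $\alpha'<\alpha$ after adding $\triangle\textbf{X}$ I would have to compare the extreme generalized eigenvalues of the pair $(\textbf{X},\textbf{Y})$ against those of $\triangle\textbf{X}$, equivalently to show that the slack in the L\"owner inequalities strictly increases along the direction in which $\alpha\textbf{X}-\textbf{Y}$ is singular, and then invoke a continuity/compactness argument on the closed order interval, exactly parallel to the supremum argument establishing $c<1$ in the proof of Theorem~\ref{RateCov}. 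This is where I would expect to spend the most effort, the remaining clauses being one-line order manipulations.
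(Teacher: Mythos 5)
Your proposal is correct and, modulo labeling, covers exactly what the paper asserts: Proposition~\ref{P_Part} has two clauses, P~\ref{P_Part}.1 (subadditivity over sums, $\mathrm{d}\left(\textbf{X}_1+\textbf{X}_2,\textbf{Y}_1+\textbf{Y}_2\right)\le \mathrm{d}\left(\textbf{X}_1,\textbf{Y}_1\right)+\mathrm{d}\left(\textbf{X}_2,\textbf{Y}_2\right)$) and P~\ref{P_Part}.2 (inversion invariance). Your clause (i) is precisely the paper's Appendix~D argument for P~\ref{P_Part}.2: inversion is order-reversing on the p.d.\ cone, so the two L\"owner inequalities swap and the same $\alpha$ certifies the inverted pair; you are marginally more careful in noting that the involution property is what upgrades the one-sided inequality to equality (a small misattribution aside: that order-reversal fact is not the paper's P~B.2, which concerns congruences, but an unlabeled fact the paper uses in Appendix~C). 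For the sum property you take a genuinely different and, as it happens, tighter route. Writing $a_k=\exp\left\{\mathrm{d}\left(\textbf{X}_k,\textbf{Y}_k\right)\right\}$ and $\mathrm{d}_3=\mathrm{d}\left(\textbf{X}_1+\textbf{X}_2,\textbf{Y}_1+\textbf{Y}_2\right)$, the paper sums the certificates, deriving $\left(a_1+a_2\right)\left(\textbf{X}_1+\textbf{X}_2\right)\succeq\textbf{Y}_1+\textbf{Y}_2\succeq\left(a_1+a_2\right)^{-1}\left(\textbf{X}_1+\textbf{X}_2\right)$ and hence $a_3\le a_1+a_2$ by minimality; but in the log scale of Definition~\ref{mydef} this yields only $\mathrm{d}_3\le\log\left(a_1+a_2\right)$, which does not imply the claimed $\mathrm{d}_1+\mathrm{d}_2=\log\left(a_1a_2\right)$ whenever $\left(a_1-1\right)\left(a_2-1\right)<1$ (e.g.\ $a_1=1$), so the paper's final step is loose as written even though the property is true. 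Your certificate $\gamma=\max\left\{\alpha_1,\alpha_2\right\}$ repairs this cleanly: it proves the strictly stronger bound $\mathrm{d}_3\le\max\left\{\mathrm{d}_1,\mathrm{d}_2\right\}$, which implies P~\ref{P_Part}.1 since part-metric values are nonnegative, and is all that Corollary~\ref{C-converge-iff2} actually needs (there one summand is $\textbf{W}_i^{-1}$ paired with itself, contributing $\mathrm{d}\left(\textbf{W}_i^{-1},\textbf{W}_i^{-1}\right)=0$). Your clause (ii) does not appear in the paper but is the special case $\textbf{X}_2=\textbf{Y}_2=\triangle\textbf{X}$ of either bound, and the ``delicate point'' you budgeted for---a strict contraction factor $c<1$---is not part of this proposition at all: that ingredient is established separately in the proof of Theorem~\ref{RateCov} via P~\ref{P_FUN}.2 and a supremum over the compact set in (\ref{partset}), so no sharpening of your non-strict bounds is required here.
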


\noindent P \ref{P_Part}.1:
$\mathrm{d} \left( \textbf{X}_1+ \textbf{X}_2, \textbf{Y}_1+ \textbf{Y}_2\right)\leq
\mathrm{d} \left(\textbf {X}_1,  \textbf{Y}_1\right)
+
\mathrm{d} \left( \textbf{X}_2, \textbf{Y}_2\right)$;

\noindent P \ref{P_Part}.2:
$\mathrm{d} \left(\textbf {X}, \textbf {Y}\right)=
\mathrm{d} \left( \textbf{X}^{-1}, \textbf{Y}^{-1}\right).
$
\\
\\
\noindent
We now have the following result.

\begin{corollary} \label{C-converge-iff2}
	With arbitrary initial message information matrix $\textbf{J}^{\left(0\right)}_{f_n\to i}  \succeq \textbf{0}$ for all $i\in \mathcal V$ and $f_n \in \mathcal B\left(i\right)$,
	the belief covariance matrix $\textbf{P}_i^{\left(\ell\right)}$ converges to a unique p.d. matrix
	 at a doubly exponential rate  with respect to any matrix norm before $\textbf{P}_i^{\left(\ell\right)}$ enters $\textbf{P}_i^{\ast}$'s neighborhood, which can be chosen to be arbitrarily small.
\end{corollary}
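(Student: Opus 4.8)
The plan is to transfer the convergence and the doubly exponential rate already established for the message information matrices in Theorems~\ref{guarantee} and~\ref{RateCov} to the belief covariance $\textbf{P}_i^{\left(\ell\right)}$ through the part metric, exploiting its subadditivity and inversion invariance from Proposition~\ref{P_Part}. First I would identify the candidate limit. Since $\textbf{P}_i^{\left(\ell\right)} = [\textbf{W}_i^{-1} + \sum_{f_n\in\mathcal{B}\left(i\right)} \textbf{J}_{f_n\to i}^{\left(\ell\right)}]^{-1}$ by (\ref{beliefcov}) is a continuous function of the block-diagonal iterate $\textbf{J}^{\left(\ell\right)}$, and $\textbf{J}^{\left(\ell\right)} \to \textbf{J}^{\ast}$ with $\textbf{J}^{\ast} \succ \textbf{0}$ by Theorem~\ref{guarantee}, the limit is $\textbf{P}_i^{\ast} = [\textbf{W}_i^{-1} + \sum_{f_n\in\mathcal{B}\left(i\right)} \textbf{J}_{f_n\to i}^{\ast}]^{-1}$. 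This limit is positive definite because $\textbf{W}_i^{-1}\succ\textbf{0}$ and each $\textbf{J}_{f_n\to i}^{\ast}\succ\textbf{0}$, and it is unique because $\textbf{J}^{\ast}$ is unique.

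The core of the argument is to bound $\mathrm{d}\left(\textbf{P}_i^{\left(\ell\right)}, \textbf{P}_i^{\ast}\right)$ by the part-metric distances of the individual message information matrices. Applying P~\ref{P_Part}.2 (inversion invariance) gives $\mathrm{d}\left(\textbf{P}_i^{\left(\ell\right)}, \textbf{P}_i^{\ast}\right) = \mathrm{d}\left([\textbf{P}_i^{\left(\ell\right)}]^{-1}, [\textbf{P}_i^{\ast}]^{-1}\right)$, i.e., the distance between the two sums $\textbf{W}_i^{-1} + \sum_{f_n} \textbf{J}_{f_n\to i}^{\left(\ell\right)}$ and $\textbf{W}_i^{-1} + \sum_{f_n} \textbf{J}_{f_n\to i}^{\ast}$. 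I would then invoke P~\ref{P_Part}.1 (subadditivity), extended to a finite number of summands by induction, together with the fact that the prior term contributes nothing because $\mathrm{d}\left(\textbf{W}_i^{-1}, \textbf{W}_i^{-1}\right)=0$ (take $\alpha=1$ in Definition~\ref{mydef}). This yields $\mathrm{d}\left(\textbf{P}_i^{\left(\ell\right)}, \textbf{P}_i^{\ast}\right) \leq \sum_{f_n\in\mathcal{B}\left(i\right)} \mathrm{d}\left(\textbf{J}_{f_n\to i}^{\left(\ell\right)}, \textbf{J}_{f_n\to i}^{\ast}\right)$. Because $\textbf{J}^{\left(\ell\right)}$ and $\textbf{J}^{\ast}$ are block diagonal, any $\alpha$ witnessing $\alpha\textbf{J}^{\left(\ell\right)}\succeq\textbf{J}^{\ast}\succeq\alpha^{-1}\textbf{J}^{\left(\ell\right)}$ also witnesses the same ordering block by block, so each block distance is at most $\mathrm{d}\left(\textbf{J}^{\left(\ell\right)},\textbf{J}^{\ast}\right)$; combined with the contraction estimate (\ref{rate}) from Theorem~\ref{RateCov}, this gives $\mathrm{d}\left(\textbf{P}_i^{\left(\ell\right)}, \textbf{P}_i^{\ast}\right) < |\mathcal{B}\left(i\right)|\, c^{\ell} d_0$ with $0<c<1$.

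Finally, I would convert the part-metric bound into a matrix-norm bound exactly as in the proof of Theorem~\ref{RateCov}: apply \citep[Lemma 2.3]{matrixnorm} in the form (\ref{rateineq}) with $\textbf{P}_i^{\left(\ell\right)}$ in place of $\textbf{J}^{\left(\ell\right)}$, use that $\textbf{P}_i^{\left(\ell\right)}$ stays in a compact set so that $\min\{\|\textbf{P}_i^{\left(\ell\right)}\|, \|\textbf{P}_i^{\ast}\|\}$ is bounded by some finite $\zeta'$, and run the same arithmetic--geometric-mean amplification to obtain $\|\textbf{P}_i^{\left(\ell\right)} - \textbf{P}_i^{\ast}\| < 3\zeta'\left(\exp\{|\mathcal{B}\left(i\right)| c^{\ell} d_0\} - 1\right)$. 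This is the claimed doubly exponential decay toward an arbitrarily small neighborhood of $\textbf{P}_i^{\ast}$.

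I expect the main obstacle to be the second step, the subadditive bound on the part metric of the sum: one must be careful that P~\ref{P_Part}.1 is applied to the inverted quantities (not to $\textbf{P}_i^{\left(\ell\right)}$ directly, which is why P~\ref{P_Part}.2 is invoked first), that the prior term $\textbf{W}_i^{-1}$ drops out cleanly, and that passing from the distance of the full block-diagonal iterate to the distance of its individual blocks is legitimate. Everything downstream is a verbatim reuse of the norm-conversion machinery already developed for Theorem~\ref{RateCov}.
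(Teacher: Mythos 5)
Your proposal is correct and follows essentially the same route as the paper's own proof: identify $\textbf{P}_i^{\ast}$ via Theorem~\ref{guarantee}, bound $\mathrm{d}\left(\textbf{P}_i^{\left(\ell\right)}, \textbf{P}_i^{\ast}\right)$ through P~\ref{P_Part}.1 and P~\ref{P_Part}.2 together with the contraction estimate (\ref{rate}), and then reuse the norm-conversion argument of Theorem~\ref{RateCov}. The only differences are cosmetic (you apply inversion invariance before subadditivity, the paper after) or small refinements in your favor (you explicitly justify passing from the part-metric distance of the block-diagonal iterate $\textbf{J}^{\left(\ell\right)}$ to that of its individual blocks $\textbf{J}_{f_n\to i}^{\left(\ell\right)}$, a step the paper uses implicitly).
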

\begin{proof}
{
Since $\textbf{J}_{f_n\to i}^{\left(\ell\right)}$ converges to a p.d. matrix, and according to (\ref{beliefcov}), $\textbf{P}_i^{\left(\ell\right)}$ also converges. Below, we study the convergence rate of $\textbf{P}_i^{\left(\ell\right)}$.
According to the definition of $\textbf P_i^{(\ell)}$   in (\ref{beliefcov}) and part metric in Definition \ref{mydef}, we have
$$
\mathrm{d}
\left([ \textbf{P}_i^{\left(\ell\right)}]^{-1},
[ \textbf{P}_i^{\ast}]^{-1}
\right)
=
\mathrm{d}
\left( \textbf{W}_i^{-1}+
\sum_{f_n\in\B\left(i\right)}
\textbf{J}_{f_n\to i}^{\left(\ell\right)},
\textbf{W}_i^{-1}+\sum_{f_n\in\B\left(i\right)}
\textbf{J}_{f_n\to i}^{\ast}
\right).
$$
By applying  P \ref{P_Part}.1 to the above equation, we obtain
$$
\mathrm{d}
\left([ \textbf{P}_i^{\left(\ell\right)}]^{-1},
[ \textbf{P}_i^{\ast}]^{-1}
\right)
\leq
\mathrm{d}
\left(
\textbf{W}_i^{-1},
\textbf{W}_i^{-1}
\right)
+
\sum_{f_n\in\B\left(i\right)}
\mathrm{d}
\left(
\textbf{J}_{f_n\to i}^{\left(\ell\right)},
\textbf{J}_{f_n\to i}^{\ast}
\right)
=
\sum_{f_n\in\B\left(i\right)}
\mathrm{d}
\left(
\textbf{J}_{f_n\to i}^{\left(\ell\right)},
\textbf{J}_{f_n\to i}^{\ast}
\right).
$$
According to (\ref{rate}), for all $i\in \mathcal V$ and $f_n\in \mathcal B(i)$, there exist a $c<1$ such that
$$
\mathrm{d}
\left(
\textbf{J}_{f_n\to i}^{\left(\ell\right)},
\textbf{J}_{f_n\to i}^{\ast}
\right)
<
c^{\ell}
\mathrm{d}
\left(
\textbf{J}_{f_n\to i}^{\left(0\right)},
\textbf{J}_{f_n\to i}^{\ast}
\right).
$$
Applying the above inequality to compute $\left[\textbf P_i^{(\ell)}\right]^{-1}$ in (\ref{beliefmean}), we obtain
$$
\mathrm{d}
\left([ \textbf{P}_i^{\left(\ell\right)}]^{-1},
[\textbf {P}_i^{\ast}]^{-1}
\right)
<
c^{\ell}\sum_{f_n\in\B\left(i\right)}
\mathrm{d}
\left(
\textbf{J}_{f_n\to i}^{\left(0\right)},
\textbf{J}_{f_n\to i}^{\ast}
\right).
$$
Following P \ref{P_Part}.2, the above inequality is equivalent to
$$
\mathrm{d}
\left( \textbf{P}_i^{\left(\ell\right)},
\textbf{P}_i^{\ast}
\right)
<
c^{\ell}\sum_{f_n\in\B\left(i\right)}
\mathrm{d}
\left(
\textbf{J}_{f_n\to i}^{\left(0\right)} ,
\textbf{J}_{f_n\to i}^{\ast}
\right),
$$
where $\sum_{f_n\in\B\left(i\right)}
\mathrm{d}
\left(
\textbf{J}_{f_n\to i}^{\left(0\right)} ,
\textbf{J}_{f_n\to i}^{\ast}
\right)$ is a constant.
Following the same procedure as that from (\ref{rate}) to (\ref{rateineq2}), we can prove that $ \textbf{P}_i^{\left(\ell\right)}$
converges at a doubly exponential rate  with respect to the monotone norm before $\textbf{P}_i^{\left(\ell\right)}$ enters $\textbf{P}_i^{\ast}$'s neighborhood, which can be chosen to be arbitrarily small.
}
\end{proof}

On the other hand, as shown in (\ref{beliefmean}), the computation of the belief mean $\boldsymbol{\mu}_i^{\left(\ell\right)}$ depends on the belief covariance $\textbf{P}_i^{\left(\ell\right)}$ and the message mean $\textbf{v}^{\left(\ell\right)}_{f_n\to i}$.
Thus, under the same condition as  in Theorem \ref{meanvector}, $\boldsymbol{\mu}^{\left(\ell\right)}_i$ is convergence guaranteed.
Moreover,  it is shown in \citep[Appendix]{MRFtoFG} that, for Gaussian BP over a factor graph,  the converged value of belief mean equals the optimal estimate in (\ref{Central}).
Together with the convergence guaranteed topology
revealed in Theorem \ref{ref}, we have the following Corollary.
\begin{corollary} \label{mean-if2}
With  arbitrary  $\textbf{J}_{f_n\to i}^{\left(0\right)}\succeq \textbf 0$
and arbitrary $\textbf v^{\left(0\right)}_{f_n\to i}$ for all $i\in \mathcal V$ and $f_n \in \mathcal B\left(i\right)$,
the mean vector  $\boldsymbol{\mu}_i^{\left(\ell\right)}$ in  (\ref{beliefmean}) converges
  to the optimal estimate $\widehat{\textbf x}_i$ in (\ref{Central})
 if and only if $\rho\left(\textbf{Q}\right)<1$,
 where $\textbf{Q}$ is defined in (\ref{meanvectorupdate}).  Furthermore, a sufficient condition to guarantee $\rho\left(\textbf{Q}\right)<1$ is when the
factor graph contains only one single loop connected to  multiple chains/trees.
\end{corollary}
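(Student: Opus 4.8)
The plan is to assemble the corollary from the convergence statements already proved, exploiting that the belief mean in (\ref{beliefmean}) is a continuous function of iterates whose limits we have characterized. Writing $\boldsymbol{\mu}_i^{(\ell)}=\textbf{P}_i^{(\ell)}\sum_{f_n\in\B(i)}\textbf{J}_{f_n\to i}^{(\ell)}\textbf{v}_{f_n\to i}^{(\ell)}$, I first note that by Theorem \ref{guarantee} the information matrices $\textbf{J}_{f_n\to i}^{(\ell)}$ converge to positive definite limits, and by Corollary \ref{C-converge-iff2} the belief covariances $\textbf{P}_i^{(\ell)}$ likewise converge, for \emph{every} p.s.d. initialization and regardless of $\rho(\textbf{Q})$. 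Hence the only factor that can affect convergence of $\boldsymbol{\mu}_i^{(\ell)}$ is the behavior of the factor-to-variable means $\textbf{v}_{f_n\to i}^{(\ell)}$.

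I would then reduce these to the variable-to-factor means analyzed in Section \ref{B}. Once the information matrices have converged, (\ref{f2vmm}) makes each $\textbf{v}_{f_n\to i}^{(\ell)}$ a fixed affine function of the $\textbf{v}_{j\to f_n}^{(\ell)}$, and the stacked vector $\textbf{v}^{(\ell)}$ obeys the affine recursion (\ref{meanvectorupdate}). By Theorem \ref{meanvector}, $\textbf{v}^{(\ell)}$ converges for arbitrary $\textbf{v}^{(0)}$ if and only if $\rho(\textbf{Q})<1$; composing with the continuous map above yields that $\boldsymbol{\mu}_i^{(\ell)}$ converges for arbitrary $\textbf{J}^{(0)}\succeq\textbf{0}$ and arbitrary $\textbf{v}^{(0)}$ precisely when $\rho(\textbf{Q})<1$. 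To identify the limit, I invoke the result of \citep[Appendix]{MRFtoFG}, which shows that when Gaussian BP on a factor graph converges its belief mean equals the exact marginal mean; for the linear Gaussian model (\ref{blocklinear}) this marginal mean is the centralized MMSE estimate, so $\boldsymbol{\mu}_i^{(\ell)}\to\widehat{\textbf x}_i$ of (\ref{Central}).

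The ``furthermore'' clause is then immediate: Theorem \ref{ref} establishes $\rho(\textbf{Q})<1$ (in fact the stronger blockwise bound $\textbf{Q}_{j\to\overline{f}_n}\textbf{Q}_{j\to\overline{f}_n}^{T}\prec\textbf{I}$) whenever the factor graph is the union of a single loop and a forest, so the equivalence just proved gives convergence of $\boldsymbol{\mu}_i^{(\ell)}$ to $\widehat{\textbf x}_i$ in that topology.

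The step I expect to be the main obstacle is the ``only if'' bookkeeping: one must check that convergence of the (possibly lower-dimensional) belief means actually forces convergence of the full stacked message-mean vector $\textbf{v}^{(\ell)}$, so that Theorem \ref{meanvector} can be applied contrapositively, rather than merely of a projection of it that could miss a divergent mode. I would close this gap by showing that the asymptotic linear map taking $\textbf{v}^{(\ell)}$ to the collection $\{\boldsymbol{\mu}_i^{(\ell)}\}$ has trivial kernel, using positive definiteness of the limits $\textbf{J}_{f_n\to i}^{\ast}$ and $\textbf{P}_i^{\ast}$; the remaining steps are routine compositions of the already-established convergence results.
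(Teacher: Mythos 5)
Your main-line argument is, in substance, the paper's own proof: the paper justifies this corollary exactly by combining (\ref{beliefmean}) with the convergence of the belief covariances (Corollary \ref{C-converge-iff2}, via Theorems \ref{guarantee} and \ref{RateCov}), the characterization of the message-mean recursion (\ref{meanvectorupdate}) in Theorem \ref{meanvector}, the identification of the limit with the centralized estimate by citing \citep[Appendix]{MRFtoFG}, and Theorem \ref{ref} for the single-loop-plus-forest clause.

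The one place you depart from the paper is your final paragraph, and there the repair you sketch would fail. The asymptotic linear map taking the stacked message means $\textbf{v}^{\left(\ell\right)}$ to the collection of belief means $\left\{\boldsymbol{\mu}_i^{\left(\ell\right)}\right\}$ sends a space of dimension $\sum_{j} |\mathcal{B}\left(j\right)| N_j$ to one of dimension $\sum_{j} N_j$; whenever any variable node has more than one neighboring factor, the domain is strictly larger than the range, so this map has a nontrivial kernel regardless of the positive definiteness of $\textbf{J}^{\ast}_{f_n\to i}$ and $\textbf{P}_i^{\ast}$. Concretely, at a node $i$ with two neighbors the combination $\textbf{J}^{\ast}_{f_1\to i}\textbf{v}_{f_1\to i}+\textbf{J}^{\ast}_{f_2\to i}\textbf{v}_{f_2\to i}$ entering (\ref{beliefmean}) is unchanged when the two terms are perturbed in opposite directions. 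Hence injectivity of the static map cannot be the mechanism; a rigorous ``only if'' would need a dynamic argument showing that an unstable mode of $\textbf{Q}$ (eigenvalue of modulus at least one) cannot remain inside the kernel of the belief map along the entire trajectory, for every choice of initialization. Be aware, though, that the paper does not attempt this step either: it reads the equivalence directly off Theorem \ref{meanvector}, implicitly identifying convergence of the beliefs with convergence of the messages. So your proposal matches the paper's level of rigor on the main claim, but the trivial-kernel argument you flag as the way to close the ``only if'' gap is not correct as stated.
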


{
\section{Relationships with Existing Convergence Conditions}\label{relationship}
In this section,
we show the relationship between our convergence condition for Gaussian BP and the recent proposed path-sum method \citep{pathsum}.
We also show that our convergence condition is more general than the walk-summable condition \citep{WalkSum1} for the scalar case.

\subsection{Relationship with the Path-sum Method}
The path-sum method is proposed in \citep{giscard2012walk, giscard2013evaluating, pathsum} to compute  $
\left(\textbf{W}^{-1}+  \textbf{A}^T\textbf{R}^{-1}\textbf{A}\right)^{-1}$ in (\ref{Central}), in which
the  matrix inverse $\left(\textbf{W}^{-1}+ \textbf{A}^T\textbf{R}^{-1}\textbf{A}\right)^{-1}$ is interpreted as
 the sum of simple paths and simple cycles on a weighted graph.
The resulting formulation is   guaranteed to  converge to the correct value for any valid multivariate Gaussian distribution.

The BP message update equations (\ref{v2fV}), (\ref{v2fm}), (\ref{Cov}), and (\ref{f2vmm})  can be seen as a cut-off of path-sum  by retaining only self-loops and backtracks (simple cycles of lengths one and two).
In the presence of a graph with one or more loops, equations (\ref{v2fV}), (\ref{v2fm}), (\ref{Cov}), and (\ref{f2vmm}) do not include the terms
related to simple cycles with length larger than $2$.
 This may be a potential cause  for the possible divergence of the Gaussian BP algorithm.
From this perspective, the divergence can be averted if none of the walks going around the loop(s) have weight greater than one, or equivalently, that the spectral radius of the block matrix representing the loop(s) is strictly less than one. This is an intuitive explanation of the condition $\rho(\textbf Q) < 1$ obtained in Theorem \ref{meanvector}.  It also immediately follows from these considerations that the convergence rate is at least geometric, with a cut-off of order $\ell$ yielding an $\mathcal O(\rho (\textbf Q)^{\ell})$ error\footnote{We thank an anonymous reviewer for this interpretation.}.

While the path-sum framework provides an insightful interpretation of the results obtained in this paper, the path-sum algorithm may not be efficiently implementable in  distributed and parallel settings, as it requires the summation over all the paths of any length.  In contrast,
Gaussian BP, while paying the price of non-convergence in general loopy
 models, makes it possible to realize  parallel and fully distributed inference.
In summary,  though the path-sum method converges for arbitrary valid Gaussian models,
it is difficult to be adapted to a distributed and parallel inference setup as the Gaussian BP method.

\subsection{Relationship with the Walk-Summable Condition}
We show next that, in the setup of linear Gaussian models, the condition $\rho(\textbf Q)<1$ as in Corollary \ref{mean-if2} encompasses the Gaussian MRF based walk-summable  (\cite{WalkSum1}) in terms of  convergence.
As all existing results on Gaussian BP convergence   \citep{WalkSum1,minsumquad}  only apply to scalar variables, we restrict the following discussion to only the scalar case.
In \citep{WalkSum1},  the starting point for the convergence analysis for Gaussian MRF  is  a joint multivariate Gaussian distribution
\begin{equation}\label{56}
q(\textbf x)\propto\exp\Big\{
-\frac{1}{2} {\textbf x}^T \textbf J\textbf x + \textbf{h}^T \textbf x
\Big\},
\end{equation}
  expressed in the normalized information form  such that $\textbf J_{i,i} =1$ for all $i$.
The underlying Gaussian distribution is  factorized as (\cite{WalkSum1})
\begin{equation}\label{MRF-join}
q\left({\textbf x}\right)
\propto
\prod_{i\in \mathcal{V}} \psi_{i} \left(\textbf  x_i \right)
\prod_{ J_{i,j}\neq 0; i\leq j } \psi_{i,j} \left(\textbf  x_i, \textbf {x}_j\right),
\end{equation}
where
\begin{equation}\label{pairwise}
\psi_{i} \left(\textbf  x_i\right) = \exp\left\{-\frac{1}{2}\textbf J_{i,i} \textbf x^2_i +\textbf h_i\right\}
\quad
\textrm{and} \quad
\psi_{i,j} \left(\textbf x_i, \textbf x_j\right) = \exp\left\{- \textbf x_i \textbf J_{i,j}\textbf  x_j\right\}.\nonumber
\end{equation}

In  \citet[Proposition 1]{WalkSum1},
based on the interpretation that $\left[\textbf J^{-1}\right]_{i,j}$ is the sum of the weights of all the walks from variable $j$ to variable $i$ on the corresponding Gaussian MRF, a sufficient Gaussian BP convergence condition  known as walk-summability  is provided, which is  equivalent to
\begin{equation}\label{walksummability}
\textbf I - |{\textbf R}| \succ \textbf 0,
\end{equation}
together with the initial message variance inverse being set to  $0$, where ${\textbf R} = \textbf I - {\textbf J}$ and $|\textbf R|$ is matrix of entrywise absolute values of $\textbf R$.
In the following,
we  establish the relationship between walk-summable Gaussian MRF and linear Gaussian model by utilizing properties of H-matrices \citep{boman2005factor}.
We show that, with Gaussian MRF satisfying the walk-summable condition, the joint distribution $q(\textbf x)$ in (\ref{MRF-join}) can be reformulated as a special case of the linear Gaussian model based factorization in  (\ref{jointpost}).
Moreover, Gaussian BP  on this particular linear Gaussian model  always converges.
\begin{definition}\label{Hmatrix}
	H-Matrices \citep{boman2005factor}:
A matrix $\textbf X$ is an H-matrix if all eigenvalues of the matrix $\mathcal H(\textbf X)$, where $[\mathcal H(\textbf X)]_{i,i}=| \textbf X_{i,i}|$, and $[ \mathcal H(\textbf X)]_{i,j}
= - |\textbf X_{i,j}|$ have positive real parts.
\end{definition}

\begin{proposition} \label{HmatrixP}
Factor width at most $2$ factorization \citep[Theorem 9]{boman2005factor}:
A symmetric  H-matrix $\textbf X$ that has non-negative diagonals can always be factorized as
$\textbf X = \textbf V\textbf V^T
$,
where $\textbf V  $ is a real  matrix with each column of $\textbf V  $ containing at most $2$ non-zeros.
\end{proposition}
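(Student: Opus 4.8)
The plan is to prove the claim in two movements: first reduce a symmetric H-matrix with nonnegative diagonal to a \emph{strictly diagonally dominant} matrix by a positive diagonal congruence, and then write down an explicit factor-width-$2$ factorization of the diagonally dominant matrix. Throughout, let $\textbf{e}_i$ denote the $i$-th standard basis vector, so that a column with at most two nonzeros is a scalar multiple of some $\textbf{e}_i$ or of $\textbf{e}_i + \sigma\textbf{e}_j$ with $\sigma\in\{+1,-1\}$.

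First I would exploit the hypothesis that $\textbf{X}$ is an H-matrix. By Definition \ref{Hmatrix}, the comparison matrix $\mathcal H(\textbf{X})$ has all eigenvalues with positive real part; since its off-diagonal entries $-|\textbf{X}_{i,j}|$ are nonpositive, $\mathcal H(\textbf{X})$ is a Z-matrix, and hence a nonsingular M-matrix. The standard theory of M-matrices then furnishes a strictly positive vector $\textbf{d}=(d_1,\dots,d_n)^T$ with $\mathcal H(\textbf{X})\,\textbf{d} > \textbf{0}$ componentwise, i.e.
\begin{equation}\label{eq:sddplan}
|\textbf{X}_{i,i}|\, d_i > \sum_{j\neq i} |\textbf{X}_{i,j}|\, d_j \qquad \text{for all } i .
\end{equation}
In particular $\textbf{X}_{i,i}>0$ for every $i$, so the nonnegative-diagonal hypothesis yields $|\textbf{X}_{i,i}|=\textbf{X}_{i,i}$. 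Letting $\textbf{D}$ be the diagonal matrix with diagonal entries $d_1,\dots,d_n$ and forming $\textbf{M}\triangleq \textbf{D}\textbf{X}\textbf{D}$, a symmetric matrix with entries $\textbf{M}_{i,j}=\textbf{X}_{i,j}d_id_j$, inequality (\ref{eq:sddplan}) multiplied by $d_i$ shows that $\textbf{M}$ is strictly diagonally dominant with positive diagonal: $\textbf{M}_{i,i} > \sum_{j\neq i}|\textbf{M}_{i,j}|$.

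Next I would factorize $\textbf{M}$ directly. For each unordered pair $i<j$ with $\textbf{M}_{i,j}\neq 0$, put $\sigma_{ij}=\mathrm{sign}(\textbf{M}_{i,j})$ and form the rank-one term $|\textbf{M}_{i,j}|\,(\textbf{e}_i+\sigma_{ij}\textbf{e}_j)(\textbf{e}_i+\sigma_{ij}\textbf{e}_j)^T$; its $(i,j)$ entry is $|\textbf{M}_{i,j}|\sigma_{ij}=\textbf{M}_{i,j}$, and it loads $|\textbf{M}_{i,j}|$ onto each of the diagonal slots $(i,i)$ and $(j,j)$. Summing these terms over all pairs reproduces every off-diagonal entry and deposits $\sum_{j\neq i}|\textbf{M}_{i,j}|$ on diagonal $i$; the residual $s_i \triangleq \textbf{M}_{i,i}-\sum_{j\neq i}|\textbf{M}_{i,j}|$ is nonnegative by diagonal dominance, so I add $s_i\,\textbf{e}_i\textbf{e}_i^T$. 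Collecting the vectors $\sqrt{|\textbf{M}_{i,j}|}\,(\textbf{e}_i+\sigma_{ij}\textbf{e}_j)$ and $\sqrt{s_i}\,\textbf{e}_i$ as the columns of a matrix $\widetilde{\textbf{V}}$ gives $\textbf{M}=\widetilde{\textbf{V}}\widetilde{\textbf{V}}^T$ with every column of $\widetilde{\textbf{V}}$ having at most two nonzeros. Finally, undoing the congruence yields $\textbf{X}=\textbf{D}^{-1}\textbf{M}\textbf{D}^{-1}=(\textbf{D}^{-1}\widetilde{\textbf{V}})(\textbf{D}^{-1}\widetilde{\textbf{V}})^T$, and since left-multiplication by the positive diagonal $\textbf{D}^{-1}$ only rescales rows, it preserves the per-column support. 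Thus $\textbf{V}\triangleq\textbf{D}^{-1}\widetilde{\textbf{V}}$ has at most two nonzeros per column and $\textbf{X}=\textbf{V}\textbf{V}^T$, as required.

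The main obstacle is the first movement, namely extracting the scaling vector $\textbf{d}$: everything downstream is bookkeeping, but the passage from ``$\mathcal H(\textbf{X})$ has spectrum in the open right half-plane'' to ``$\mathcal H(\textbf{X})\textbf{d}>\textbf{0}$ for some $\textbf{d}>\textbf{0}$'' is precisely the nontrivial equivalence between the spectral and the generalized-diagonal-dominance characterizations of nonsingular M-matrices, which I would cite from the standard theory rather than reprove. A secondary point worth flagging is that the nonnegative-diagonal hypothesis is essential and is used exactly where I invoke $|\textbf{X}_{i,i}|=\textbf{X}_{i,i}$; without it (for instance $\textbf{X}=-\textbf{I}$, which is an H-matrix) the matrix admits no Gram factorization at all, so the hypothesis cannot be dropped.
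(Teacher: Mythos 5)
Your proof is correct. Note, though, that the paper offers no proof of this proposition at all: it is imported as a black box, cited as Theorem 9 of \citet{boman2005factor}, and used only to convert walk-summable Gaussian MRFs into pairwise linear Gaussian models in Section 5. What you have written is essentially the standard argument behind that cited theorem: the reduction from the spectral H-matrix definition to generalized diagonal dominance (existence of $\textbf{d}>\textbf{0}$ with $\mathcal H(\textbf{X})\textbf{d}>\textbf{0}$) is exactly the classical equivalence for nonsingular M-matrices, which you correctly flag as the one step to be cited rather than reproved; the diagonal congruence $\textbf{M}=\textbf{D}\textbf{X}\textbf{D}$ then yields a symmetric strictly diagonally dominant matrix, and your explicit sum of rank-one terms $|\textbf{M}_{i,j}|\left(\textbf{e}_i+\sigma_{ij}\textbf{e}_j\right)\left(\textbf{e}_i+\sigma_{ij}\textbf{e}_j\right)^T$ plus the residuals $s_i\,\textbf{e}_i\textbf{e}_i^T$ is the textbook factor-width-$2$ construction, with the bookkeeping done exactly and the column supports preserved under the final rescaling by $\textbf{D}^{-1}$. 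Your closing observation that the nonnegative-diagonal hypothesis is indispensable (e.g.\ $\textbf{X}=-\textbf{I}$ is an H-matrix with no Gram factorization) is also apt.
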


Let
$\omega$ be an arbitrary positive value that is smaller than the minimum eigenvalue of $\textbf I - |{\textbf R}|$ and also satisfies $0<\omega<1$.
According to (\ref{walksummability}),
we have
$(1-\omega)\textbf I - |{\textbf R}|  \succ \textbf 0$.
Furthermore, by applying $\mathcal H(\cdot)$ to $(1-\omega)\textbf I - {\textbf R} $, we have
$[\mathcal H((1-\omega)\textbf I - {\textbf R} )]_{i,i}
= |(1-\omega)\textbf I - {\textbf R} |_{i,i}=
1-\omega$
 and
$[\mathcal H((1-\omega)\textbf I - {\textbf R} )]_{i,j}
=-|[(1-\omega)\textbf I - {\textbf R} )]_{i,j}|
= - |{\textbf R}_{i,j}|$.
Thus,
$\mathcal H((1-\omega)\textbf I - {\textbf R} )
= (1-\omega)\textbf I - |{\textbf R}|\succ \textbf 0$, and
 we conclude that $(1-\omega)\textbf I - {\textbf R} $ is an H-matrix.
According to Proposition \ref{HmatrixP}, $(1-\omega)\textbf I - {\textbf R}
=
{\textbf J} -\omega\textbf I = \textbf V\textbf V^T,
$
where each column of $\textbf V  $ contains at most $2$ non-zeros.
Now, we can rewrite the  joint distribution in (\ref{MRF-join}) as
\begin{equation}
\begin{split}
q({\textbf x})
&\propto
\exp\left\{
-\frac{1}{2}{\textbf x}^T\left( {\textbf J}-\omega\textbf I\right) {\textbf x} -\frac{1}{2}\omega{\textbf x}^T
{\textbf x}+{\textbf h}^T{\textbf x}
\right\}\\
&=\exp\left\{
-\frac{1}{2}
\left( {\textbf V}^T {\textbf x} \right)^T
\left( {\textbf V}^T {\textbf x} \right)
-\frac{1}{2}\left(
\omega{\textbf x}^T
{\textbf x}-2{\textbf h}^T{\textbf x}\right)
\right\}\\
&\propto
\exp\left\{-\frac{1}{2}\sum_{n=1}^M
\left( V_{n,n_i} x_{n_i} + V_{n,n_j}  x_{n_j}\right)^2 -\frac{1}{2} \sum_{n=1}^M \omega( x_n - \frac{ h_n}{\omega})^2\right\},
\end{split}
\end{equation}
where $ V_{n,n_i}$ and $ V_{n,n_j}$ denote the two possible non-zero elements on the $n$-th column and $n_i$-th and $n_j$-th rows, and $M$ is the dimension of $\textbf x$.
Thus, a walk-summable Gaussian MRF in (\ref{56}) (or equivalently (\ref{MRF-join})) can always be written as
 \begin{equation}
q({\textbf x})\propto \prod_{n=1}^M
\underbrace{\mathcal N( x_n|\frac{ 1}{\omega} h_n,\frac{1}{\omega})}_{p
	( x_n)}
\prod_{n=1}^M
\underbrace{\mathcal N(0| V_{n,n_i}  x_{n_i} + V_{n,n_j}  x_{n_j},1)}_{f_n}.
 \end{equation}
Note that, in the above equation, $p( x_n)$ serves as the prior distribution for $ x_n$ as that in (\ref{jointpost}) and  $f_n$ is the local likelihood function with local observation being $y_n= 0$ and noise distribution $ z_n\sim \mathcal N( z_n|0, 1)$\footnote{For a particular $f_n$, if there is only one non-zero coefficient,  $f_n\times \mathcal N( x_n|\frac{ 1}{\omega} h_n,\frac{1}{\omega})$  is also proportional to a Gaussian distribution, which can be seen as a prior distribution in (\ref{jointpost}).}.
Thus the above equation is a special case of the linear Gaussian model based factorization in (\ref{jointpost}) with the local likelihood function $f_n$ containing only a pair of variables.
For this pairwise linear Gaussian model with scalar variables, it is shown in \citep{minsumquad} that $\rho(\textbf Q)<1$  is fulfilled.
Thus by Corollary \ref{mean-if2}, Gaussian BP always converges.
In summary,  for the factorization based on Gaussian MRF, if the walk-summable convergence condition is fulfilled, there is an equivalent joint distribution factorization based on linear Gaussian model; and Gaussian BP   is convergence guaranteed for this linear Gaussian model.

In the following, we further  demonstrate through an example that
 there exist Gaussian MRFs, in which the information matrix $\textbf J$ fails to satisfy the walk-summable condition, but a convergence guaranteed  Gaussian BP update based on the distributed linear Gaussian model representation can still be obtained.
 More specifically,
consider the following  information matrix $\textbf J$ in a Gaussian MRF:
\begin{equation}\label{J}
\textbf J=\left[
\begin{matrix}
\begin{smallmatrix}
1&\frac{1}{3\sqrt{2}}& \frac{1}{\sqrt{3}} & \frac{\sqrt{2}}{3}  \\
\frac{1}{3\sqrt{2}}&1&0&\frac{1}{3}	\\
\frac{1}{\sqrt{3}}&0&1&\frac{1}{\sqrt{6}}\\
\frac{\sqrt{2}}{3}&\frac{1}{3}&\frac{1}{\sqrt{6}}&1
\end{smallmatrix}
\end{matrix}
\right].
\end{equation}
The eigenvalues of $\textbf I - |\textbf R|$ to $4$ decimal places are  $-0.0754$,
$0.9712$, $1.4780$, and $1.6262$.
According to the walk-summable definition in (\ref{walksummability}),  it   is non walk-summable and the convergence condition in \citep{WalkSum1} is inconclusive as to whether Gaussian BP  converges.
On the other hand, we can
study the Gaussian BP convergence of this example by employing a  linear Gaussian model representation, and rewriting
$\textbf J$  as
$\textbf J = \textbf A^T\textbf R^{-1}\textbf A + \textbf W^{-1}$, where
$$
\textbf A=\left[
\begin{matrix}
\frac{2}{\sqrt{6}}& 0 & \frac{1}{\sqrt{2}} & \frac{1}{\sqrt{3}} \\
\frac{{1}}{\sqrt{6}}&\frac{{1}}{\sqrt{3}}&0&0	\\
0&\frac{{1}}{\sqrt{3}}&0&\frac{{1}}{\sqrt{3}}
\end{matrix}
\right],
\quad
\textbf W=\left[
\begin{matrix}
6& 0 & 0 & 0 \\
0& 3&0&0	\\
0&0&2&0\\
0&0&0&3
\end{matrix}
\right],
$$
and $\textbf R = \textbf I$.
In Fig.~\ref{equivalent}, the joint distribution of this example with Gaussian MRF and  the corresponding linear Gaussian model  are represented by factor graphs.
As it is shown in  Corollary  \ref{mean-if2}, for a factor graph
that is the union of a forest and a single loop, as  in  Fig.~\ref{equivalent}(b), Gaussian BP always converges  to the exact value.
This is in sharp contrast to the inconclusive convergence property when
 the same joint distribution is expressed using
 the classical  Gaussian MRF   in (\ref{MRF-eqn}).

In summary, we have shown that the linear Gaussian model with $\rho(\textbf Q)<1$  encompasses walk-summable Gaussian MRF.
Further, it is shown in \citep{WalkSum1} that the diagonally dominant convergence condition in \citep{DiagnalDominant} for Gaussian BP is a special case of the walk-summable condition.
Also, the convergence condition in \citep{Suqinliang} is encompassed by walk-summable condition.
Therefore, we have the Venn diagram in Fig.~\ref{venn} summarizing the relations (in terms of convergence guarantees) between the  convergence condition proposed in this paper and existing conditions.
\begin{figure}[t]
	\centering
	\epsfig{file=./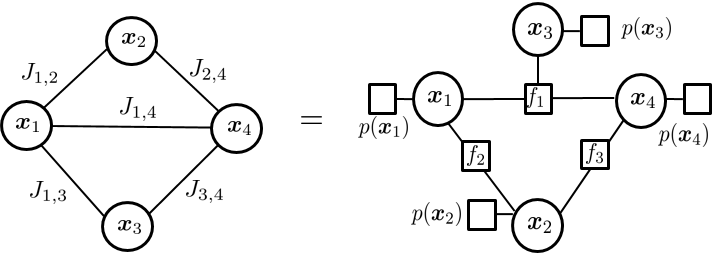, width=3.2in}
	\caption{ The Gaussian MRF corresponding to $\textbf J$ in (\ref{J}) with the factorization following (\ref{MRF-eqn});
		(b) The factor graph  corresponding to $\textbf J$ in (\ref{J}) with the factorization following (\ref{jointpost}).}
	\label{equivalent}
\end{figure}

\begin{figure}[t]
	\centering
	\epsfig{file=./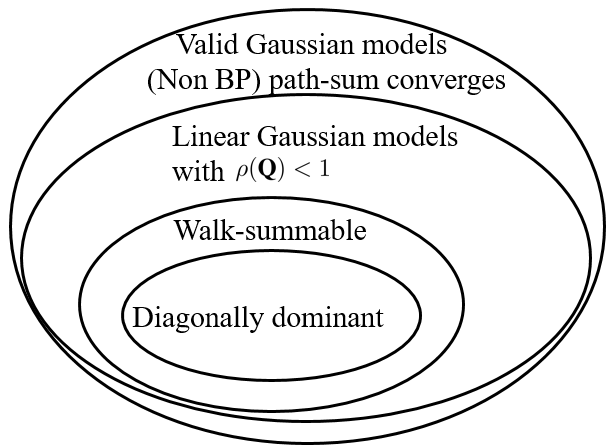, width=2.6in}
	\caption{ Venn diagram summarizing various subclasses of Gaussian models: the three inner most conditions are all for the BP algorithm while the path-sum  in general does not constitute a  BP algorithm.}
	\label{venn}
\end{figure}}

\section{Conclusions}\label{conclusion}
This paper shows that, depending on how the factorization of the underlying joint Gaussian distribution is performed, Gaussian belief propagation (BP)  may exhibit different convergence properties as different factorizations  lead to fundamentally different recursive update structures.
The paper studies the convergence of Gaussian BP  derived from the factorization based on the distributed linear Gaussian model.
We show that the condition we present for convergence of the marginal mean based on
 factorizations using
 the linear Gaussian model is more general than the walk-summable condition \citep{WalkSum1} (and references therein) that is based on the Gaussian Markov random field factorization.
Further, the linear Gaussian model that is studied in this paper
readily conforms to the physical network topology
arising in large-scale networks.

Further, the paper analyzes the
 convergence  of the
Gaussian BP based distributed inference algorithm.
In particular, we show
 analytically that,
with arbitrary positive
semidefinite matrix  initialization,
the  message information matrix exchanged among agents  converges   to a unique positive definite  matrix, and it approaches an arbitrarily small neighborhood of this unique positive definite matrix at a doubly exponential rate (with respect to any matrix norm).
Regarding the initial information matrix,  there exist  positive definite initializations  that guarantee faster convergence
than
 the commonly used all-zero matrix.
Moreover, under the positive
semidefinite initial message information matrix  condition, we present  a necessary and sufficient  condition of the belief mean
vector  to converge to the optimal centralized estimate.
We also prove that Gaussian BP always converges if
the corresponding factor graph is a union of a single loop and a forest.
{In particular, we show
that the proposed convergence condition for Gaussian BP based on the linear Gaussian model leads to a strictly larger class of models in which Gaussian BP converges than those postulated by the Gaussian Markov random field based walk-summable  condition.}
{
Finally, we discuss connections of Gaussian BP with the general path-sum algorithm.
In the future, it will be interesting to explore if these  path-sum interpretations can lead to  modifications of the standard Gaussian BP algorithm that
 guarantee the convergence of Gaussian BP for larger classes of topologies while being also parallel and fully distributed. }



\newpage

\appendix
\section*{Appendix A.}
\label{A}
We first compute the first round updating message from variable node to factor node.
Substituting $m^{\left(0\right)}_{f_n \to i}\left(\textbf{x}_i\right)
\propto
\exp\left\{-\frac{1}{2}||\textbf x_j- \textbf{v}^{\left(0\right)}_{f_n \to i}||^2_{\textbf{J}^{\left(0\right)}_{f_n \to i}}\right\}
$ into (\ref{BPv2f1}) and, after   algebraic manipulations, we obtain
\begin{equation} \label{A-1}
m^{\left(1\right)}_{j \to f_n}\left(\textbf x_j\right)\propto
\exp\left\{-\frac{1}{2}||\textbf x_j- \textbf{v}^{\left(1\right)}_{j\to f_n}||^2_{\textbf{J}^{\left(1\right)}_{j \to f_n}}\right\},
\end{equation}
with
\begin{equation}
\textbf{J}^{\left(1\right)}_{j \to f_n}
= \textbf{W}_j^{-1} +
\sum_{f_k\in\B\left(j\right)\setminus f_n}
\textbf{J}_{f_k\to j}^{\left(0\right)},\nonumber
\end{equation}
and
\begin{equation}
\textbf{v}^{\left(1\right)}_{j\to f_n}=
\left[\textbf{J}^{\left(1\right)}_{j\to f_n}\right]^{-1}
\left[ \sum_{f_k\in\B\left(j\right)\setminus f_n}
\textbf{J}_{f_k\to j}^{\left(0\right)}
\textbf{v}^{\left(0\right)}_{f_k\to j}\right].\nonumber
\end{equation}

Next, we evaluate $m^{\left(1\right)}_{f_n \to i}\left(\textbf{x}_i\right)
$.
By substituting
$m^{\left(1\right)}_{j \to f_n}\left(\textbf x_j\right)$ in (\ref{A-1}) into
(\ref{BPf2v1}), we obtain
\begin{equation}\label{integral}
\begin{split}
m^{\left(1\right)}_{f_n \to i}\left(\textbf{x}_i\right)
&\propto \int\ldots \int
\exp\Big\{-\frac{1}{2}\big(\textbf{y}_n - \sum_{j\in  \mathcal{B}\left(f_n\right)}\textbf{A}_{n,j}\textbf x_j\big)^T\textbf R^{-1}\big(\textbf{y}_n - \sum_{j\in  \mathcal{B}\left(f_n\right)}\textbf{A}_{n,j}\textbf x_j\big)\Big\} \times
\\
&\quad \quad
\!\!\prod_{j\in\B\left(f_n\right)\setminus i}
\exp\Big\{-\frac{1}{2}||\textbf x_j-
\textbf{v}^{ \left(1\right)}_{j\to f_n}||^2_{
\textbf{J}^{ \left(1\right)}_{j\to f_n}}\Big\}\, \mathrm{d}\{\textbf x_j\}_{j\in\B\left(f_n\right)\setminus i}.
\end{split}
\end{equation}

Let $\textbf x_{\left\{\B\left(f_n\right)\setminus i\right\}}$ and
$\textbf{v}_{\left\{\B\left(f_n\right)\setminus i\right\}\to f_n}^{\left(1\right)}$ be stacked vectors containing $\textbf x_j$ and
$\textbf{v}^{\left(1\right)}_{j\to f_n}$ as vector elements for all $j\in \B\left(f_n\right)\setminus i$ arranged in ascending order on $j$, respectively;
$\textbf{A}_{n,\left\{\B\left(f_n\right)\setminus i\right\}}$ denotes a row block matrix containing $\textbf{A}_{n,j}$ as row elements for all $j\in \B\left(f_n\right)\setminus i$ arranged in ascending order;
and $\textbf{J}^{ \left(1\right)}_{\left\{\B\left(f_n\right)\setminus i\right\}\to f_n}$ is a block diagonal matrix with $\textbf{J}^{ \left(1\right)}_{j\to f_n}$ as its block diagonal elements
for all $j\in \B\left(f_n\right)\setminus i$ arranged in ascending order.
Then, (\ref{integral}) can be reformulated as
\begin{eqnarray}\label{integral2}
m^{(1)}_{f_n \to i}(\textbf{x}_i) \nonumber
\!\!\!&\propto&\!\!\!\!\int\ldots \int
\exp\left\{-\frac{1}{2}
||\textbf{y}_n - \textbf{A}_{n,i}\textbf{x}_i - \textbf{A}_{n,\{\B(f_n)\setminus i\}}\textbf x_{\{\B(f_n)\setminus i\}} ^T||^2_{\textbf R^{-1}}
\right\}
\\ \nonumber
&&\quad\times
\exp \left\{-\frac{1}{2}||\textbf x_{\{\B(f_n)\setminus i\}}- \textbf{v}_{\{\B(f_n)\setminus i\}\to f_n}^{(1)})||^2_{\widetilde{\textbf{J}}^{(1)}_{\{\B(f_n)\setminus i\}\to f_n}}
\right\} \mathrm{d} \textbf x_{\{\B(f_n)\setminus i\}}
\\\nonumber
&\propto &
\label{integral-1}
\exp\left\{-\frac{1}{2}||\textbf{y}_n - \textbf{A}_{n,i}\textbf{x}_i||^2_{\textbf R^{-1}} \right\}  \\
&&              \label{intsqur}
\times\int\ldots \int \exp\left\{\!-\!\frac{1}{2}
    (\textbf x_{\{\B(f_n)\setminus i\}}^T\textbf{K}^{(1)}_{f_n\to i} \textbf x_{\{\B(f_n)\setminus i\}}\! \!-\!2[\textbf{h}^{(1)}_{f_n\to i}]^T\textbf x_{\{\B(f_n)\setminus i\}}) \!\! \right\}\!\! \mathrm{d}\textbf x_{\{\B(f_n)\setminus i\}},
\end{eqnarray}
where
\begin{equation}
\textbf{K}^{\left(1\right)}_{f_n\to i}=\textbf{A}_{n,\left\{\B\left(f_n\right)\setminus i\right\}}^T\textbf R^{-1}\textbf{A}_{n,\left\{\B\left(f_n\right)\setminus i\right\}}
+
\textbf{J}^{\left(1\right)}_{\left\{\B\left(f_n\right)\setminus i\right\}\to f_n}\nonumber
\end{equation}
and
\begin{equation}
\textbf{h}^{\left(1\right)}_{f_n\to i}= \textbf{A}_{n,\left\{\B\left(f_n\right)\setminus i\right\}}^T\textbf R^{-1}\left(\textbf{y}_n - \textbf{A}_{n,i}\textbf{x}_i\right)  +\textbf{J}^{\left(1\right)}_{\left\{\B\left(f_n\right)\setminus i\right\}\to f_n}
\textbf{v}_{\left\{\B\left(f_n\right)\setminus i\right\}\to f_n}^{\left(1\right)}.\nonumber
\end{equation}
By completing the square for the integrand of (\ref{intsqur}), we obtain
\begin{eqnarray}\label{integral2}
m^{\left(1\right)}_{f_n \to i}\left(\textbf{x}_i\right)
\!\!\!& \propto \!\!\!&
\alpha_{f_n \to i}^{\left(1\right)}\exp\left\{-\frac{1}{2}
||\textbf{y}_n - \textbf{A}_{n,i}\textbf{x}_i||^2_{\textbf R_n^{-1}}
 + \frac{1}{2}\left[\textbf{h}^{\left(1\right)}_{f_n\to i}\right]^T\left[\textbf{K}^{\left(1\right)}_{f_n\to i}\right]^{-1}\textbf{h}^{\left(\ell\right)}_{f_n\to i} \right\},
\end{eqnarray}
with
\begin{equation}
\alpha_{f_n \to i}^{\left(1\right)} =  \label{integral-2}
\int\ldots \int \exp\left\{-\frac{1}{2}
    ||\textbf x_{\left\{\B\left(f_n\right)\setminus i\right\}} -\left[\textbf{K}^{\left(1\right)}_{f_n\to i}\right]^{-1}
    \textbf{h}^{\left(1\right)}_{f_n\to i} ||^2_{\textbf{K}^{\left(1\right)}_{f_n\to i}}
  \right\} \mathrm{d}\textbf x_{\left\{\B\left(f_n\right)\setminus i\right\}}.\nonumber
\end{equation}

Next, by
applying the spectral theorem to $\textbf{K}^{\left(1\right)}_{f_n\to i}$ and after some
algebraic manipulations, we  simplify (\ref{integral2}) as
\begin{equation}\label{f2vmess2-1}
m^{\left(1\right)}_{f_n \to i}\left(\textbf{x}_i\right)
\propto \alpha_{f_n \to i}^{\left(1\right)}
\exp
\left\{-\frac{1}{2}
||\textbf{x}_i- \textbf{v}^{\left(1\right)}_{f_n\to i}||
_{\textbf{J}^{\left(1\right)}_{f_n\to i}}
\right\},\nonumber
\end{equation}
with the inverse of the covariance, the information matrix
\begin{equation}\label{Cov-new-1}
\begin{split}
\textbf{J}^{\left(1\right)}_{f_n\to i}
& =
\textbf{A}_{n,i}^T
\left[ \textbf{R}_n
+
\sum_{j\in\B\left(f_n\right)\setminus i} \textbf{A}_{n,j}
\left[\textbf{J}^{\left(1\right)}_{j\to f_n}\right]^{-1}\textbf{A}_{n,j}^T \right]^{-1}
\textbf{A}_{n,i},\nonumber
\end{split}
\end{equation}
and the mean vector
\begin{equation}\label{f2vmm-new-1}
\begin{split}
\textbf{v}^{\left(1\right)}_{f_n\to i}
=
\left[\textbf{J}_{f_n\to i}^{\left(1\right)}\right]^{-1}
\textbf{A}_{n,i}^H
\left[ \textbf{R}_n
+
\sum_{j\in\B\left(f_n\right)\setminus i} \textbf{A}_{n,j}
\left[\textbf{J}^{\left(1\right)}_{j\to f_n}\right]^{-1}\textbf{A}_{n,j}^T \right]^{-1}
\left(\textbf{y}_n-\sum_{j\in\B\left(f_n\right)\setminus i} \textbf{A}_{n,j}
\textbf{v}^{\left(1\right)}_{j\to f_n}\right),\nonumber
\end{split}
\end{equation}
and
\begin{equation}\label{eigintegral-2}
\alpha_{f_n \to i}^{\left(1\right)}
\propto
\int\ldots \int \exp\big\{-\frac{1}{2}
    \textbf z^T
    \boldsymbol{\Lambda}_{f_n \to i}^{\left(1\right)}
    \textbf z \big\}    \, \mathrm{d}\textbf z,
\end{equation}
where $\boldsymbol{\Lambda}_{f_n \to i}^{\left(1\right)}$
is a diagonal matrix containing the eigenvalues of
$\textbf{A}_{n,\left\{\B\left(f_n\right)\setminus i\right\}}^T \textbf{R}_n^{-1}\textbf{A}_{n,\left\{\B\left(f_n\right)\setminus i\right\}}
+\textbf{J}^{\left(1\right)}_{\left\{\B\left(f_n\right)\setminus i\right\}\to f_n}$.

By induction, and following similar derivations as in (\ref{A-1}) to (\ref{eigintegral-2}), we obtain the general updating expressions as in (\ref{BPvs2f1}) to (\ref{eigintegral-1}).

\section*{Appendix B.}
\label{B}
Before going into the proof of Lemma \ref{pdlemma}, we note the following properties of positive definite (p.d.) matrices.
If  $\textbf{X} \succ \textbf{0}$, $\textbf{Y} \succ \textbf{0}$, $\textbf{Z} \succeq \textbf{0}$ are of the same dimension, then
we have \citep{JianCFO}:

\noindent P B.1:
$\textbf{X}+\textbf{Y} \succ \textbf{0}$ and
$\textbf{X}+\textbf{Z} \succ \textbf{0}$.

\noindent P B.2:
$\textbf{A}^T\textbf{X}\textbf{A}\succ \textbf{0} $, $\textbf{A}^T\textbf{Z} \textbf{A}\succeq \textbf{0} $, $\textbf{A} \textbf{X} \textbf{A}^T \succeq \textbf{0} $
and $\textbf{A} \textbf{Z} \textbf{A}^T \succeq \textbf{0} $ for any full column rank matrix $\textbf{A}$ with compatible dimension.


Now, we prove Lemma \ref{pdlemma}.
If
$\textbf{J}_{f_k\to j}^{\left(\ell-1\right)}\succeq \textbf{0}$ for all
$f_k\in \B\left(j\right)\setminus f_n$, according to P B.1,
$\sum_{f_k\in\B\left(j\right)\setminus f_n}
\textbf{J}_{f_k\to j}^{\left(\ell-1\right)}\succeq \textbf{0}$.
As $\textbf{W}_j^{-1}\succ \textbf{0}$, we have
$ \textbf{W}_j^{-1} +
\sum_{f_k\in\B\left(j\right)\setminus f_n}
\textbf{J}_{f_k\to j}^{\left(\ell-1\right)}\succ \textbf{0}$, which, according to (\ref{v2fV}), is equivalent to $\textbf{J}^{\left(\ell\right)}_{j \to f_n}\succ \textbf{0}$.
Besides,
as $\textbf{A}_{n,j}$ is full column rank,
if $\left[\textbf{J}^{\left(\ell\right)}_{j\to f_n}\right]^{-1}\succ \textbf{0}$ for all $j\in \B\left(f_n\right)\setminus i$,
according to P B.2, $ \textbf{A}_{n,j}\left[\textbf{J}^{\left(\ell\right)}_{j\to f_n}\right]^{-1}\textbf{A}_{n,j}^T\succeq \textbf{0}$.
With $\textbf{R}_n\succ \textbf{0}$, following P B.1, we have $\left[ \textbf{R}_n
+
\sum_{j\in\B\left(f_n\right)\setminus i} \textbf{A}_{n,j}
\left[\textbf{J}^{\left(\ell\right)}_{j\to f_n}\right]^{-1}\textbf{A}_{n,j}^T \right]^{-1}\succ \textbf{0}$.
As $\textbf{A}_{n,i}$ is of full column rank, by  applying P B.2 again, we have $ \textbf{A}_{n,i}^T
\left[ \textbf{R}_n
+
\sum_{j\in\B\left(f_n\right)\setminus i} \textbf{A}_{n,j}\left[\textbf{J}^{\left(\ell\right)}_{j\to f_n}\right]^{-1}\textbf{A}_{n,j}^T \right]^{-1}
\textbf{A}_{n,i}\succ\textbf{0}$,
which according to  (\ref{Cov})
is equivalent to $\textbf{J}^{\left(\ell\right)}_{f_n\to i} \succ\textbf{0}$.

In summary, we have proved that 1) if $\textbf{J}_{f_k\to j}^{\left(\ell-1\right)}\succeq \textbf{0}$
for all $f_k \in \B\left(j\right)\setminus f_n$, then
$\textbf{J}^{\left(\ell\right)}_{j \to f_n}\succ \textbf{0}$;
2) if
$\left[\textbf{J}^{\left(\ell\right)}_{j\to f_n}\right]^{-1} \succ \textbf{0}$ for all $j\in \B\left(f_n\right) \setminus i$, then $\textbf{J}^{\left(\ell\right)}_{f_n\to i} \succ \textbf{0}$.
Therefore, by setting $\textbf{J}_{f_k\to j}^{\left(0\right)}\succeq \textbf{0}$  for all $k \in \mathcal{V}$ and $j \in \mathcal{B}\left(f_k\right)$,
according to the results of the first case,
we have  $\textbf{J}_{j\to f_n}^{\left(1\right)}\succ \textbf{0}$
for all $j \in \mathcal{V}$ and $f_n \in \mathcal{B}\left(j\right)$.
Then, applying the second case, we further have
$\textbf{J}_{f_n\to i}^{\left(1\right)}\succ \textbf{0}$ for all
$n\in \mathcal V$ and $i\in \mathcal B\left(f_n\right)$.
By repeatedly using the above arguments, it  follows readily that
$\textbf{J}_{f_k \to j}^{\left(\ell\right)} \succ \textbf{0}$ and  $\textbf{J}_{j\to f_n}^{\left(\ell\right)}  \succ \textbf{0}$
for $\ell\geq 1$ and with $j\in \mathcal{V}$, $f_n, f_k\in \mathcal B\left(j\right)$.
Furthermore, according to the discussion before Lemma \ref{pdlemma},  all  messages $m^{\left(\ell\right)}_{j \to f_n}\left(\textbf x_j\right)$ and
$m^{\left(\ell\right)}_{f_n \to i}\left(\textbf{x}_i\right)$ exist, and are in Gaussian form as in (\ref{BPvs2f1}) and (\ref{f2v}).

\section*{Appendix C.}
\label{C}
First, Proposition \ref{P_FUN}, P \ref{P_FUN}.1 is proved.
Suppose that ${\textbf{J}}^{\left(\ell\right)}\succeq
 {\textbf{J}}^{\left(\ell-1\right)}\succeq \textbf{0}$,
i.e.,
$\textbf{J}_{f_k\to j}^{\left(\ell\right)} \succeq
\textbf{J}_{f_k\to j}^{\left(\ell-1\right)}
\succeq \textbf{0}$ for all ${\left(f_k, j\right)\in \mathcal{\widetilde{B}}\left(f_n, i\right)}$,
we have
\begin{equation}
 \textbf{W}_{j}^{-1} +
\sum_{f_k\in\B\left(j\right)\setminus f_n}
\textbf{J}_{f_k\to j}^{\left(\ell\right)}
\succeq
 \textbf{W}_{j}^{-1} +
\sum_{f_k\in\B\left(j\right)\setminus f_n}
\textbf{J}_{f_k\to j}^{\left(\ell-1\right)}\succ \textbf{0}.\nonumber
\end{equation}
Then, according to the fact that if $\textbf{X}\succeq \textbf{Y}\succ \textbf{0}$,
$\textbf{Y}^{-1}\succeq \textbf{X}^{-1}\succ \textbf{0}$, we have
\begin{equation}
\left[
 \textbf{W}_{j}^{-1} +
\sum_{f_k\in\B\left(j\right)\setminus f_n}
\textbf{J}_{f_k\to j}^{\left(\ell-1\right)}
\right]^{-1}
\succeq
\left[
 \textbf{W}_{j}^{-1} +
\sum_{f_k\in\B\left(j\right)\setminus f_n}
\textbf{J}_{f_k\to j}^{\left(\ell\right)}
\right]^{-1}\succ \textbf{0}.\nonumber
\end{equation}
Since $\textbf{A}_{n,j}$ is of full column rank and following P B.2 in Appendix B, we have
$$\textbf{A}_{n,j}
\left[
 \textbf{W}_{j}^{-1} +\!\!\!\!\!
 \sum_{f_k\in\B\left(j\right)\setminus f_n}
\!\textbf{J}_{f_k\to j}^{\left(\ell-1\right)}
\right]^{-1}\!\!\!\!\textbf{A}_{n,j}^T
\succeq
\textbf{A}_{n,j}
\left[
 \textbf{W}_{j}^{-1} +\!\!\!\!\!
 \sum_{f_k\in\B\left(j\right)\setminus f_n}
\!\textbf{J}_{f_k\to j}^{\left(\ell\right)}
\right]^{-1}\!\!\!\!\textbf{A}_{n,j}^T \succ \textbf{0}.$$
Following the same procedure of the proof above and due to  $\textbf R\succ \textbf{0}$, we can further prove that
\begin{equation}
\begin{split}
&\textbf{A}_{n,i}^T
\left[ \textbf{R}_n
+ \sum_{j\in\B\left(f_n\right)\setminus i} \textbf{A}_{n,j}
\left[
 \textbf{W}_{j}^{-1} +
\sum_{f_k\in\B\left(j\right)\setminus f_n}
\textbf{J}_{f_k\to j}^{\left(\ell\right)}
\right]^{-1}
\textbf{A}_{n,j}^T \right]^{-1}
\textbf{A}_{n,i}\\
\succeq &
\textbf{A}_{n,i}^T
\left[ \textbf{R}_n
+ \sum_{j\in\B\left(f_n\right)\setminus i} \textbf{A}_{n,j}
\left[
 \textbf{W}_{j}^{-1} +
\sum_{f_k\in\B\left(j\right)\setminus f_n}
\textbf{J}_{f_k\to j}^{\left(\ell-1\right)}
\right]^{-1}
\textbf{A}_{n,j}^T \right]^{-1}
\textbf{A}_{n,i},\nonumber
\end{split}
\end{equation}
which is equivalent to  $$\mathcal{F}_{n\to i}\left(\left\{
\textbf{J}_{f_k\to j}^{\left(\ell\right)}\right\}_{\left(f_k, j\right)\in \mathcal{\widetilde{B}}\left(f_n, i\right)}  \right) \succeq \mathcal{F}_{n\to i}\left(\left\{
\textbf{J}_{f_k\to j}^{\left(\ell-1\right)}\right\}_{\left(f_k, j\right)\in \mathcal{\widetilde{B}}\left(f_n, i\right)}  \right).$$
Since $\mathcal{F}$ contains $\mathcal{F}_{n\to i}\left(\cdot \right)$ as its component, Proposition \ref{P_FUN}, P \ref{P_FUN}.1 is proved.

Next, Proposition \ref{P_FUN}, P \ref{P_FUN}.2 is proved.
Suppose that
$\textbf{J}_{f_k\to j}^{\left(\ell\right)} \succ \textbf{0}$ for all ${\left(f_k, j\right)\in \mathcal{\widetilde{B}}\left(f_n, i\right)}$.
As $\alpha>1$, we have
\begin{equation} \label{65}
\alpha\textbf{W}_{j}^{-1} +
\sum_{f_k\in\B\left(j\right)\setminus f_n}
\alpha \textbf{J}_{f_k\to j}^{\left(\ell\right)}
\succeq
\textbf{W}_{j}^{-1} +
\sum_{f_k\in\B\left(j\right)\setminus f_n}
\alpha \textbf{J}_{f_k\to j}^{\left(\ell\right)}\succ \textbf{0},\nonumber
\end{equation}
where the equality holds when $\textbf{W}_{j}^{-1}= \textbf{0} $, which corresponds to non-informative prior for $\textbf x_j$.
Applying the fact that
if $\textbf{X}\succeq \textbf{Y}\succ \textbf{0}$,
$\textbf{Y}^{-1}\succeq \textbf{X}^{-1}\succ \textbf{0}$, and, according to  P B.2 in Appendix B, we obtain
\begin{equation}
\textbf{A}_{n,j}
\left[
 \textbf{W}_{j}^{-1} + \!\!\!\!\!\!
\sum_{f_k\in\B\left(j\right)\setminus f_n}\!\!\!\!\alpha
\textbf{J}_{f_k\to j}^{\left(\ell\right)}
\right]^{-1}\!\!\!
\textbf{A}_{n,j}^T
\succeq
\textbf{A}_{n,j}
\left[
\alpha \textbf{W}_{j}^{-1} +  \!\!\!\!\!\!
\sum_{f_k\in\B\left(j\right)\setminus f_n}\!\!\!\! \alpha
\textbf{J}_{f_k\to j}^{\left(\ell\right)}
\right]^{-1}\!\!\!
\textbf{A}_{n,j}^T\succeq \textbf{0}.\nonumber
\end{equation}
Since $\textbf{R}_n\succ\frac{1}{\alpha} \textbf{R}_n \succ \textbf{0}$,
we have
\begin{eqnarray}
&&\left[\frac{1}{\alpha} \textbf{R}_n
+\!\!\! \sum_{j\in\B\left(f_n\right)\setminus i} \textbf{A}_{n,j}
\left[
\alpha \textbf{W}_{j}^{-1} +
\sum_{f_k\in\B\left(j\right)\setminus f_n}\!\!\!\! \alpha
\textbf{J}_{f_k\to j}^{\left(\ell\right)}
\right]^{-1}
\textbf{A}_{n,j}^T \right]^{-1} \nonumber\\
&& \quad \succ
\left[\textbf{R}_n
+\!\!\! \sum_{j\in\B\left(f_n\right)\setminus i} \textbf{A}_{n,j}
\left[
\textbf{W}_{j}^{-1} +
\sum_{f_k\in\B\left(j\right)\setminus f_n} \alpha
\textbf{J}_{f_k\to j}^{\left(\ell\right)}
\right]^{-1}
\textbf{A}_{n,j}^T \right]^{-1}.\nonumber
\end{eqnarray}
Finally, applying P B.2  in Appendix B to the above equation and taking out the common factor $\alpha$, we obtain
\begin{eqnarray}
&&\alpha \textbf{A}_{n,i}^T
\left[ \textbf{R}_n
+ \sum_{j\in\B\left(f_n\right)\setminus i} \textbf{A}_{n,j}
\left[
\textbf{W}_{j}^{-1} +
\sum_{f_k\in\B\left(j\right)\setminus f_n}
\textbf{J}_{f_k\to j}^{\left(\ell\right)}
\right]^{-1}
\textbf{A}_{n,j}^T \right]^{-1}
\textbf{A}_{n,i}\nonumber\\
&&\succ
\textbf{A}_{n,i}^T
\left[ \textbf{R}_n
+ \sum_{j\in\B\left(f_n\right)\setminus i} \textbf{A}_{n,j}
\left[
\textbf{W}_{j}^{-1} +
\sum_{f_k\in\B\left(j\right)\setminus f_n} \alpha
\textbf{J}_{f_k\to j}^{\left(\ell\right)}
\right]^{-1}
\textbf{A}_{n,j}^T \right]^{-1}
\textbf{A}_{n,i}.\nonumber
\end{eqnarray}
Therefore,  $\alpha\mathcal{F}_{n\to i}\bigg(\left\{
\textbf{J}_{f_k\to j}^{\left(\ell\right)}\right\}_{\left(f_k, j\right)\in \mathcal{\widetilde{B}}\left(f_n, i\right)}\bigg) \succ  \mathcal{F}_{n\to i}\left(\left\{
\alpha\textbf{J}_{f_k\to j}^{\left(\ell\right)}\right\}_{\left(f_k, j\right)\in \mathcal{\widetilde{B}}\left(f_n, i\right)}  \right)$
if $\textbf{J}_{f_k\to j}^{\left(\ell\right)}\succ \textbf{0}$
for all ${\left(f_k, j\right)\in \mathcal{\widetilde{B}}\left(f_n, i\right)}$
and $\alpha>1$.
As $\mathcal{F}$ contains $\mathcal{F}_{n\to i}\left(\cdot \right)$ as its component, Proposition \ref{P_FUN}, P \ref{P_FUN}.2 is proved.
In the same way, we can prove
$\mathcal{F}\left(\alpha^{-1}\textbf{J}^{\left(\ell\right)}\right) \succ  \alpha^{-1}\mathcal{F}\left(\textbf{J}^{\left(\ell\right)}\right)$
if $\textbf{J}^{\left(\ell\right)} \succ \textbf{0}$ and $\alpha>1$.

At last, Proposition \ref{P_FUN}, P \ref{P_FUN}.3 is proved.  From Lemma \ref{pdlemma}, if we have initial message information matrix $\textbf{J}^{\left(0\right)}_{f_k \to j} \succeq\textbf{0} $ for all $j \in \mathcal V$ and $f_k \in \mathcal B\left(j\right)$, then we have $\textbf{J}^{\left(\ell\right)}_{f_k \to j} \succ \textbf{0} $ for all $j \in \mathcal V$ and $f_k \in \mathcal B\left(j\right)$.  In such case, obviously, $\textbf{J}^{\left(\ell\right)}\succeq \textbf{0}$.
Applying $\mathcal F$ to both sides of this equation, and using Proposition \ref{P_FUN}, P \ref{P_FUN}.1,
we have $\mathcal{F}\left(\textbf{J}^{\left(\ell\right)}\right) \succeq \mathcal{F}\left(\textbf{0}\right)$.
On the other hand, using (\ref{CovFunc5}), it can be easily checked that $\mathcal{F}\left(\textbf{0}\right)=\textbf{A}^T  \left[  \boldsymbol{\Omega}+ \textbf{H}\boldsymbol{\Psi}^{-1} \textbf{H}^T \right] ^{-1}\textbf{A}\succ \textbf{0}$,
{where the inequality is from Lemma \ref{pdlemma}.}
For proving  the upper bound, we start from the fact that $$\sum_{j\in\B\left(f_n\right)\setminus i} \textbf{A}_{n,j}
\left[
 \textbf{W}_{j}^{-1} +
\sum_{f_k\in\B\left(j\right)\setminus f_n}
\textbf{J}_{f_k\to j}^{\left(\ell-1\right)}
\right]^{-1}
\textbf{A}_{n,j}^T $$ in (\ref{CovFunc}), and equivalently the corresponding term
$$\textbf{H}_{n,i}\left[\textbf{W}_{n,i} + \textbf{K}_{n,i} \left(\textbf{I}_{|\mathcal{B}\left(f_n\right)|-1} \otimes \textbf{J}^{\left(\ell-1\right)}\right) \textbf{K}_{n,i}^T \right]^{-1} \textbf{H}_{n,i}^T$$ in (\ref{CovFunc3}), are  p.s.d. matrices.
In (\ref{CovFunc5}), since $$\textbf{H}\left[\boldsymbol{\Psi} + \textbf{K} \left(\mathbf{I} _{\sum _{n=1} ^M |B\left(f_n\right)|\left(|B\left(f_n\right)|-1\right)} \otimes \textbf{J}^{\left(\ell-1\right)}\right) \textbf{K}^T \right]^{-1} \textbf{H}^T$$ contains
$\textbf{H}_{n,i}\left[\textbf{W}_{n,i} + \textbf{K}_{n,i} \left(\textbf{I}_{|\mathcal{B}\left(f_n\right)|-1} \otimes \textbf{J}^{\left(\ell-1\right)}\right) \textbf{K}_{n,i}^T \right]^{-1} \textbf{H}_{n,i}^T$ as its block diagonal elements, it is also a p.s.d. matrix.
With $ \boldsymbol{\Omega}\succ\textbf{0}$, adding to the above result gives $$ \boldsymbol{\Omega}+\textbf{H}\left[\boldsymbol{\Psi} + \textbf{K} \left(\mathbf{I}_\varphi\otimes \textbf{J}^{\left(\ell\right)}\right) \textbf{K}^T \right]^{-1} \textbf{H}^T \succeq  \boldsymbol{\Omega} \succ\textbf{0}.$$
Inverting both sides, we obtain $  \boldsymbol{\Omega}^{-1} \succeq \left[  \boldsymbol{\Omega}+ \textbf{H}\left[\boldsymbol{\Psi} + \textbf{K} \left(\mathbf{I}_\varphi\otimes \textbf{J}^{\left(\ell\right)}\right) \textbf{K}^T \right]^{-1} \textbf{H}^T \right] ^{-1}.$
Finally, applying P B.2 again gives $$ \textbf{A}^T \boldsymbol{\Omega}^{-1}\textbf{A} \succeq \textbf{A}^T\left[ \boldsymbol{\Omega}+ \textbf{H}\left[\boldsymbol{\Psi} + \textbf{K} \left(\mathbf{I}_\varphi\otimes \textbf{J}^{\left(\ell\right)}\right) \textbf{K}^T \right]^{-1} \textbf{H}^T \right] ^{-1}\textbf{A}^T = \mathcal{F}\left(\textbf{J}^{\left(\ell\right)}\right).$$
Therefore, we have $\textbf{A}^T  \boldsymbol{\Omega}^{-1}\textbf{A}\succeq \mathcal{F}\left(\textbf{J}^{\left(\ell\right)}\right)
\succeq \textbf{A}^T \left[ \boldsymbol{\Omega}+ \textbf{H}\boldsymbol{\Psi}^{-1} \textbf{H}^T \right] ^{-1}\textbf{A}\succ \textbf{0}$.

\section*{Appendix D.}
\label{D}
Let $ \mathrm{d} \left(\textbf {X}_1,  \textbf{Y}_1\right)=\exp\{a_1\}$ and
$\mathrm{d} \left(\textbf {X}_2,  \textbf{Y}_2\right)=\exp\{a_2\}  $,
and
$\mathrm{d} \left(\textbf {X}_1+\textbf {X}_2,  \textbf {Y}_1+\textbf{Y}_2\right)=\exp\{a_3\} $.
First, P \ref{P_Part}.1 is proved.
According to the definition of part metric in Definition \ref{mydef},
for arbitrary symmetric p.d matrix $\textbf X_1$, $\textbf X_2$, $\textbf Y_1$, and $\textbf Y_2$,  we have
$\mathrm{d} \left(\textbf {X}_1,  \textbf{Y}_1\right)$,
$\mathrm{d} \left( \textbf{X}_2, \textbf{Y}_2\right)$, and
$\mathrm{d} \left( \textbf{X}_1+ \textbf{X}_2, \textbf{Y}_1+ \textbf{Y}_2\right)
$ correspond to
\begin{equation}\label{D1}
a_1\textbf X_1
\succeq
\textbf Y_1
\succeq
\frac{1}{a_1}
\textbf X_1,	
\quad
a_2\textbf X_2
\succeq
\textbf Y_2
\succeq
\frac{1}{a_2}
\textbf X_2,	
\end{equation}
\begin{equation}
\label{D2}
a_3(\textbf X_1 + \textbf X_2)
\succeq
\textbf Y_1 + \textbf Y_2
\succeq
\frac{1}{a_3}
\left(\textbf X_1+\textbf X_2\right).	
\end{equation}
Since $ \mathrm{d} \left(\textbf {X}_1,  \textbf{Y}_1\right)>0$ and $ \mathrm{d} \left(\textbf {X}_2,  \textbf{Y}_2\right)>0$, we have $a_1, a_2\geq 1$.
And therefore
$a_1+a_2> a_1$ and $a_1+a_2> a_2$.
 Then, according to (\ref{D1}), we have
\begin{equation}
\label{D3}
(a_1+a_2)\left(\textbf X_1+\textbf X_2\right)\succeq \textbf Y_1 +  \textbf Y_2
\succeq \frac{1}{a_1+a_2}(\textbf X_1+\textbf X_2).
\end{equation}
Following the definition of part matric,  $a_3$ is the smallest value satisfy the inequality in (\ref{D2}).
Thus, by comparing (\ref{D3}) with (\ref{D2}), we obtain
$a_1+a_2\geq a_3$
Hence,
$\mathrm{d} \left( \textbf{X}_1+ \textbf{X}_2, \textbf{Y}_1+ \textbf{Y}_2\right)\leq
\mathrm{d} \left(\textbf {X}_1,  \textbf{Y}_1\right)
+
\mathrm{d} \left( \textbf{X}_2, \textbf{Y}_2\right)$.

Next, P \ref{P_Part}.2 is proved. Following the part metric definition of $\mathrm{d} \left(\textbf {X}_1,  \textbf{Y}_1\right)$,
$a_1\textbf X_1
\succeq
\textbf Y_1
\succeq
\frac{1}{a_1}
\textbf X_1$, which is equivalent to
$
\textbf Y_1^{-1}\succeq \frac{1}{a_1}\textbf X_1^{-1}
$ and
$
{a_1}\textbf X_1^{-1}
\succeq
\textbf Y_1^{-1}
$.
Thus, $\mathrm{d} \left(\textbf {X}, \textbf {Y}\right)=
\mathrm{d} \left( \textbf{X}^{-1}, \textbf{Y}^{-1}\right).
$

\vskip 0.2in

\end{document}